\documentclass[journal,onecolumn,12pt]{IEEEtran}
\date{}

\usepackage{amsfonts}
\usepackage{amsmath}
\usepackage{amssymb}
\usepackage{amsthm}
\usepackage{array}
\usepackage{caption}
\usepackage{cite}
\usepackage{citesort}
\usepackage{color}
\usepackage{epsfig}
\usepackage{enumitem} 
\usepackage{flexisym}
\usepackage{graphicx}
\usepackage{hyphenat}
\usepackage{ifpdf}
\usepackage{latexsym}
\usepackage{mathtools}
\usepackage{multirow}
\usepackage{psfrag}
\usepackage{setspace}
\usepackage{subfigure}
\usepackage{tikz}
\usepackage{times}
\usepackage{tkz-berge}
\usepackage{xparse}

\newtheorem{theorem}{Theorem}
\newtheorem{definition}{Definition}

\newtheorem{lemma}{Lemma}

\newtheorem{remark}{Remark}
\newtheorem{example}{Example}

\doublespacing

\title{Fundamental Conditions for Low-CP-Rank Tensor Completion}

\author{Morteza Ashraphijuo and Xiaodong Wang \thanks{The authors are with the Department of Electrical Engineering, Columbia University, NY, email: \{ashraphijuo,wangx\}@ee.columbia.edu.}}
 
\begin{document}
\maketitle

\begin{abstract}

We consider the problem of low canonical polyadic (CP) rank tensor completion. A completion is a tensor whose entries agree with the observed entries and its rank matches the given CP rank. We analyze the manifold structure corresponding to the tensors with the given rank and define a set of polynomials based on the sampling pattern and CP decomposition. Then, we show that finite completability of the sampled tensor is equivalent to having a certain number of algebraically independent polynomials among the defined polynomials. Our proposed approach results in characterizing the maximum number of algebraically independent polynomials in terms of a simple geometric structure of the sampling pattern, and therefore we obtain the deterministic necessary and sufficient condition on the sampling pattern for finite completability of the sampled tensor. Moreover, assuming that the entries of the tensor are sampled independently with probability $p$ and using the mentioned deterministic analysis, we propose a combinatorial method to derive a lower bound on the sampling probability $p$, or equivalently, the number of sampled entries that guarantees finite completability with high probability. We also show that the existing result for the matrix completion problem can be used to obtain a loose lower bound on the sampling probability $p$. In addition, we obtain deterministic and probabilistic conditions for unique completability. It is seen that the number of samples required for finite or unique completability obtained by the proposed analysis on the CP manifold is orders-of-magnitude lower than that is obtained by the existing analysis on the Grassmannian manifold.

\

\begin{IEEEkeywords}
Low-rank tensor completion, canonical polyadic (CP) decomposition, finite completability, unique completability, algebraic geometry, Bernstein's theorem.
\end{IEEEkeywords}

\end{abstract}

\newpage

\section{Introduction}

The fast progress in data science and technology has given rise to the extensive applications of multi-way datasets, which allow us to take advantage of the inherent correlations across different attributes. The classical matrix analysis limits its efficiency in exploiting the correlations across different features from a multi-way perspective. In contrast, analysis of multi-way data (tensor), which was originally proposed in the field of psychometrics and recently found applications in machine learning and signal processing, is capable of taking full advantage of these correlations \cite{de20,kolda2,abraham2,gras,muti,signo}. The problem of low-rank tensor completion, i.e., reconstructing a tensor from a subset of its entries given the rank, which is generally NP hard \cite{hilt}, arises in compressed sensing \cite{lim,sid,gandy}, visual data reconstruction  \cite{visual,liulow2}, seismic data processing \cite{kreimer,ely20135d,wang2016tensor}, etc.. Existing approaches to low-rank data completion mainly focus on convex relaxation of matrix rank \cite{candes,candes2,cai,ashraphijuo2016c,phase} or different convex relaxations of tensor ranks \cite{gandy,tomioka,nuctensor,romera}.

Tensors consisting of real-world datasets usually have a low rank structure. The manifold of low-rank tensors has recently been investigated in several works \cite{gras,ashraphijuo,ashraphijuo3}. In this paper, we focus on the canonical polyadic (CP) decomposition \cite{harstions,ten,Eck,kruskal} and the corresponding CP rank, but in general there are other well-known tensor  decompositions including Tucker decomposition \cite{Tuck,SVD,Tuckermanifold}, tensor-train (TT) decomposition \cite{oseledets,holtz}, tubal rank decomposition \cite{kilmer2013third} and several other methods \cite{de2,papa}. Note that most existing literature on tensor completion based on various optimization formulations use CP rank \cite{gandy,low2}.

In this paper, we study the fundamental conditions on the sampling pattern to ensure finite or unique number of completions, where these fundamental conditions are independent of the correlations of the entries of the tensor, in contrast to the common assumption adopted in literature such as incoherence. Given the rank of a matrix, Pimentel-Alarc{\'o}n {\it et. al.} in \cite{charact} obtains such fundamental conditions on the sampling pattern for finite completability of the matrix. Previously, we treated the same problem for multi-view matrix \cite{ashraphijuo2}, tensor given its Tucker rank \cite{ashraphijuo}, and tensor given its TT rank\cite{ashraphijuo3}. In this paper, the structure of the CP decomposition and the geometry of the corresponding manifold are investigated to obtain the fundamental conditions for finite completability given its CP rank.

To emphasize the contribution of this work, we highlight the differences and challenges in comparison with the Tucker and TT tensor models. In CP decomposition, the notion of  tensor multiplication is different from those for  Tucker and TT, and therefore the geometry of the manifold and the algebraic variety are completely different. Moreover, in CP decomposition we are dealing with the sum of several tensor products, which is not the case in Tucker and TT decompositions, and therefore the equivalence classes or geometric patterns that are needed to study the algebraic variety are different. Moreover, CP rank is a scalar and the ranks of matricizations and unfoldings are not given in contrast with the Tucker and TT models.

Let $\mathcal{U}$ denote the sampled tensor and $\Omega$ denote the binary sampling pattern tensor that is of the same dimension and size as $\mathcal{U}$. The entries of $\Omega$ that correspond to the observed entries of $\mathcal{U}$ are equal to $1$ and the rest of the entries are set as $0$. Assume that the entries of $\mathcal{U}$ are sampled independently with probability $p$. This paper is mainly concerned with treating the following three problems.

{\bf Problem (i):} Given the CP rank, characterize the necessary and sufficient conditions on the sampling pattern $\Omega$, under which there exist only finitely many completions of $\mathcal{U}$.

We consider the CP decomposition of the sampled tensor, where all rank-$1$ tensors in this decomposition are unknown and we only have some entries of $\mathcal{U}$. Then, each sampled entry results in a polynomial such that the variables of the polynomial are the entries of the rank-$1$ tensors in the CP decomposition. We propose a novel analysis on the CP manifold to obtain the maximum number of algebraically independent polynomials, among all  polynomials corresponding to the sampled entries, in terms of the geometric structure of the sampling pattern $\Omega$. We show that if the maximum number of algebraically independent polynomials is a given number, then the sampled tensor $\mathcal{U}$ is finitely completable. Due to the fundamental differences between the CP decomposition and the Tucker or TT decomposition, this analysis is completely different from our previous works \cite{ashraphijuo,ashraphijuo3}. Moreover, note that our proposed algebraic geometry analysis on the CP manifold is not a simple generalization of the existing analysis on the Grassmannian manifold \cite{charact} even though the CP decomposition is a generalization of rank factorization of a matrix, as almost every step needs to be developed anew.

{\bf Problem (ii):} Characterize conditions on the sampling pattern to ensure that there is exactly one completion for the given CP rank.

Similar to Problem (i), our approach is to study the algebraic independence of the polynomials corresponding to the samples. We exploit the properties of a set of minimally algebraically dependent polynomials to add additional constraints on the sampling pattern such that each of the rank-$1$ tensors in the CP decomposition can be determined uniquely.

{\bf Problem (iii):} Provide a lower bound on the total number of sampled entries or the sampling probability $p$ such that the proposed conditions on the sampling pattern $\Omega$ for finite and unique completability are satisfied with high probability.

We develop several combinatorial tools together with our previous graph theory results in \cite{ashraphijuo} to obtain lower bounds on the total number of sampled entries, i.e., lower bounds on the sampling probability $p$, such that the deterministic conditions for Problems (i) and (ii) are met with high probability. Particularly, it is shown in \cite{low2}, $\mathcal{O}(n r^{\frac{d-1}{2}}d^2 \log(r))$ samples are required to recover the tensor $\mathcal{U} \in \mathbb{R}^{\overbrace {n \times \dots \times n}^{d}}$ of rank $r$. Recall that in this paper, we obtain the number samples to ensure finite/unique completability that is independent of the completion algorithm. As we show later, using the existing analysis on the Grassmannian manifold results in $\mathcal{O}(n^{\frac{d+1}{2}} \max \left\{ d \log(n) + \log(r), r \right\})$ samples to ensure finite/unique completability. However, our proposed analysis on the CP manifold results in $\mathcal{O}(n^2 \max \left\{  \log(nrd) , r \right\})$ samples to guarantee the finiteness of the number of completions, which is significantly lower than that given in \cite{low2}. Hence, the fundamental conditions for tensor completion motivate new optimization formulation to close the gap in the number of required samples.

The remainder of this paper is organized as follows. In Section \ref{notationsx}, the preliminaries and problem statement are presented. In Section \ref{sec2}, we develop necessary and sufficient deterministic conditions for finite completability. In Section \ref{sec3}, we develop probabilistic conditions for finite completability. In Section \ref{secuni}, we consider unique completability and obtain both deterministic and probabilistic conditions. Some numerical results are provided in Section \ref{simsec}. Finally, Section \ref{concsec} concludes the paper.

\section{Background}\label{notationsx}
\subsection{Preliminaries and Notations}\label{notations}

In this paper, it is assumed that a $d$-way tensor $\mathcal{U} \in \mathbb{R}^{n_1  \times \cdots \times n_d}$ is sampled. Throughout this paper, we use CP rank as the rank of a tensor, which is defined as the minimum number $r$ such that there exist $\mathbf{a}_{i}^{l} \in \mathbb{R}^{n_i}$ for $1 \leq i \leq d$ and $1 \leq l \leq r$ and
\begin{eqnarray}\label{CPdecom}
\mathcal{U} = \sum_{l=1}^{r}  \mathbf{a}_{1}^{l} \otimes \mathbf{a}_{2}^{l} \otimes \dots \otimes \mathbf{a}_{d}^{l},
\end{eqnarray}
or equivalently,
\begin{eqnarray}\label{CPdecompolyeq}
\mathcal{U}(x_1,x_2,\dots,x_d) = \sum_{l=1}^{r} \mathbf{a}_{1}^{l}(x_1)  \mathbf{a}_{2}^{l}(x_2) \dots  \mathbf{a}_{d}^{l}(x_d),
\end{eqnarray}
where $\otimes$ denotes the tensor product (outer product) and $\mathcal{U}(x_1,x_2,\dots,x_d)$ denotes the entry of the sampled tensor with coordinates $ \vec{x} = (x_1,x_2,\dots,x_d)$ and $\mathbf{a}_{i}^{l}(x_i) $ denotes the $x_i$-th entry of vector $\mathbf{a}_{i}^{l}$. Note that $\mathbf{a}_{1}^{l} \otimes \mathbf{a}_{2}^{l} \otimes \dots \otimes \mathbf{a}_{d}^{l} \in \mathbb{R}^{n_1  \times \cdots \times n_d}$ is a rank-$1$ tensor, $l=1,2,\dots,r$.



Denote $\Omega$ as the binary sampling pattern tensor that is of the same size as $\mathcal{U}$ and $\Omega(\vec{x})=1$ if $\mathcal{U}(\vec{x})$ is observed and  $\Omega(\vec{x})=0$ otherwise. Also define $x^+\triangleq\max\{0,x\}$.

For a nonempty  $I \subset \{1,\dots,d\} $, define $N_{I} \triangleq  \Pi_{i \in I} \  n_i $ and also denote $\bar I \triangleq \{1,\dots,d\} \backslash I$. Let $\mathbf{\widetilde U}_{(I)} \in \mathbb{R}^{N_I\times  N_{\bar {I}}}$ be the  unfolding of the tensor $\mathcal{U}$ corresponding to the set $I$ such that $\mathcal{U}(\vec{x}) = $ \\ $\mathbf{\widetilde U}_{(I)}({\widetilde  M}_{I} (x_{i_1},\dots,x_{i_{|I|}}),{\widetilde{{M}} }_{\bar I} (x_{i_{|I|+1}},\ldots,x_{i_d}))$, where $I=\{{i_1},\dots,{i_{|I|}}\}$, $\bar I=\{{i_{|I|+1}},\dots,{i_d}\}$, ${\widetilde M}_{I}: (x_{i_1},\dots,x_{i_{|I|}}) $ $ \rightarrow  \{1,2,\dots, N_I\}$ and ${\widetilde{{M}}}_{\bar I}: (x_{i_{|I|+1}},\ldots,x_{i_d}) \rightarrow  \{1,2,\dots, \bar N_{\bar I} \}$ are two bijective mappings. For $i \in \{1,\dots,d\}$ and $I=\{i\}$, we denote the unfolding corresponding to $I$ by $\mathbf{U}_{(i)}$ and we call it the $i$-th matricization of tensor $\mathcal{U}$.



\subsection{Problem Statement and A Motivating Example}

We are interested in finding necessary and sufficient deterministic conditions on the sampling pattern tensor $\Omega$ under which there are infinite, finite, or unique completions of the sampled tensor $\mathcal{U}$ that satisfy the given CP rank constraint. Furthermore, we are interested in finding probabilistic conditions on the number of samples or the sampling probability that ensure the obtained deterministic conditions for finite and unique completability hold with high probability.

To motivate our proposed analysis in this paper on the CP manifold, we compare  the following two approaches using a simple example to emphasize the exigency of our proposed analysis: (i) analyzing each of the unfoldings individually, (ii) analyzing based on the CP decomposition.

Consider a three-way tensor $\mathcal{U} \in \mathbb{R}^{2 \times 2 \times 2}$ with CP rank of $1$. Assume that the entries $(1,1,1),$ $(2,1,1),$ $(1,2,1)$ and $(1,1,2)$ are observed. As a result of Lemma \ref{rankunfsmCP} in this paper, all unfoldings of this tensor are rank-$1$ matrices. It is shown in Section II of \cite{ashraphijuo} that having any $4$ entries of a rank-$1$  $2 \times 4$ matrix, there are infinitely many completions for it. As a result, any unfolding of $\mathcal{U}$ is infinitely many completable given only the corresponding rank constraint. Next, using the CP decomposition \eqref{CPdecom}, we show that there are only finitely many completions of the sampled tensor of CP rank $1$.

Define $\mathbf{a}_{1}^{1}=[x \  x^{\prime}]^{\top} \in \mathbb{R}^{2}$, $\mathbf{a}_{2}^{1}=[y \  y^{\prime}]^{\top} \in \mathbb{R}^{2 }$ and $\mathbf{a}_{3}^{1}=[z \  z^{\prime}]^{\top} \in \mathbb{R}^{2 }$. Then, according to  \eqref{CPdecom}, we have the following
\begin{align}
\mathcal{U}(1,1,1) &= xyz, & \mathcal{U}(2,2,1) &= x^{\prime}y^{\prime}z, \\ \nonumber
\mathcal{U}(2,1,1) &= x^{\prime}yz,  & \mathcal{U}(2,1,2) &= x^{\prime}yz^{\prime} , \\ \nonumber
\mathcal{U}(1,2,1) &= xy^{\prime}z, & \mathcal{U}(1,2,2) &= xy^{\prime}z^{\prime}, \\ \nonumber
\mathcal{U}(1,1,2) &= xyz^{\prime},  & \mathcal{U}(2,2,2) &= x^{\prime}y^{\prime}z^{\prime}. \nonumber
\end{align}

Recall that $(1,1,1),(2,1,1),(1,2,1),$ and $(1,1,2)$ are the observed entries. Hence, the unknown entries can be determined uniquely in terms of the $4$ observed entries as
\begin{eqnarray}
\mathcal{U}(2,2,1) &=& x^{\prime}y^{\prime}z = \frac{\mathcal{U}(2,1,1)\mathcal{U}(1,2,1)}{\mathcal{U}(1,1,1)}, \\ \nonumber
\mathcal{U}(2,1,2) &=& x^{\prime}yz^{\prime} = \frac{\mathcal{U}(2,1,1)\mathcal{U}(1,1,2)}{\mathcal{U}(1,1,1)}, \\ \nonumber
\mathcal{U}(1,2,2) &=& xy^{\prime}z^{\prime} = \frac{\mathcal{U}(1,2,1)\mathcal{U}(1,1,2)}{\mathcal{U}(1,1,1)}, \\ \nonumber
\mathcal{U}(2,2,2) &=& x^{\prime}y^{\prime}z^{\prime} = \frac{\mathcal{U}(2,1,1)\mathcal{U}(1,2,1)\mathcal{U}(1,1,2)}{\mathcal{U}(1,1,1)\mathcal{U}(1,1,1)}. \nonumber
\end{eqnarray}

Therefore, based on the CP decomposition, the sampled tensor $\mathcal{U}$ is finitely (uniquely) many completable. Hence, this example illustrates that collapsing a tensor into a matrix results in loss of information and thus motivate the investigation of the tensor CP manifold.

\section{Deterministic Conditions for Finite Completability}\label{sec2}

In this section, we characterize the necessary and sufficient condition on the sampling pattern for finite completability of the sampled tensor given its CP rank. In Section \ref{subsecgeo}, we define a polynomial based on each observed entry and through studying the geometry of the manifold of the corresponding CP rank, we transform the problem of finite completability of $\mathcal{U}$ to the problem of including enough number of algebraically independent polynomials among the defined polynomials for the observed entries. In Section \ref{subscoten}, a binary tensor is constructed based on the sampling pattern $\Omega$, which allows us to study the algebraic independence of a subset of polynomials among all defined polynomials based on the samples. In Section \ref{subsecalgind}, we characterize the connection between the maximum number of algebraically independent polynomials among all the defined polynomials and finite completability of the sampled tensor.

\subsection{Geometry}\label{subsecgeo}

Suppose that the sampled tensor $\mathcal{U}$ is chosen generically from the manifold of the tensors in $\mathbb{R}^{n_1  \times \cdots \times n_d}$ of rank $r$. Assume that $\mathbf{a}_{i}^{l}$ vectors are unknown for $1 \leq i \leq d$ and $1 \leq l \leq r$. For notational simplicity, define the tuples $\mathbb{A}_l =(\mathbf{a}_{1}^{l},\mathbf{a}_{2}^{l},\dots,\mathbf{a}_{d}^{l})$ for $l=1,\dots,r$ and $\mathbb{A}=(\mathbb{A}_1,\dots,\mathbb{A}_r)$. Moreover, define $\mathbf{A}_i = [\mathbf{a}_{i}^{1}|\mathbf{a}_{i}^{2}|\dots|\mathbf{a}_{i}^{r}] \in \mathbb{R}^{n_i \times r}$. Note that each of the sampled entries results in a polynomials in terms of the entries of $\mathbb{A}$ as in \eqref{CPdecompolyeq}.

Here, we briefly mention the following two facts to highlight the fundamentals of our proposed analysis.
 
\begin{itemize}
\item {\bf Fact $1$}: As it can be seen from  \eqref{CPdecompolyeq}, any observed entry $\mathcal{U}(\vec{x})$ results in an equation that involves one entry of $\mathbf{a}_{i}^{l}$, $i=1,\dots,d$ and $l = 1,\dots,r$. Considering the entries of $\mathbb{A}$ as variables (right-hand side of  \eqref{CPdecompolyeq}), each observed entry results in a polynomial in terms of these variables. Moreover, for any observed entry $\mathcal{U}(\vec{x})$, the value of $x_i$ specifies the location of the entry of $\mathbf{a}_{i}^{l}$ that is involved in the corresponding polynomial, $i=1,\dots,d$ and $l = 1,\dots,r$.

\item {\bf Fact $2$}: It can be concluded from Bernstein's theorem \cite{Bernstein} that in a system of $n$ polynomials in $n$ variables with coefficients chosen generically, the polynomials are algebraically independent with probability one, and therefore there exist only finitely many solutions. Moreover, in a system of $n$ polynomials in $n-1$ variables (or less), polynomials are algebraically dependent with probability one.

\end{itemize}

The following assumption will be used frequently in this paper.

{\bf Assumption $1$}: Each row of the $d$-th matricization of the sampled tensor, i.e., $\mathbf{U}_{(d)}$ includes at least $r$ observed entries.

\begin{lemma}\label{assum}
Given $\mathbf{A}_i$'s for $i=1,\dots,d-1$ and Assumption $1$, $\mathbf{A}_d$ can be determined uniquely.
\end{lemma}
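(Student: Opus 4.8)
The plan is to reduce the recovery of $\mathbf{A}_d$ to solving, for each index $x_d\in\{1,\dots,n_d\}$, an independent system of linear equations in the $r$ unknowns $\mathbf{a}_d^1(x_d),\dots,\mathbf{a}_d^r(x_d)$, i.e.\ the $x_d$-th row of $\mathbf{A}_d$. First I would observe that, by \eqref{CPdecompolyeq}, every observed entry $\mathcal{U}(x_1,\dots,x_d)$ lying in the $x_d$-th row of $\mathbf{U}_{(d)}$ satisfies $\mathcal{U}(x_1,\dots,x_d)=\sum_{l=1}^{r}\left(\prod_{i=1}^{d-1}\mathbf{a}_i^l(x_i)\right)\mathbf{a}_d^l(x_d)$. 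Since $\mathbf{A}_1,\dots,\mathbf{A}_{d-1}$ are given, each coefficient $\prod_{i=1}^{d-1}\mathbf{a}_i^l(x_i)$ is a known constant, so this is a \emph{linear} equation in the unknowns $\mathbf{a}_d^l(x_d)$. Thus the problem decouples across the rows of $\mathbf{A}_d$, and it suffices to show that each such row is uniquely determined.

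Next I would count equations and isolate the rank condition. By Assumption~$1$, the $x_d$-th row of $\mathbf{U}_{(d)}$ contains at least $r$ observed entries, hence at least $r$ linear equations in the $r$ unknowns. Collecting the coefficient vectors of these equations as rows, the unknown row of $\mathbf{A}_d$ is uniquely determined if and only if this coefficient matrix has rank $r$. The coefficient vector associated with an observed column $(x_1,\dots,x_{d-1})$ is $\big(\prod_{i=1}^{d-1}\mathbf{a}_i^l(x_i)\big)_{l=1}^{r}$, which is exactly the row of the Khatri--Rao product $\mathbf{A}_{d-1}\odot\cdots\odot\mathbf{A}_1$ indexed by $(x_1,\dots,x_{d-1})$. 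Therefore the lemma follows once I establish that, for generically chosen factors, any $r$ distinct rows of this Khatri--Rao product are linearly independent; then any $r$ of the $\ge r$ observed equations in each row already have full rank, so the corresponding row of $\mathbf{A}_d$ is uniquely solvable.

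The main obstacle is precisely this genericity statement about the Khatri--Rao product. The plan is to treat the $r\times r$ determinant formed by $r$ distinct multi-indices as a polynomial in the entries of $\mathbf{A}_1,\dots,\mathbf{A}_{d-1}$ and to show it is not identically zero; since a nonvanishing polynomial is nonzero on a dense set and $\mathcal{U}$ is chosen generically, this suffices. To exhibit one nonvanishing specialization I would set $\mathbf{a}_i^l(x_i)=c_l^{\,m_i(x_i-1)}$ with the mixed-radix weights $m_1=1$ and $m_i=\prod_{j<i}n_j$, so that the injective map $(x_1,\dots,x_{d-1})\mapsto\sum_{i=1}^{d-1}m_i(x_i-1)$ assigns a distinct exponent $e\in\{0,\dots,\prod_{i=1}^{d-1}n_i-1\}$ to each multi-index and $\prod_{i=1}^{d-1}\mathbf{a}_i^l(x_i)=c_l^{\,e}$. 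Under this specialization the $r\times r$ coefficient matrix for $r$ distinct multi-indices becomes the generalized Vandermonde matrix $\big[c_l^{\,e_k}\big]$ with distinct exponents $e_1,\dots,e_r$, whose determinant is nonzero for distinct positive $c_1,\dots,c_r$. This shows the determinant polynomial is not identically zero, hence generically every $r$-subset of rows has full rank, each row of $\mathbf{A}_d$ is uniquely solved, and so $\mathbf{A}_d$ is determined uniquely. (Note that Assumption~$1$ forces $\prod_{i=1}^{d-1}n_i\ge r$, so $r$ distinct multi-indices indeed exist, and since the factors $\mathbf{A}_1,\dots,\mathbf{A}_{d-1}$ are held fixed there is no residual scaling freedom left in $\mathbf{A}_d$.)
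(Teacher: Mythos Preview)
Your proof is correct and follows essentially the same approach as the paper: both reduce the problem to $n_d$ independent linear systems (one per row of $\mathbf{A}_d$) and invoke genericity to guarantee full rank of the coefficient matrix. The paper simply asserts that ``genericity of the coefficients of these polynomials'' yields uniqueness, whereas you actually supply the Khatri--Rao/Vandermonde specialization that justifies this step.
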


\begin{proof}
Each row of $\mathbf{A}_d$ has $r$ entries and also as it can be seen from \eqref{CPdecompolyeq}, each observed entry in the $i$-th row of $\mathbf{U}_{(d)}$ results in a degree-$1$ polynomial in terms of the $r$ entries of the $i$-th row of $\mathbf{A}_d$. Since Assumption $1$ holds, for each row of $\mathbf{A}_d$ that has $r$ variables, we have at least $r$ degree-$1$ polynomials. Genericity of the coefficients of these polynomials results that each row of $\mathbf{A}_d$ can be determined uniquely.
\end{proof}

As a result of Lemma \ref{assum}, we can obtain $\mathbf{A}_d$ in terms of the entries of $\mathbf{A}_i$'s for $i=1,\dots,d-1$. As mentioned earlier, each observed entry is equivalent to a polynomial in the format of \eqref{CPdecompolyeq}. Consider all such polynomials excluding those that have been used to obtain $\mathbf{A}_d$ ($r$ samples at each row of $\mathbf{U}_{(d)}$) and denote this set of polynomials in terms of the entries of $\mathbf{A}_i$'s for $i=1,\dots,d-1$ by $\mathcal{P}(\Omega)$.

We are interested in defining an equivalence class such that each class includes exactly one of the decompositions among all rank-$r$ decompositions of a particular tensor and the pattern in Lemma \ref{patternCP} characterizes such an equivalence class. Lemma \ref{basel0} below is a re-statement of Lemma $ 1$ in \cite{ashraphijuo3}, which characterizes such an equivalence class or equivalently geometric pattern for a matrix instead of tensor. This lemma will be used to show Lemma \ref{patternCP} later.


\begin{lemma}\label{basel0}
Let $\mathbf{X}$ denote a generically chosen matrix from the manifold of $n_1 \times n_2$ matrices of rank $r$ and also $\mathbf{Q} \in \mathbb{R}^{r \times r}$ be a given full rank matrix. Then, there exists a unique decomposition $\mathbf{X}= \mathbf{YZ}$ such that $\mathbf{Y} \in \mathbb{R}^{n_1 \times r}$, $\mathbf{Z} \in \mathbb{R}^{r \times n_2}$ and $\mathbf{P} = \mathbf{Q}$, where $\mathbf{P} \in \mathbb{R}^{r \times r}$ represents a submatrix{\footnote{Specified by a subset of rows and a subset of columns (not necessarily consecutive).}} of $\mathbf{Y}$.
\end{lemma}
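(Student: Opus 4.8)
The plan is to exploit the standard fact that all full-rank factorizations of a fixed rank-$r$ matrix differ by a right action of the invertible group, and then to show that the normalization $\mathbf{P}=\mathbf{Q}$ picks out exactly one element of each orbit. First I would fix any one factorization $\mathbf{X}=\mathbf{Y}_0\mathbf{Z}_0$, which exists because $\mathbf{X}$ has rank $r$; here $\mathbf{Y}_0\in\mathbb{R}^{n_1\times r}$ has full column rank and $\mathbf{Z}_0\in\mathbb{R}^{r\times n_2}$ has full row rank. The key structural observation is that every factorization $\mathbf{X}=\mathbf{Y}\mathbf{Z}$ into full-rank factors has the form $\mathbf{Y}=\mathbf{Y}_0\mathbf{G}$, $\mathbf{Z}=\mathbf{G}^{-1}\mathbf{Z}_0$ for some $\mathbf{G}\in\mathrm{GL}_r$: since $\mathbf{Z}_0$ is right-invertible, the columns of $\mathbf{Y}_0$ and of $\mathbf{Y}$ span the same $r$-dimensional column space of $\mathbf{X}$, forcing $\mathbf{Y}=\mathbf{Y}_0\mathbf{G}$, and left-invertibility of $\mathbf{Y}_0$ then pins down $\mathbf{Z}=\mathbf{G}^{-1}\mathbf{Z}_0$.

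Next I would translate the normalization constraint into a condition on $\mathbf{G}$. Writing $\mathbf{P}_0$ for the $r\times r$ submatrix of $\mathbf{Y}_0$ obtained by keeping the $r$ rows that define $\mathbf{P}$ (all $r$ columns are necessarily retained), the corresponding submatrix of $\mathbf{Y}=\mathbf{Y}_0\mathbf{G}$ is exactly $\mathbf{P}_0\mathbf{G}$. Hence requiring $\mathbf{P}=\mathbf{Q}$ becomes the single matrix equation $\mathbf{P}_0\mathbf{G}=\mathbf{Q}$. Provided $\mathbf{P}_0$ is invertible, this has the unique solution $\mathbf{G}=\mathbf{P}_0^{-1}\mathbf{Q}$, which is itself invertible because $\mathbf{Q}$ is full rank; setting $\mathbf{Y}=\mathbf{Y}_0\mathbf{G}$ and $\mathbf{Z}=\mathbf{G}^{-1}\mathbf{Z}_0$ yields existence, and the uniqueness of $\mathbf{G}$ within $\mathrm{GL}_r$ yields uniqueness of the normalized decomposition.

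The main obstacle, and the only place where genericity enters, is establishing that $\mathbf{P}_0$ is invertible. I would argue as follows: because $\mathbf{Z}_0$ has full row rank, the linear map sending a row of $\mathbf{Y}_0$ to the corresponding row of $\mathbf{X}=\mathbf{Y}_0\mathbf{Z}_0$ (right multiplication by $\mathbf{Z}_0$) is injective, so the selected $r$ rows of $\mathbf{Y}_0$ are linearly independent if and only if the selected $r$ rows of $\mathbf{X}$ are. For $\mathbf{X}$ chosen generically from the rank-$r$ manifold, any fixed set of $r$ rows is linearly independent with probability one, since the vanishing of the associated $r\times r$ minor is a proper algebraic condition that a generic point avoids; this makes $\mathbf{P}_0$ invertible. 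I would close by noting that invertibility of $\mathbf{P}_0$ does not depend on which initial factor $\mathbf{Y}_0$ was chosen, so the construction is well posed and the decomposition it produces is genuinely intrinsic to $\mathbf{X}$ and $\mathbf{Q}$.
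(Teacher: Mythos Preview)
Your argument is correct. The parametrization of all rank-$r$ factorizations by the $\mathrm{GL}_r$ action, the reduction of the normalization $\mathbf{P}=\mathbf{Q}$ to the linear equation $\mathbf{P}_0\mathbf{G}=\mathbf{Q}$, and the genericity step showing that the designated $r\times r$ block $\mathbf{P}_0$ is invertible (via the equivalence with linear independence of the corresponding $r$ rows of $\mathbf{X}$) are all sound. Your observation that an $r\times r$ submatrix of the $n_1\times r$ factor $\mathbf{Y}$ must use all $r$ columns is the right way to read the footnote in this context, and the remark that any factorization $\mathbf{X}=\mathbf{Y}\mathbf{Z}$ with the stated shapes automatically has full-rank factors (since $\operatorname{rank}\mathbf{X}=r$ forces $\operatorname{rank}\mathbf{Y}=\operatorname{rank}\mathbf{Z}=r$) closes the only potential gap in the orbit description.

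As for comparison: the paper does not prove this lemma at all; it simply records it as a restatement of Lemma~1 from \cite{ashraphijuo3}. Your write-up therefore supplies a complete, self-contained proof where the paper defers to an external reference, and it does so by the natural route (the transitive, free $\mathrm{GL}_r$ action on full-rank factorizations). This is exactly the kind of argument one would expect to find behind the cited lemma.
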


In Lemma \ref{patternCP}, we generalize Lemma \ref{basel0} and characterize the similar pattern for a multi-way tensor. Assuming that $\mathbf{P}$ represents the submatrix of $\mathbf{Y}$ consists of the first $r$ columns and the first $r$ rows of $\mathbf{Y}$ and also $\mathbf{Q}$ is equal to the $r \times r$ identity matrix, this pattern is called the canonical decomposition of $\mathbf{X}$. The canonical decomposition is shown for a rank-$2$ matrix as the following
\begin{center}
\begin{tabular}{ |c|c|c|c| } 
 \hline
$1$ & $-1$ & $ \ 0 \ $ & $-1$ \\ \hline
$2$ & $2$ & $4$ & $6$ \\ \hline
$-1$ & $3$ & $2$ & $5$ \\ \hline
$1$ & $2$ & $3$ & $5$ \\ 
 \hline
\end{tabular} 
 \  \ \ $=$ \ \ \ 
\begin{tabular}{ |c|c| } 
 \hline
$1$ &  $0$ \\ \hline
$0$ & $1$ \\ \hline
$y_1$ & $y_2$ \\ \hline
$y_3$ & $y_4$  \\ 
 \hline
\end{tabular}
 \  \ \ $\times$ \ \ \ 
 \begin{tabular}{ |c|c|c|c| } 
 \hline
$x_1$ & $x_2$ & $x_3$ & $x_4$ \\ \hline
$x_5$ & $x_6$ & $x_7$ & $x_8$ \\ 
 \hline
\end{tabular} \ ,
\end{center}
where $x_i$'s and $y_i$'s can be determined uniquely as 
\begin{center}
\begin{tabular}{ |c|c| } 
 \hline
$y_1$ & $y_2$ \\ \hline
$y_3$ & $y_4$  \\ 
 \hline
\end{tabular}
 \  \  $=$ \ \  
\begin{tabular}{ |c|c| } 
 \hline
$-2$ & $ \ \frac{1}{2} \ $ \\ \hline
$- \frac{1}{2}$ & $\frac{3}{4}$  \\ 
 \hline
\end{tabular}
 \ \ and \ \
  \begin{tabular}{ |c|c|c|c| } 
 \hline
$x_1$ & $x_2$ & $x_3$ & $x_4$ \\ \hline
$x_5$ & $x_6$ & $x_7$ & $x_8$ \\ 
 \hline
\end{tabular}
 \  \  $=$ \ \  
 \begin{tabular}{ |c|c|c|c| } 
 \hline
$\ 1 \ $ & $-1$ & $ \ 0 \ $ & $-1$ \\ \hline
$2$ & $2$ & $4$ & $6$ \\ 
 \hline
\end{tabular} \ .
\end{center}

Also, the above canonical decomposition can be written as the following
\begin{center}
\begin{tabular}{ |c|c|c|c| } 
 \hline
$1$ & $-1$ & $ \ 0 \ $ & $-1$ \\ \hline
$2$ & $2$ & $4$ & $6$ \\ \hline
$-1$ & $3$ & $2$ & $5$ \\ \hline
$1$ & $2$ & $3$ & $5$ \\ 
 \hline
\end{tabular} 
 \ \ $=$ \ \ 
 \begin{tabular}{ |c| } 
 \hline
$1$  \\ \hline
$0$  \\ \hline
$y_1$  \\ \hline
$y_3$   \\ 
 \hline
\end{tabular}
 \  \  $\times$ \ \  
 \begin{tabular}{ |c|c|c|c| } 
 \hline
$x_1$ & $x_2$ & $x_3$ & $x_4$ \\
 \hline
\end{tabular} 
 \ \  + \ \  
\begin{tabular}{ |c| } 
 \hline
 $0$ \\ \hline
$1$ \\ \hline
 $y_2$ \\ \hline
 $y_4$  \\ 
 \hline
\end{tabular}
 \  \ $\times$ \ \ 
 \begin{tabular}{ |c|c|c|c| } 
 \hline
$x_5$ & $x_6$ & $x_7$ & $x_8$ \\ 
 \hline
\end{tabular} \ .
\end{center}

Generalization of the canonical decomposition for multi-way tensor  is as the following
\begin{center}
$\mathbf{a}_1^1 \ = \ $
\begin{tabular}{ |c| } 
 \hline
 $1$ \\ \hline
  $0$ \\ \hline
 $\vdots$ \\ \hline
$0$ \\ \hline
 $\mathbf{a}_1^1(r+1)$ \\ \hline
$\vdots$ \\ \hline
 $\mathbf{a}_1^1(n_1)$  \\ 
 \hline
\end{tabular}
 $, \ \dots \ , \ $
 $\mathbf{a}_1^r \ = \ $
\begin{tabular}{ |c| } 
 \hline
 $0$ \\ \hline
  $0$ \\ \hline
 $\vdots$ \\ \hline
$1$ \\ \hline
 $\mathbf{a}_1^r(r+1)$ \\ \hline
$\vdots$ \\ \hline
 $\mathbf{a}_1^r(n_1)$  \\ 
 \hline
\end{tabular} \ ,
\end{center}
and for $i \in \{2,\dots,d-1\}$
\begin{center}
$\mathbf{a}_i^1 \ = \ $
\begin{tabular}{ |c| } 
 \hline
 $1$ \\ \hline
 $\mathbf{a}_i^1(2)$ \\ \hline
$\vdots$ \\ \hline
 $\mathbf{a}_i^1(n_i)$  \\ 
 \hline
\end{tabular}
 $, \ \dots \ , \ $
 $\mathbf{a}_i^r \ = \ $
\begin{tabular}{ |c| } 
 \hline
$1$ \\ \hline
 $\mathbf{a}_i^r(2)$ \\ \hline
$\vdots$ \\ \hline
 $\mathbf{a}_i^r(n_i)$  \\ 
 \hline
\end{tabular} \ .
\end{center}

\begin{lemma}\label{patternCP}
Let $j \in \{1,\dots,d-1\}$ be a fixed number and define $J= \{1,\dots,d-1\} \backslash \{j\}$. Assume that the full rank matrix $\mathbf{Q}_j \in \mathbb{R}^{r \times r}$ and matrices $\mathbf{Q}_i \in \mathbb{R}^{1 \times r}$ with nonzero entries for $i \in J$ are given. Also, let $\mathbf{P}_i $ denote an arbitrary submatrix of $\mathbf{A}_i$, $i = 1, 2, \dots , d-1$, where $\mathbf{P}_j \in \mathbb{R}^{r \times r}$ and $\mathbf{P}_i \in \mathbb{R}^{1 \times r}$ for $i \in J$. Then, with probability one, there exists exactly one rank-$r$ decomposition of $\mathcal{U}$ such that $\mathbf{P}_i=\mathbf{Q}_i$, $i=1,\dots,d-1$.
\end{lemma}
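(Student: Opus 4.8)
The plan is to reduce the tensor statement to the matrix statement of Lemma~\ref{basel0} by passing to the mode-$j$ matricization, and then to determine the remaining factor matrices from the rank-one structure of the columns on the other side of the matricization, using the scalar normalizations $\mathbf{Q}_i$, $i\in J$, to fix the leftover scaling gauge.

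First I would write $\mathbf{U}_{(j)} = \mathbf{A}_j \mathbf{B}^\top$, where $\mathbf{B}\in\mathbb{R}^{(\prod_{i\neq j} n_i)\times r}$ is the matrix whose $l$-th column is the vectorization of $\bigotimes_{i\neq j}\mathbf{a}_i^l$. Since $\mathcal{U}$ is generic of rank $r$, $\mathbf{U}_{(j)}$ is a generic rank-$r$ matrix and, with probability one, the $r$ rows of $\mathbf{A}_j$ selected by $\mathbf{P}_j$ span an invertible $r\times r$ block. Applying Lemma~\ref{basel0} to $\mathbf{X}=\mathbf{U}_{(j)}$ with $\mathbf{Y}=\mathbf{A}_j$ and the submatrix constraint $\mathbf{P}_j=\mathbf{Q}_j$ yields a unique factorization with $\mathbf{P}_j=\mathbf{Q}_j$; because Lemma~\ref{basel0} pins down the whole $\mathrm{GL}(r)$ gauge of the factorization (which contains the column permutations), this determines both $\mathbf{A}_j$ and $\mathbf{B}$ and removes the permutation ambiguity of the CP decomposition.

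Next I would recover $\mathbf{A}_i$ for $i\neq j$ from $\mathbf{B}$. Each column of $\mathbf{B}$ is a vectorized rank-one tensor, so its factors $\mathbf{a}_i^l$ ($i\neq j$) are determined up to scalings $\lambda_i^l$ with $\prod_{i\neq j}\lambda_i^l=1$, and generically this rank-one recovery is unique up to that scaling. I would then fix the scalings column by column: for each $l$ and each $i\in J$, the constraint $\mathbf{P}_i=\mathbf{Q}_i$ is a single linear equation in $\lambda_i^l$ whose solution is unique because, with probability one, the selected entry of $\mathbf{a}_i^l$ is nonzero and $\mathbf{Q}_i$ has nonzero entries; the product-one constraint then forces the remaining scalar for mode $d$. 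A degrees-of-freedom count---the $d-2$ free scalings per column matched by the $d-2$ normalizations indexed by $J$---confirms that the system is exactly determined, so the tuple $(\mathbf{A}_1,\dots,\mathbf{A}_d)$ is unique.

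The step I expect to be the crux is verifying that the factorization delivered by Lemma~\ref{basel0} is consistent with the tensor structure, namely that after the $\mathrm{GL}(r)$ normalization enforcing $\mathbf{P}_j=\mathbf{Q}_j$ the matrix $\mathbf{B}$ still has columns that are vectorized rank-one tensors, so that the recovery of the factors $\mathbf{A}_i$ ($i\neq j$) applies and the reconstructed tuple is a genuine rank-$r$ decomposition of $\mathcal{U}$. This is precisely where the tensor case departs from the matrix case, since a general element of $\mathrm{GL}(r)$ does not preserve this rank-one column structure; I would therefore need to argue, from the genericity of $\mathcal{U}$ together with the full-rank and nonzero hypotheses on the $\mathbf{Q}_i$, that the prescribed normalization is realizable within the family of rank-$r$ decompositions and that this realization is unique. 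Once that compatibility is secured, the handling of the residual scaling and permutation gauge described above is routine.
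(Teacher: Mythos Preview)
The gap you flag in your final paragraph is real and is not a technicality you can dispatch by invoking genericity. When you apply Lemma~\ref{basel0} to $\mathbf{U}_{(j)}=\mathbf{A}_j\mathbf{B}^\top$, the unique factorization it returns differs from any initial CP factorization by a right $\mathrm{GL}(r)$ action $(\mathbf{A}_j,\mathbf{B})\mapsto(\mathbf{A}_jG,\mathbf{B}G^{-\top})$. For $d\geq 3$ the Khatri--Rao column structure of $\mathbf{B}$ is preserved only by generalized permutation matrices, not by arbitrary $G\in\mathrm{GL}(r)$; and for generic rank-$r$ tensors with $d\geq 3$ and $r\geq 2$ the CP gauge group \emph{is} only scaling and permutation (by Kruskal-type uniqueness), so its continuous part has dimension $r(d-1)$, strictly smaller than the $r^2+r(d-2)$ scalar constraints imposed by $\mathbf{P}_j=\mathbf{Q}_j$ together with the $\mathbf{P}_i=\mathbf{Q}_i$, $i\in J$. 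A simple dimension count therefore shows that the ``prescribed normalization is realizable within the family of rank-$r$ decompositions'' step cannot go through for an arbitrary full-rank $\mathbf{Q}_j$: the $G$ that Lemma~\ref{basel0} forces will generically fail to be a monomial matrix, and then $\mathbf{B}G^{-\top}$ has no rank-one columns, so no CP decomposition with $\mathbf{P}_j=\mathbf{Q}_j$ exists. Your uniqueness argument, by contrast, is sound: if two CP decompositions both satisfy all the $\mathbf{P}_i=\mathbf{Q}_i$, they induce the same $(\mathbf{A}_j,\mathbf{B})$ by Lemma~\ref{basel0}, and generic rank-one recovery plus the $|J|$ scalar normalizations then force the remaining factors to coincide.

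Your route also differs from the paper's. For uniqueness the paper does not matricize; it fixes the $\mathbf{P}_i$'s, selects the subtensor $\mathcal{U}'$ indexed by the rows defining the $\mathbf{P}_i$'s, and solves the resulting linear system $\mathbf{U}'=\mathbf{C}\mathbf{A}_d^\top$ directly for $\mathbf{A}_d$ (the coefficient matrix $\mathbf{C}$ is built from the $\mathbf{Q}_i$'s and is invertible because $\mathbf{Q}_j$ is), then repeats for the other factors. For existence the paper argues by induction on $d$, merging the last two modes and invoking the $(d{-}1)$-way case followed by a rank-one split. That inductive step is quite terse precisely at the point corresponding to your worry---why the merged factor produced by the inductive hypothesis still reshapes to rank-one matrices---so the difficulty you isolated is the genuine crux of the existence half in either approach.
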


\begin{proof}
First we claim that there exists at most one rank-$r$ decomposition of $\mathcal{U}$ such that $\mathbf{P}_i=\mathbf{Q}_i$, $i=1,\dots,d-1$. We assume that $\mathbf{P}_i=\mathbf{Q}_i$, $i=1,\dots,d-1$ and also $\mathcal{U}$ is given. Then, it suffices to show that the rest of the entries of $\mathbb{A}$ can be determined in at most a unique way (no more than one solution) in terms of the given parameters such that \eqref{CPdecom} holds. Note that if a variable can be determined uniquely through two different ways (two sets of samples or equations), in general either it can be determined uniquely if both ways result in the same value or it does not have any solution otherwise. Let $y_i$ denote the row number of submatrix $\mathbf{P}_i \in \mathbb{R}^{1 \times r}$ for $i \in J$ and $Y_j = \{y_j^1,\dots,y_j^r\}$ denote the row numbers of submatrix $\mathbf{P}_j \in \mathbb{R}^{r \times r}$.

As the first step of proving our claim, we show that $\mathbf{A}_d$ can be determined uniquely. Consider the subtensor $\mathcal{U}^{\prime} = \mathcal{U}(y_1,\dots,y_{j-1},Y_j,y_{j+1},\dots,y_{d-1},:) \in \mathbb{R}^{ \overbrace{ 1\times \dots \times 1}^{j-1} \times \textit{\large r} \times \overbrace{1 \times \dots \times 1}^{d-j-1} \times \textit{\large n}_d}$ which includes $rn_d$ entries. Having CP decomposition \eqref{CPdecompolyeq}, each entry of $\mathcal{U}^{\prime}$ results in one degree-$1$ polynomial in terms of the entries of $\mathbf{A}_d$ with coefficients in terms of the entries of $\mathbf{Q}_i$'s. Let the matrix $\mathbf{U}^{\prime} \in \mathbb{R}^{r \times n_d}$ represent the $r n_d$ entries of $\mathcal{U}^{\prime}$. Moreover, define $\mathbf{C}= [\mathbf{c}_1|\dots | \mathbf{c}_r] \in \mathbb{R}^{r \times r}$ where $\mathbf{c}_l = \left( \Pi_{i \in J} \mathbf{P}_i(1,l) \right) \mathbf{q}_j^{l} \in \mathbb{R}^{r \times 1}$ for $l=1,\dots,r$ and $\mathbf{q}_j^{l} \in \mathbb{R}^{r \times 1}$ is the $l$-th column of $\mathbf{Q}_j$.

Observe that CP decomposition \eqref{CPdecompolyeq} for the entries of $\mathcal{U}^{\prime}$ can be written as $ \mathbf{U}^{\prime} = \mathbf{C} \mathbf{A}_d^{\top}$. Recall that $\mathbf{Q}_j$ is full rank, and therefore $\mathbf{q}_j^{l}$'s are linearly independent, $l=1,\dots,r$. Also, a system of equations with at least $m$ linearly independent degree-$1$ polynomials in $m$ variables does not have more than one solution. Hence, $\mathbf{c}_l$'s are also linearly independent for $l=1,\dots,r$, and therefore $\mathbf{C}$ is full rank. As a result, $\mathbf{A}_d$ can be determined uniquely. In the second step, similar to the first step, we can show that the rest of $\mathbf{A}_i$'s have at most one solution having one entry of $\mathbf{A}_d$ which has been already obtained.

Finally, we also claim that there exists at least one rank-$r$ decomposition of $\mathcal{U}$ such that $\mathbf{P}_i=\mathbf{Q}_i$, $i=1,\dots,d-1$. We show this by induction on $d$. For $d=2$, this is a result of Lemma \ref{basel0}. Induction hypothesis states that the claim holds for $d=k-1$ and we need to show that it also holds for $d=k$. Since by merging dimension $k-1$ and $k$ for each of the rank-$1$ tensors of the corresponding CP decomposition and using induction hypothesis this step reduces to showing a rank-$1$ matrix can be decomposed to two vectors such that one component of one of them is given which is again a special case of Lemma \ref{basel0} for rank-$1$ scenario.
\end{proof}

Assume that $\mathcal{S}$ denotes the set of all possible $\mathbf{A}_i$'s for $i=1,\dots,d-1$ given $\mathbf{A}_d$ without any polynomial constraint. Lemma \ref{patternCP} results in a pattern that characterizes exactly one rank-$r$ decomposition among all rank-$r$ decompositions, and therefore the dimension of $\mathcal{S}$ is equal to the number of unknowns, i.e., number of entries of $\mathbf{A}_i$'s for $i=1,\dots,d-1$ excluding those that are involved in the pattern $\mathbf{P}_i$'s in Lemma \ref{patternCP} which is $r(\sum_{i=1}^{d-1} n_i) - r^2 - r(d-2) $.

\begin{lemma}\label{thmnumpolyind}
For almost every $\mathcal{U}$, the sampled tensor is finitely completable if and only if the maximum number of algebraically independent polynomials in $\mathcal{P}(\Omega)$ is equal to $r(\sum_{i=1}^{d-1} n_i) - r^2 - r(d-2) $. 
\end{lemma}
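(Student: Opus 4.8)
The plan is to reduce finite completability of $\mathcal{U}$ to a statement about the dimension of the solution set of the polynomial system $\mathcal{P}(\Omega)$, and then to read off that dimension from the number of algebraically independent polynomials via the transcendence-degree characterization in Fact $2$. Write $t \triangleq r(\sum_{i=1}^{d-1} n_i) - r^2 - r(d-2)$ for the dimension of $\mathcal{S}$ computed just above. First I would fix the canonical decomposition of Lemma \ref{patternCP}, so that the free entries of $\mathbf{A}_1,\dots,\mathbf{A}_{d-1}$ form exactly $t$ variables, while $\mathbf{A}_d$ is eliminated as a rational function of these variables through Lemma \ref{assum}. The key structural observation, which I would establish first, is that this sets up a bijection between the completions of $\mathcal{U}$ (rank-$r$ tensors agreeing with the observed entries) and the common solutions of $\mathcal{P}(\Omega)$ in the $t$ canonical variables: by Lemma \ref{patternCP} every rank-$r$ tensor has a unique decomposition obeying the pattern, so distinct completions correspond to distinct canonical solutions and conversely each canonical solution, together with the eliminated $\mathbf{A}_d$, reconstructs a completion. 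Consequently $\mathcal{U}$ is finitely completable if and only if the affine variety $V \subset \mathbb{C}^{t}$ cut out by $\mathcal{P}(\Omega)$ is zero-dimensional.

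Next I would relate $\dim V$ to the number of algebraically independent polynomials. Viewing the polynomials of $\mathcal{P}(\Omega)$ as the coordinates of a map $\Phi:\mathbb{C}^{t}\to\mathbb{C}^{|\mathcal{P}(\Omega)|}$, the maximum number of algebraically independent polynomials in $\mathcal{P}(\Omega)$ equals the transcendence degree of the field they generate, which in turn equals $\dim \overline{\Phi(\mathbb{C}^{t})}$. The fiber-dimension theorem then gives that the generic fiber of $\Phi$ has dimension $t - (\text{max. number of algebraically independent polynomials})$. Because $\mathcal{U}$ is chosen generically from the manifold of rank-$r$ tensors, the observed values are generic, so $V = \Phi^{-1}(\text{observed values})$ is a generic fiber and this formula applies; this is precisely where the ``for almost every $\mathcal{U}$'' qualifier is used.

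With these two facts in hand the equivalence follows in both directions. For the ``if'' direction, if the maximum number of algebraically independent polynomials equals $t$, then $\Phi$ is dominant and its generic fiber is zero-dimensional, so $V$ is finite and $\mathcal{U}$ is finitely completable; equivalently, by Fact $2$ any $t$ algebraically independent polynomials in $t$ variables have finitely many common solutions, and adjoining the remaining polynomials of $\mathcal{P}(\Omega)$ only shrinks the solution set. For the ``only if'' direction, note that the count can never exceed $t$ since there are only $t$ variables; if it were $k<t$, then $\dim V = t-k>0$ and $\mathcal{U}$ would have infinitely many completions, so finite completability forces the count to equal $t$.

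The main obstacle I anticipate is making the first step fully rigorous: one must verify that the canonical parametrization of Lemma \ref{patternCP}, composed with the elimination of $\mathbf{A}_d$ from Lemma \ref{assum}, genuinely places completions in bijection with the solutions of $\mathcal{P}(\Omega)$ without creating spurious solutions or missing completions (in particular, that generic solutions have rank exactly $r$ and that the measure-zero locus where the canonical form degenerates does not affect the dimension count), and that genericity of $\mathcal{U}$ is preserved through the rational elimination so that the observed values remain generic and the fiber-dimension theorem is applicable. Once Fact $2$ and this reduction are granted, the transcendence-degree bookkeeping in both directions is routine; the care lies entirely in the reduction.
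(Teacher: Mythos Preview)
Your proposal is correct and follows essentially the same approach as the paper: the paper omits the proof here, pointing to Lemma~2 of \cite{ashraphijuo} and noting that only the dimension count changes, and that referenced argument is precisely the dimension/transcendence-degree reduction you outline (canonical parametrization via Lemma~\ref{patternCP}, elimination of $\mathbf{A}_d$ via Lemma~\ref{assum}, and then Fact~2 to equate finite solvability with having $t$ algebraically independent polynomials). The technical caveats you flag about genericity and the canonical form are exactly the points that are handled generically in that prior work.
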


\begin{proof}
The proof is omitted due to the similarity to the proof of Lemma $2$ in \cite{ashraphijuo} with the only difference that here the dimension is $r(\sum_{i=1}^{d-1} n_i) - r^2 - r(d-2) $ instead of $  \left(\Pi_{i=1}^{j} n_i\right) \left(  \Pi_{i=j+1}^{d} r_i\right)  - \left( \sum_{i=j+1}^{d}   r_i^{2} \right) $ which is the dimension of the core in Tucker decomposition.
\end{proof}

%

\subsection{Constraint Tensor}\label{subscoten}

In this section, we provide a procedure to construct a binary tensor ${\breve{\Omega}}$ based on $\Omega$ such that $\mathcal{P}({\breve{\Omega}}) = \mathcal{P}(\Omega) $ and each polynomial can be represented by one $d$-way subtensor of ${\breve{\Omega}}$ which belongs to $\mathbb{R}^{n_1 \times n_2 \times \cdots \times n_{d-1} \times 1 }$. Using ${\breve{\Omega}}$, we are able to recognize the observed entries that have been used to obtain the $\mathbf{A}_{d}$ in terms of the entries of $\mathbf{A}_{1},\dots ,\mathbf{A}_{d-1}$, and we can study the algebraic independence of the polynomials in $\mathcal{P}(\Omega) $ which is directly related to finite completability through Lemma \ref{thmnumpolyind}.

For each subtensor $\mathcal{Y}$ of the sampled tensor $\mathcal{U}$, let $N_{\Omega}(\mathcal{Y})$ denote the number of sampled entries in $\mathcal{Y}$. Specifically, consider any subtensor $\mathcal{Y} \in \mathbb{R}^{n_1 \times n_2 \times \cdots \times n_{d-1} \times 1 }$ of the tensor $\mathcal{U}$. Then, since $r$ of the polynomials have been used to obtain $\mathbf{A}_d$, $\mathcal{Y}$ contributes $N_{\Omega}(\mathcal{Y}) - r$ polynomial equations in terms of the entries of $\mathbf{A}_{1},\dots ,\mathbf{A}_{d-1}$ among all $N_{\Omega}(\mathcal{U}) -r n_d$ polynomials in $\mathcal{P}(\Omega)$.

The sampled tensor $\mathcal{U}$ includes $n_d$ subtensors that belong to  $\mathbb{R}^{n_1 \times n_2 \times \cdots \times n_{d-1} \times 1 }$ and let $\mathcal{Y}_i$ for $1 \leq i \leq n_d$ denote these $n_d$ subtensors. Define a binary valued tensor $\mathcal{\breve{Y}}_{i} \in \mathbb{R}^{n_1 \times n_2 \times \cdots \times n_{d-1} \times  k_i}$, where $k_i= N_{\Omega}(\mathcal{Y}_{i}) - r$ and its entries are described as the following. We can look at $\mathcal{\breve{Y}}_{i}$ as $k_i$ tensors each belongs to $\mathbb{R}^{n_1 \times n_2 \times \cdots \times n_{d-1} \times 1 }$. For each of the mentioned $k_i$ tensors in $\mathcal{\breve{Y}}_{i}$ we set the entries corresponding to the $r$ observed entries that are used to obtain $\mathbf{A}_{d}$  equal to $1$. For each of the other $k_i$ observed entries that have not been used to obtain $\mathbf{A}_{d}$, we pick one of the $k_i$ tensors of $\mathcal{\breve{Y}}_{i}$ and set its corresponding entry (the same location as that specific observed entry) equal to $1$ and set the rest of the entries equal to $0$. In the case that $k_i=0$ we simply ignore $\mathcal{\breve{Y}}_{i}$, i.e., $\mathcal{\breve{Y}}_{i} = \emptyset$

By putting together all $n_d$ tensors in dimension $d$, we construct a binary valued tensor ${\breve{\Omega}} \in \mathbb{R}^{n_1 \times n_2 \times \cdots \times n_{d-1} \times K}$, where $K = \sum_{i=1}^{n_d} k_i = N_{\Omega}(\mathcal{U}) - r n_d$ and call it the {\bf constraint tensor}. Observe that each subtensor of ${\breve{\Omega}}$ which belongs to $\mathbb{R}^{n_1 \times n_2 \times \cdots \times n_{d-1} \times 1 }$ includes exactly $r+1$ nonzero entries. In the following we show this procedure for an example.

\begin{example}
{\rm Consider an example in which $d=3$ and $r=2$ and $\mathcal{U} \in \mathbb{R}^{3 \times 3 \times 3}$. Assume that $\Omega(x,y,z)=1$ if $(x,y,z) \in \mathcal{S}$ and $\Omega(x,y,z)=0$ otherwise, where 
\begin{eqnarray}
\mathcal{S} = \{(1,1,1),(1,2,1),(2,3,1), (3,3,1),(1,1,2),(2,1,2),(3,2,2),(1,3,3),(3,2,3)\}, \nonumber
\end{eqnarray}
represents the set of observed entries. Hence, observed entries $(1,1,1),(1,2,1),(2,3,1), (3,3,1)$ belong to $\mathcal{Y}_{1}$, observed entries $(1,1,2),(2,1,2),(3,2,2)$ belong to $\mathcal{Y}_{2}$, and observed entries $(1,3,3),(3,2,3)$ belong to $\mathcal{Y}_{3}$. As a result, $k_1 = 4-2 =2$, $k_2 =3-2=1$, and $k_3 =2-2=0$. Hence, $\mathcal{\breve{Y}}_{1} \in \mathbb{R}^{3 \times 3 \times 2 }$, $\mathcal{\breve{Y}}_{2} \in \mathbb{R}^{3 \times 3 \times 1 }$, and $\mathcal{\breve{Y}}_{3} = \emptyset$, and therefore the constraint tensor ${\breve{\Omega}}$ belongs to $\mathbb{R}^{3 \times 3 \times 3}$.

Also, assume that the entries that we use to obtain $\mathbf{A}_{3}$ in terms of the entries of $\mathbf{A}_{1}$ and $\mathbf{A}_{2}$ are $(2,3,1)$, $(3,3,1)$, $(1,1,2)$, $(2,1,2)$, $(1,3,3)$ and $(3,2,3)$. Note that $\mathcal{\breve{Y}}_{1}(2,3,1) = \mathcal{\breve{Y}}_{1}(2,3,2) = \mathcal{\breve{Y}}_{1}(3,3,1) = \mathcal{\breve{Y}}_{1}(3,3,2) =1$ (correspond to entries of $\mathcal{Y}_1$ that have been used to obtain $\mathbf{A}_3$), and also for the two other observed entries we have $\mathcal{\breve{Y}}_{1}(1,1,1) =1 $ (correspond to $\mathcal{U}(1,1,1)$) and $\mathcal{\breve{Y}}_{1}(1,2,2)=1$ (correspond to $\mathcal{U}(1,2,1)$) and the rest of the entries of $\mathcal{\breve{Y}}_{1}$ are equal to zero. Similarly, $\mathcal{\breve{Y}}_{2}(1,1,1) = \mathcal{\breve{Y}}_{2}(2,1,1) = \mathcal{\breve{Y}}_{2}(3,2,1)=1$ and the rest of the entries of $\mathcal{\breve{Y}}_{2}$ are equal to zero.

Then, ${\breve{\Omega}}(x,y,z)=1$ if $(x,y,z) \in \mathcal{S}^{\prime}$ and ${\breve{\Omega}}(x,y,z)=0$ otherwise, where 
\begin{eqnarray}
\mathcal{\breve{S}} = \{(1,1,1),(1,2,2),(2,3,1),(2,3,2),(3,3,1),(3,3,2),(1,1,3),(2,1,3),(3,2,3)\}. \nonumber
\end{eqnarray}  } {  \hfill  \qedsymbol}
\end{example}

Note that each subtensor of ${\breve{\Omega}}$ that belongs to $\mathbb{R}^{n_1 \times \dots \times n_{d-1} \times 1}$ represents one of the polynomials in $\mathcal{P}(\Omega)$ besides showing the polynomials that have been used to obtain $\mathbf{A}_{d}$. More specifically, consider a subtensor of ${\breve{\Omega}}$ that belongs to $\mathbb{R}^{n_1 \times \dots \times n_{d-1} \times 1}$ with $r+1$ nonzero entries. Observe that exactly $r$ of them correspond to the observed entries that have been used to obtain $\mathbf{A}_{d}$. Hence, this subtensor represents a polynomial after replacing entries of $\mathbf{A}_{d}$ by the expressions in terms of entries of $\mathbf{A}_{1},\dots ,\mathbf{A}_{d-1}$, i.e., a polynomial in $\mathcal{P}(\Omega)$.

\subsection{Algebraic Independence}\label{subsecalgind}

In this section, we obtain the maximum number of algebraically independent polynomials in $\mathcal{P}({\breve{\Omega}})$ in terms of the simple geometrical structure of nonzero entries of $\Omega$, i.e., the locations of the sampled entries. On the other hand, Lemma \ref{thmnumpolyind} provides the required number of algebraically independent polynomials in $\mathcal{P}(\Omega)$ for finite completability. Hence, at the end of this section, we obtain the necessary and sufficient deterministic conditions on the sampling pattern for finite completability.

According to Lemma \ref{patternCP}, as we consider one particular equivalence class some of the entries of $\mathbf{A}_{i}$'s are known, i.e., $\mathbf{P}_{1},\dots,\mathbf{P}_{d-1}$ in the statement of the lemma. Therefore, in order to find the number of variables (unknown entries of $\mathbf{A}_{i}$'s) in a set of polynomials, we should subtract the number of known entries in the corresponding pattern from the total number of involved entries. Also, recall that the sampled tensor is chosen generically from the corresponding manifold, and therefore according to Fact $2$, the independency of the polynomials can be studied through studying the number of variables involved in each subset of them.

\begin{definition}
Let ${\breve{\Omega}}^{\prime} \in \mathbb{R}^{n_1 \times n_2 \times \cdots \times n_{d-1} \times t}$ be a subtensor of the constraint tensor ${\breve{\Omega}}$. Let $m_i({\breve{\Omega}}^{\prime})$ denote the number of nonzero rows of ${\mathbf {\breve{\Omega}}}^{\prime}_{(i)}$. Also, let $\mathcal{P}({\breve{\Omega}}^{\prime})$ denote the set of polynomials that correspond to nonzero entries of ${\breve{\Omega}}^{\prime}$.
\end{definition}

The following lemma gives an upper bound on the maximum number of algebraically independent polynomials in the set $\mathcal{P}({\breve{\Omega}}^{\prime})$ for an arbitrary subtensor ${\breve{\Omega}}^{\prime} \in \mathbb{R}^{n_1 \times n_2 \times \cdots \times n_{d-1} \times t}$ of the constraint tensor. Note that $\mathcal{P}({\breve{\Omega}}^{\prime})$ includes exactly $t$ polynomials as each subtensor belonging to $\mathbb{R}^{n_1 \times n_2 \times \cdots \times n_{d-1} \times 1}$ represents one polynomial.

\begin{lemma}\label{uppboundlemma}
Suppose that Assumption $1$ holds. Consider an arbitrary subtensor ${\breve{\Omega}}^{\prime} \in \mathbb{R}^{n_1 \times n_2 \times \cdots \times n_{d-1} \times t}$ of the constraint tensor ${\breve{\Omega}}$. The maximum number of algebraically independent polynomials in $\mathcal{P}({\breve{\Omega}}^{\prime})$ is at most
\begin{eqnarray}\label{uppboundindp}
r \left( \left( \sum_{i=1}^{d-1} m_i (\breve{\Omega}^{\prime}) \right) -   \text{min} \left\{ \text{max} \left\{ m_1 (\breve{\Omega}^{\prime}) , \dots ,m_{d-1}(\breve{\Omega}^{\prime}) \right\} ,r \right\} - (d-2) \right).
\end{eqnarray}
\end{lemma}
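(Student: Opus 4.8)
The plan is to bound the maximum number of algebraically independent polynomials in $\mathcal{P}(\breve{\Omega}')$ by the number of free variables that actually occur in these polynomials, invoking Fact $2$: among generically-chosen polynomials the number of algebraically independent ones cannot exceed the number of variables on which they jointly depend. Since $\mathcal{U}$ is generic, this applies after we use Lemma \ref{assum} to solve for $\mathbf{A}_d$ and substitute it, so that every polynomial in $\mathcal{P}(\breve{\Omega}')$ becomes a rational function of the entries of $\mathbf{A}_1,\dots,\mathbf{A}_{d-1}$ alone. The whole argument thus reduces to a careful count of (appearing variables) minus (variables frozen by a canonical decomposition).

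First I would count the appearing variables. By the construction of the constraint tensor, each slice of $\breve{\Omega}'$ in $\mathbb{R}^{n_1\times\cdots\times n_{d-1}\times 1}$ carries, besides its single polynomial entry, the $r$ structural entries of the original slice of $\mathcal{U}$ that were used to solve for the corresponding row of $\mathbf{A}_d$. Consequently, after the substitution the only entries of $\mathbf{A}_i$ (for $i\le d-1$) that can occur in a polynomial of $\mathcal{P}(\breve{\Omega}')$ are the $\mathbf{a}_i^l(x_i)$ whose row index $x_i$ indexes a nonzero row of $\breve{\Omega}'_{(i)}$; since all $r$ columns may occur for each such row, the total number of appearing variables is at most $r\sum_{i=1}^{d-1}m_i(\breve{\Omega}')$.

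Next I would subtract the appearing variables frozen by a canonical decomposition. The maximum number of algebraically independent polynomials in $\mathcal{P}(\breve{\Omega}')$ is an intrinsic quantity of the manifold, so it is bounded above by the number of free variables appearing when the polynomials are expressed in \emph{any} canonical parametrization; Lemma \ref{patternCP} permits each pattern $\mathbf{P}_i$ to be an arbitrary submatrix of $\mathbf{A}_i$ and the distinguished index $j$ to be chosen freely, so I select the parametrization freezing the most appearing variables. I would take $j$ with $m_j(\breve{\Omega}')=\max_i m_i(\breve{\Omega}')$. Note that whenever $\breve{\Omega}'$ has a nonzero entry we have $m_i(\breve{\Omega}')\ge 1$ for every $i\in\{1,\dots,d-1\}$, so each dimension has an appearing row available: placing the $1\times r$ patterns $\mathbf{P}_i$ ($i\ne j$) on appearing rows freezes $r(d-2)$ appearing variables, and placing the $r\times r$ pattern $\mathbf{P}_j$ on appearing rows freezes $r\cdot\min\{m_j,r\}$ of them ($r$ distinct appearing rows when $m_j\ge r$, and all $m_j$ available rows otherwise).

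Subtracting the frozen variables from the appearing total gives exactly the claimed bound $r\bigl(\sum_{i=1}^{d-1}m_i(\breve{\Omega}')-\min\{\max_i m_i(\breve{\Omega}'),r\}-(d-2)\bigr)$, which is nonnegative because $\sum_{i\ne j}m_i(\breve{\Omega}')\ge d-2$. I expect the main obstacle to be the justification that this count of algebraically independent polynomials is invariant under the choice of canonical representative, which is precisely what legitimizes the adaptive selection of $j$ and of the frozen rows and converts a data-dependent freezing count into the clean term $\min\{\max_i m_i(\breve{\Omega}'),r\}$. A secondary point needing care is confirming that the substitution of $\mathbf{A}_d$ introduces no dependence on rows outside those tallied by the $m_i(\breve{\Omega}')$, which holds exactly because the constraint tensor keeps each polynomial together with its $r$ structural entries in the same slice.
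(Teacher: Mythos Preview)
Your proposal is correct and follows essentially the same approach as the paper: both arguments invoke Fact~2 to bound the number of algebraically independent polynomials by the count of involved variables, compute $r\sum_{i=1}^{d-1}m_i(\breve{\Omega}')$ as the total involved entries of $\mathbf{A}_1,\dots,\mathbf{A}_{d-1}$, and then subtract the entries frozen by the canonical pattern of Lemma~\ref{patternCP} with $j$ chosen to maximize the number of frozen appearing entries. Your write-up is somewhat more explicit than the paper's in justifying why one may adaptively choose the canonical representative (and hence $j$ and the frozen rows) and why the substitution of $\mathbf{A}_d$ introduces no variables outside the rows counted by $m_i(\breve{\Omega}')$, but the underlying argument is the same.
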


\begin{proof}
As a consequence of Fact $2$, the maximum number of algebraically independent polynomials in a subset of polynomials of $\mathcal{P}({\breve{\Omega}}^{\prime})$ is at most equal to the total number of variables that are involved in the corresponding polynomials. Note that by observing the structure of \eqref{CPdecompolyeq} and Fact $1$, the number of entries of $\mathbf{A}_i$ that are involved in the polynomials $\mathcal{P}({\breve{\Omega}}^{\prime})$ is equal to $r m_i (\breve{\Omega}^{\prime})$, $i=1,\dots,d-1$. Therefore, the total number of entries of $\mathbf{A}_1, \dots , \mathbf{A}_{d-1}$ that are involved in the polynomials $\mathcal{P}({\breve{\Omega}}^{\prime})$ is equal to $r  \left( \sum_{i=1}^{d-1} m_i (\breve{\Omega}^{\prime}) \right)$. However, some of the entries of $\mathbf{A}_1, \dots , \mathbf{A}_{d-1}$ are known and depending on the equivalence class we should subtract them from the total number of involved entries.

For a fixed number $j$ in Lemma \ref{patternCP}, it is easily verified that the total number of variables (unknown entries) of $\mathbf{A}_1, \dots , \mathbf{A}_{d-1}$ that are involved in the polynomials $\mathcal{P}({\breve{\Omega}}^{\prime})$ is equal to $r \sum_{i \in J} \left( m_i (\breve{\Omega}^{\prime}) - 1  \right)^+ + r \left( m_j (\breve{\Omega}^{\prime}) - r  \right)^+ $, with $J= \{1,\dots,d-1\} \backslash \{j\} $. Note that $\left( m_i (\breve{\Omega}^{\prime}) - 1  \right)^+ = m_i (\breve{\Omega}^{\prime}) - 1$, $\left( m_j (\breve{\Omega}^{\prime}) - r  \right)^+ =  m_j (\breve{\Omega}^{\prime}) - \text{min} \left\{ m_{j}(\breve{\Omega}^{\prime}) ,r \right\}$. However, $j$ is not a fixed number in general. Therefore, the maximum number of known entries of $\mathbf{A}_1, \dots , \mathbf{A}_{d-1}$ that are involved in the polynomials $\mathcal{P}({\breve{\Omega}}^{\prime})$ is equal to \\ $r \left( \text{min} \left\{ \text{max} \left\{ m_1 (\breve{\Omega}^{\prime}) , \dots ,m_{d-1}(\breve{\Omega}^{\prime}) \right\} ,r \right\} - (d-2) \right) $, which results that the number of variables of $\mathbf{A}_1, \dots , \mathbf{A}_{d-1}$ that are involved in the polynomials $\mathcal{P}({\breve{\Omega}}^{\prime})$ is equal to \eqref{uppboundindp}.
\end{proof}

The set of polynomials corresponding to ${\breve{\Omega}}^{\prime}$, i.e., $\mathcal{P}({\breve{\Omega}}^{\prime})$ is called minimally algebraically dependent if the polynomials in $\mathcal{P}({\breve{\Omega}}^{\prime})$ are algebraically dependent but polynomials in every of its proper subsets are algebraically independent. The following lemma which is Lemma $3$ in \cite{ashraphijuo}, provides an important property about a set of minimally algebraically dependent $\mathcal{P}({\breve{\Omega}}^{\prime})$. This lemma will be used later to derive the maximum number of algebraically independent polynomials in $\mathcal{P}({\breve{\Omega}}^{\prime})$.

\begin{lemma}\label{minimdeppolylemma}
Suppose that Assumption $1$ holds. Consider an arbitrary subtensor ${\breve{\Omega}}^{\prime} \in \mathbb{R}^{n_1 \times n_2 \times \cdots \times n_{d-1} \times t}$ of the constraint tensor ${\breve{\Omega}}$. Assume that polynomials in $\mathcal{P}({\breve{\Omega}}^{\prime})$ are minimally algebraically dependent. Then, the number of variables (unknown entries) of $\mathbf{A}_1, \dots , \mathbf{A}_{d-1}$ that are involved in $\mathcal{P}({\breve{\Omega}}^{\prime})$ is equal to $t-1$.
\end{lemma}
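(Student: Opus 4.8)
The plan is to prove the two inequalities $q \le t-1$ and $q \ge t-1$ separately, where $q$ denotes the number of variables (unknown entries of $\mathbf{A}_1,\dots,\mathbf{A}_{d-1}$) involved in the $t$ polynomials of $\mathcal{P}(\breve{\Omega}')$. Recall that, by the construction of the constraint tensor, $\mathcal{P}(\breve{\Omega}')$ consists of exactly $t$ polynomials, one per slice in $\mathbb{R}^{n_1\times\cdots\times n_{d-1}\times 1}$, and that by Fact $1$ the variables occurring in each polynomial are read off directly from the coordinates of the nonzero entries of its slice.

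First I would establish the lower bound $q\ge t-1$. By the definition of minimal algebraic dependence, every proper subset of $\mathcal{P}(\breve{\Omega}')$ is algebraically independent; in particular, any choice of $t-1$ of the polynomials is algebraically independent. By Fact $2$, an algebraically independent collection of $t-1$ polynomials must involve at least $t-1$ variables, since otherwise, having at most $t-2$ variables, they would be algebraically dependent. As the variables occurring in these $t-1$ polynomials form a subset of the $q$ variables occurring in all $t$, we conclude $q\ge t-1$. This direction is essentially the content already packaged in Lemma~\ref{uppboundlemma}, applied to the subtensor carrying these $t-1$ polynomials.

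Next I would establish the upper bound $q\le t-1$. The starting point is the Jacobian criterion for algebraic (in)dependence: since the full set of $t$ polynomials is algebraically dependent while every $(t-1)$-subset is independent, the $t\times q$ Jacobian $J$ with rows $\nabla p_1,\dots,\nabla p_t$ has rank exactly $t-1$, and there is a dependence vector $c=(c_1,\dots,c_t)$ with $\sum_i c_i\nabla p_i=0$ in which, by minimality, every $c_i$ is nonzero. From this I would first extract the structural fact that no variable is private to a single polynomial: if some variable appeared only in $p_k$, the column of $J$ for that variable would have its unique nonzero entry in row $k$, and combined with the rank-$(t-1)$ block coming from the independent set $\{p_i : i\neq k\}$ this would force $\operatorname{rank}(J)=t$, contradicting dependence. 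I would then invoke Fact $2$ in the remaining direction: because the $t$ polynomials are algebraically dependent and are drawn with generic coefficients from the CP model, Bernstein's theorem forces them to involve at most $t-1$ variables. Combining the two bounds yields $q=t-1$.

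I expect the upper bound to be the main obstacle. Pure algebraic-matroid reasoning only reproduces $q\ge t-1$: a rank-$(t-1)$ Jacobian with $q\ge t$ columns is perfectly consistent, corresponding to positive-dimensional generic fibers of the map $\vec{v}\mapsto(p_1(\vec{v}),\dots,p_t(\vec{v}))$. Ruling this out genuinely requires the specific multilinear structure of the CP polynomials together with the genericity of $\mathcal{U}$, which is exactly the regime where Fact $2$ (Bernstein's theorem) applies; the delicate point is to verify that, once no variable is private, the generic right-hand sides prevent any excess variable, so that dependence collapses the variable count to exactly $t-1$. This is the step where the argument parallels, but must be re-derived for the CP setting from, Lemma~$3$ of the companion matrix/Tucker analysis.
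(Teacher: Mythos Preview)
The paper does not give its own proof of this lemma: it is stated verbatim as Lemma~3 of \cite{ashraphijuo} and simply imported. So there is no in-paper argument to compare your proposal against; what you have written is an attempt to supply a proof where the paper defers to the Tucker-rank companion.

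Your lower bound $q\ge t-1$ is correct and is exactly the standard argument: drop one polynomial, the remaining $t-1$ are independent by minimality, and Fact~2 (the easy direction) forces at least $t-1$ variables.

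The upper bound $q\le t-1$ is where your proposal has a genuine gap, and you are candid about it. Two comments. First, your Jacobian argument that no variable is private to a single polynomial is correct, but it does not by itself bound $q$; one can have $t$ rows of rank $t-1$ with all $c_i\neq 0$ and arbitrarily many columns, none of which is supported on a single row. So ``no private variable'' is a necessary structural observation but not the mechanism that forces $q\le t-1$. Second, your appeal to Fact~2 ``in the remaining direction'' is not justified as stated. Fact~2, as formulated in the paper, asserts that $n$ polynomials in $n$ variables with generic coefficients are independent; it says nothing about $n$ polynomials in more than $n$ variables, and, more importantly, the CP polynomials are not polynomials with generic coefficients. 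Their monomial support is rigidly fixed by the multilinear structure of \eqref{CPdecompolyeq} and by the substitution of $\mathbf{A}_d$; only the constant terms (the observed $\mathcal{U}$-values) are generic. So the contrapositive you want, ``dependent and generic $\Rightarrow$ at most $t-1$ variables,'' requires a CP-specific verification that genericity of the data alone is enough to force independence whenever $q\ge t$. You acknowledge precisely this (``must be re-derived for the CP setting''), but you do not carry it out; the deferred step is the entire content of the upper bound, so the proposal as written stops short of a proof at its hardest point. This is consistent with the paper's choice to cite rather than reprove the lemma.
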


Given an arbitrary subtensor ${\breve{\Omega}}^{\prime} \in \mathbb{R}^{n_1 \times n_2 \times \cdots \times n_{d-1} \times t}$ of the constraint tensor ${\breve{\Omega}}$, we are interested in obtaining the maximum number of algebraically independent polynomials in $\mathcal{P}({\breve{\Omega}}^{\prime})$ based on the structure of nonzero entries of ${\breve{\Omega}}^{\prime}$. The next lemma can be used to characterize this number in terms of a simple geometric structure of nonzero entries of ${\breve{\Omega}}^{\prime}$.

\begin{lemma}\label{characnumindpolylemma}
Suppose that Assumption $1$ holds. Consider an arbitrary subtensor ${\breve{\Omega}}^{\prime} \in \mathbb{R}^{n_1 \times n_2 \times \cdots \times n_{d-1} \times t}$ of the constraint tensor ${\breve{\Omega}}$. The polynomials in $\mathcal{P}({\breve{\Omega}}^{\prime})$ are algebraically independent if and only if for any $ t^{\prime} \in \{1,\dots,t\}$ and any subtensor ${\breve{\Omega}}^{\prime \prime} \in \mathbb{R}^{n_1 \times n_2 \times \cdots \times n_{d-1} \times t^{\prime}}$ of ${\breve{\Omega}}^{\prime}$ we have
\begin{eqnarray}\label{ineqindppoly}
r \left( \left( \sum_{i=1}^{d-1} m_i (\breve{\Omega}^{\prime \prime}) \right) -   \text{min} \left\{ \text{max} \left\{ m_1 (\breve{\Omega}^{\prime \prime}) , \dots ,m_{d-1}(\breve{\Omega}^{\prime \prime}) \right\} ,r \right\} - (d-2) \right) \geq t^{\prime}.
\end{eqnarray}
\end{lemma}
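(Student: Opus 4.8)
The plan is to prove the two implications separately, relying on the exact variable count that is established inside the proof of Lemma \ref{uppboundlemma} together with the structural property of Lemma \ref{minimdeppolylemma}. The key preliminary observation I would record is that the left-hand side of \eqref{ineqindppoly} is not merely an upper bound but is \emph{exactly} the number of variables (unknown entries of $\mathbf{A}_1,\dots,\mathbf{A}_{d-1}$) that appear in the polynomials $\mathcal{P}(\breve{\Omega}^{\prime\prime})$, since that is precisely what is computed in the proof of Lemma \ref{uppboundlemma} before invoking Fact $2$. Combining this identification with Fact $2$ yields the template estimate that for any subtensor the number of algebraically independent polynomials is at most the left-hand side of \eqref{ineqindppoly}. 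I would also note that any choice of $t^{\prime}$ of the $t$ slices (in dimension $d$) of $\breve{\Omega}^{\prime}$ is itself a legitimate subtensor $\breve{\Omega}^{\prime\prime}\in\mathbb{R}^{n_1\times\cdots\times n_{d-1}\times t^{\prime}}$, so that subsets of $\mathcal{P}(\breve{\Omega}^{\prime})$ and subtensors of $\breve{\Omega}^{\prime}$ are in bijection, each slice carrying exactly one polynomial.

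For the necessity direction, I would suppose the $t$ polynomials in $\mathcal{P}(\breve{\Omega}^{\prime})$ are algebraically independent, fix any $t^{\prime}\in\{1,\dots,t\}$, and fix any subtensor $\breve{\Omega}^{\prime\prime}$ with $t^{\prime}$ slices. Because a subset of an algebraically independent set remains algebraically independent, the $t^{\prime}$ polynomials of $\mathcal{P}(\breve{\Omega}^{\prime\prime})$ are algebraically independent, and applying the template estimate to $\breve{\Omega}^{\prime\prime}$ shows that $t^{\prime}$ is at most the left-hand side of \eqref{ineqindppoly}, which is exactly the claimed inequality.

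For the sufficiency direction I would argue by contraposition. Assuming $\mathcal{P}(\breve{\Omega}^{\prime})$ is algebraically dependent, it contains a minimally algebraically dependent subset; grouping the corresponding slices into a subtensor $\breve{\Omega}^{\prime\prime}\in\mathbb{R}^{n_1\times\cdots\times n_{d-1}\times t^{\prime}}$ makes $\mathcal{P}(\breve{\Omega}^{\prime\prime})$ minimally algebraically dependent with $t^{\prime}$ polynomials. Lemma \ref{minimdeppolylemma} then gives that the number of variables involved in $\mathcal{P}(\breve{\Omega}^{\prime\prime})$ equals $t^{\prime}-1$. Since this variable count is precisely the left-hand side of \eqref{ineqindppoly}, I obtain that the left-hand side equals $t^{\prime}-1<t^{\prime}$, so \eqref{ineqindppoly} fails for this $\breve{\Omega}^{\prime\prime}$. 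Contrapositively, if \eqref{ineqindppoly} holds for every subtensor, then $\mathcal{P}(\breve{\Omega}^{\prime})$ is algebraically independent.

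The hard part will not be either implication in isolation but rather securing the exact identification underlying both: that the left-hand side of \eqref{ineqindppoly} is the genuine variable count, where the $\text{min}\{\text{max}\{\cdots\},r\}$ term encodes the worst-case choice of the free index $j$ in the equivalence class of Lemma \ref{patternCP}, and that this \emph{same} quantity is the one appearing in Lemma \ref{uppboundlemma} and compatible with Lemma \ref{minimdeppolylemma}. Once that bookkeeping is pinned down, both directions collapse to a clean comparison between the variable count and $t^{\prime}$, with the only nontrivial external input being the structural fact about minimally dependent sets provided by Lemma \ref{minimdeppolylemma}.
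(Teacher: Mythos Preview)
Your proposal is correct and follows essentially the same route as the paper: the necessity direction invokes Lemma \ref{uppboundlemma} on an arbitrary subtensor (the paper phrases this by contradiction, you do it directly, which is equivalent), and the sufficiency direction passes to a minimally algebraically dependent subset and applies Lemma \ref{minimdeppolylemma}. The only minor discrepancy is that the paper records the left-hand side of \eqref{ineqindppoly} as a \emph{lower bound} on the variable count (since the choice of $j$ in Lemma \ref{patternCP} is not fixed), yielding $\text{LHS}\leq t^{\prime}-1$ rather than your ``$=$''; this does not affect the conclusion, and your final paragraph shows you are aware of this $j$-dependence.
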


\begin{proof}
First, assume that all polynomials in $\mathcal{P}({\breve{\Omega}}^{\prime})$ are algebraically independent. Also, by contradiction assume that there exists a subtensor ${\breve{\Omega}}^{\prime \prime} \in \mathbb{R}^{n_1 \times n_2 \times \cdots \times n_{d-1} \times t^{\prime}}$ of ${\breve{\Omega}}^{\prime}$ that \eqref{ineqindppoly} does not hold for. Note that $\mathcal{P}({\breve{\Omega}}^{\prime \prime})$ includes $t^{\prime}$ polynomials. On the other hand, according to Lemma \ref{uppboundlemma}, the maximum number of algebraically independent polynomials in $\mathcal{P}({\breve{\Omega}}^{\prime \prime})$ is no greater than the LHS of \eqref{ineqindppoly}, and therefore the polynomials in $\mathcal{P}({\breve{\Omega}}^{\prime \prime})$ are not algebraically independent. Hence, the polynomials in $\mathcal{P}({\breve{\Omega}}^{ \prime})$ are not algebraically independent as well.

In order to prove the other side of the statement, assume that the polynomials in $\mathcal{P}({\breve{\Omega}}^{\prime})$ are algebraically dependent. Hence, there exists a subset of the polynomials that are minimally algebraically dependent and let us denote it by $\mathcal{P}({\breve{\Omega}}^{\prime \prime})$, where ${\breve{\Omega}}^{\prime \prime} \in \mathbb{R}^{n_1 \times n_2 \times \cdots \times n_{d-1} \times t^{\prime}}$ is a subtensor of ${\breve{\Omega}}^{\prime}$. As stated in Lemma \ref{minimdeppolylemma}, the number of involved variables in polynomials in $\mathcal{P}({\breve{\Omega}}^{\prime \prime})$ is equal to $t^{\prime}-1$. On the other hand, in the proof of Lemma \ref{uppboundlemma}, we showed that the number involved variables is at least equal the LHS of \eqref{ineqindppoly}. Therefore, the LHS of \eqref{ineqindppoly} is less than or equal to $t^{\prime}-1$ or equivalently 
\begin{eqnarray}\label{ineqindppolynot}
r \left( \left( \sum_{i=1}^{d-1} m_i (\breve{\Omega}^{\prime \prime}) \right) -   \text{min} \left\{ \text{max} \left\{ m_1 (\breve{\Omega}^{\prime \prime}) , \dots ,m_{d-1}(\breve{\Omega}^{\prime \prime}) \right\} ,r \right\} - (d-2) \right) < t^{\prime}.
\end{eqnarray}
\end{proof}

Finally, the following theorem characterizes the necessary and sufficient condition on $\breve{\Omega}$ for finite completability of the sampled tensor $\mathcal{U}$.

\begin{theorem}\label{mainThm}
Suppose that Assumption $1$ holds. For almost every $\mathcal{U}$, the sampled tensor $\mathcal{U}$ is finitely completable if and only if $\breve{\Omega}$ contains a subtensor ${\breve{\Omega}}^{\prime} \in \mathbb{R}^{n_1 \times n_2 \times \cdots \times n_{d-1} \times t}$ such that (i) $t = r(\sum_{i=1}^{d-1} n_i) - r^2 - r(d-2)$ and (ii) for any $ t^{\prime} \in \{1,\dots,t\}$ and any subtensor ${\breve{\Omega}}^{\prime \prime} \in \mathbb{R}^{n_1 \times n_2 \times \cdots \times n_{d-1} \times t^{\prime}}$ of ${\breve{\Omega}}^{\prime}$,  \eqref{ineqindppoly} holds.
\end{theorem}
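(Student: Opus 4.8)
The plan is to combine Lemma \ref{thmnumpolyind} with Lemma \ref{characnumindpolylemma} to bridge between finite completability and the combinatorial structure of $\breve{\Omega}$. By Lemma \ref{thmnumpolyind}, finite completability of $\mathcal{U}$ (for almost every $\mathcal{U}$) is equivalent to the statement that the maximum number of algebraically independent polynomials in $\mathcal{P}(\Omega)=\mathcal{P}(\breve{\Omega})$ equals $r(\sum_{i=1}^{d-1} n_i) - r^2 - r(d-2)$. Denote this target number by $t$. Since this $t$ is exactly the dimension of the variety $\mathcal{S}$, the maximum number of algebraically independent polynomials can never exceed $t$; hence the condition reduces to whether $\mathcal{P}(\breve{\Omega})$ \emph{achieves} $t$ algebraically independent polynomials. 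The goal, then, is to show that achieving $t$ independent polynomials is equivalent to the existence of a subtensor $\breve{\Omega}^{\prime}$ of the stated size $t$ (in the last dimension) all of whose polynomials are algebraically independent.

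The forward direction would go as follows. Suppose $\mathcal{U}$ is finitely completable. By Lemma \ref{thmnumpolyind} there exist $t$ algebraically independent polynomials in $\mathcal{P}(\breve{\Omega})$. Each polynomial corresponds to a subtensor of $\breve{\Omega}$ lying in $\mathbb{R}^{n_1 \times \cdots \times n_{d-1} \times 1}$, so collecting the $t$ subtensors associated with these independent polynomials and stacking them along the last dimension yields a subtensor $\breve{\Omega}^{\prime} \in \mathbb{R}^{n_1 \times \cdots \times n_{d-1} \times t}$ whose $t$ polynomials $\mathcal{P}(\breve{\Omega}^{\prime})$ are precisely the chosen independent set. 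This establishes condition (i) on the size and independence of $\mathcal{P}(\breve{\Omega}^{\prime})$; then Lemma \ref{characnumindpolylemma}, applied to $\breve{\Omega}^{\prime}$, immediately gives condition (ii), namely that \eqref{ineqindppoly} holds for every subtensor $\breve{\Omega}^{\prime\prime}$ of $\breve{\Omega}^{\prime}$.

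For the converse, suppose $\breve{\Omega}$ contains a subtensor $\breve{\Omega}^{\prime}$ of last-dimension size $t$ satisfying (ii). Then Lemma \ref{characnumindpolylemma} (the "if" direction, using the inequality \eqref{ineqindppoly} for all $t^{\prime} \le t$) guarantees that all $t$ polynomials in $\mathcal{P}(\breve{\Omega}^{\prime})$ are algebraically independent. Since $\mathcal{P}(\breve{\Omega}^{\prime}) \subseteq \mathcal{P}(\breve{\Omega}) = \mathcal{P}(\Omega)$, the maximum number of algebraically independent polynomials in $\mathcal{P}(\Omega)$ is at least $t$; combined with the upper bound of $t$ coming from the dimension of $\mathcal{S}$, the maximum equals exactly $t$. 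Invoking Lemma \ref{thmnumpolyind} once more then yields finite completability for almost every $\mathcal{U}$.

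The argument is essentially a clean assembly of the two key lemmas, so I do not expect deep obstacles; the main point requiring care is the observation that the target count $t$ is simultaneously the dimension of $\mathcal{S}$ (hence a hard upper bound on the number of independent polynomials, so that finite completability is equivalent to \emph{attaining} rather than merely \emph{bounding} this count) and the prescribed last-dimension size of $\breve{\Omega}^{\prime}$. I would make explicit that a selected set of algebraically independent polynomials can always be realized as the polynomial set of a genuine subtensor $\breve{\Omega}^{\prime}$ of $\breve{\Omega}$ obtained by stacking the corresponding $\mathbb{R}^{n_1 \times \cdots \times n_{d-1} \times 1}$ slices, so that Lemma \ref{characnumindpolylemma} applies verbatim with $\breve{\Omega}^{\prime}$ in place of the generic subtensor. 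With that identification in hand, the equivalence follows directly and the proof can be stated in a few lines.
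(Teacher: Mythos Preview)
Your proposal is correct and follows essentially the same approach as the paper: combine Lemma~\ref{thmnumpolyind} (finite completability $\Leftrightarrow$ exactly $t$ algebraically independent polynomials in $\mathcal{P}(\breve{\Omega})$) with Lemma~\ref{characnumindpolylemma} (algebraic independence of $\mathcal{P}(\breve{\Omega}^{\prime})$ $\Leftrightarrow$ condition (ii)). The paper's proof is terser and does not spell out the upper bound on the number of independent polynomials or the slice-stacking identification, but the logical structure is identical.
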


\begin{proof}
Lemma \ref{thmnumpolyind} states that for almost every $\mathcal{U}$, there exist finitely many completions of the sampled tensor if and only if $\mathcal{P}(\breve{\Omega})$ includes $r(\sum_{i=1}^{d-1} n_i) - r^2 - r(d-2)$ algebraically independent polynomials. Moreover, according to Lemma \ref{characnumindpolylemma}, polynomials corresponding to a subtensor ${\breve{\Omega}}^{\prime} \in \mathbb{R}^{n_1 \times n_2 \times \cdots \times n_{d-1} \times t}$ of the constraint tensor are algebraically independent if and only if condition (ii) of the statement of the Theorem holds. Therefore, for almost every $\mathcal{U}$, conditions (i) and (ii) hold if and only if the sampled tensor $\mathcal{U}$ is finitely completable.
\end{proof}

\section{Probabilistic Conditions for Finite Completability}\label{sec3}

Assume that the entries of the tensor are sampled independently with probability $p$. In this section, we are interested in obtaining a condition in terms of the number of samples, i.e., the sampling probability, to ensure the finite completability of the sampled tensor with high probability. In Section \ref{subsunf}, we apply the existing results on the Grassmannian manifold in \cite{charact} on any of the unfoldings of the sampled tensor to derive the mentioned probabilistic condition. In Section \ref{subsCP}, we obtain the conditions on the number of samples to ensure that conditions (i) and (ii) in the statement of Theorem \ref{mainThm} hold with high probability or in other words, to ensure the finite completability with high probability. For the notational simplicity in this section, we assume that $n_1=n_2=\dots = n_d$, i.e., $\mathcal{U} \in \mathbb{R}^{n \times n \times \dots \times n}$.

\subsection{Unfolding Approach}\label{subsunf}

In this section, we are interested in applying the existing analysis based on the Grassmannian manifold to obtain probabilistic conditions on the sampling pattern for finite completability with high probability. The following theorem restates Theorem $3$ in \cite{charact}.

\begin{theorem}\label{Grassmannianprob}
Consider an $n \times N$ matrix with the given rank $k$ and let $0 < \epsilon < 1$ be given. Suppose $k \leq \frac{n}{6}$ and that each {\bf column} of the sampled matrix is observed in at least $l$ entries, distributed uniformly at random and independently across entries, where
\begin{eqnarray}\label{matrixprobineq}
l > \max\left\{12 \ \log \left( \frac{n}{\epsilon} \right) + 12, 2k\right\}. 
\end{eqnarray}
Also, assume that $ k(n-k) \leq N$. Then, with probability at least $1 - \epsilon$, the sampled matrix will be finitely completable.
\end{theorem}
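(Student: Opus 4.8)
Since Theorem~\ref{Grassmannianprob} is exactly the matrix ($d=2$) instance of the finite-completability problem, the plan is to reduce it to the deterministic characterization already in hand and then argue that this characterization is met with probability at least $1-\epsilon$. First I would specialize Theorem~\ref{mainThm} and inequality~\eqref{ineqindppoly} to $d=2$. Writing $k$ for the rank, the sum in \eqref{ineqindppoly} collapses to the single term $m_1$, the inner $\max$ is just $m_1$, and $(d-2)=0$; hence the sampled matrix is finitely completable iff its constraint matrix $\breve{\Omega}$ contains $k(n-k)$ columns (each carrying exactly $k+1$ ones) for which every subset of $t'$ of them occupies at least $k+t'/k$ distinct rows, i.e. $k\bigl(m_1(\breve{\Omega}'')-k\bigr)\ge t'$. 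So it suffices to show that, with probability $\ge 1-\epsilon$, a random pattern whose every column carries $l$ observations admits such a collection of $k(n-k)$ columns.

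The existence of that collection is an assignment question that I would resolve with a Hall/deficiency (equivalently, max-flow) argument. One builds $\breve{\Omega}'$ by charging each of the $k(n-k)$ column-slots to the distinct rows it must newly expand; a valid selection exists unless some row-set $R$ is deficient, meaning that the columns reaching beyond $R$ are too scarce to supply the expansion that the complement of $R$ demands. Since $l>2k$, each observed column of $\Omega$ spawns $l-k>k$ columns of $\breve{\Omega}$, so with $N\ge k(n-k)$ the pool of candidate columns already exceeds $k(n-k)$ by a wide margin; the only genuine obstruction is therefore a set $R$ of $a$ rows in which an excess of columns concentrate all of their observations, leaving too few columns to cover the rows outside $R$. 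The probabilistic task thus reduces to ruling out such concentrated row-sets for every moderate $a$.

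For that I would run a union bound over the deficient row-sets. A fixed column of $\Omega$ places all $l$ of its observations inside a fixed $R$ with $|R|=a$ only with probability $\binom{a}{l}/\binom{n}{l}\le (a/n)^{l}$, and these events are independent across the $N$ columns, so a binomial-tail estimate bounds the probability that more columns than the expansion budget permits are confined to $R$. Summing over the $\binom{n}{a}\le n^{a}$ choices of $R$ and over $a$, the combinatorial growth is dominated by the decay $(a/n)^{l}$ precisely because $l>12\log(n/\epsilon)+12$; the hypothesis $k\le n/6$ keeps the per-$a$ thresholds $k(a-k)$ in the favorable regime, and $k(n-k)\le N$ guarantees the pool is large enough for the selection to close. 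Assembling the two parts, with probability $\ge1-\epsilon$ the $d=2$ deterministic condition of Theorem~\ref{mainThm} holds, which yields finite completability.

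The step I expect to be the main obstacle is the structural translation in the second paragraph: converting the ``no concentrated row-set'' guarantee into an actual selection of $k(n-k)$ columns for which \eqref{ineqindppoly} holds across all $t'$ simultaneously requires the correct deficiency/matroid-union form of Hall's theorem, together with careful bookkeeping relating concentration of the original columns of $\Omega$ to that of the split columns of $\breve{\Omega}$. The second delicate point is purely quantitative, namely tracking the constants so that the union bound closes at exactly the stated threshold $l>12\log(n/\epsilon)+12$ rather than at a larger constant; this mirrors the corresponding argument in \cite{charact}, whose bookkeeping I would follow for the precise constants.
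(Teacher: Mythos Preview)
The paper does not prove Theorem~\ref{Grassmannianprob} at all: it is introduced with the sentence ``The following theorem restates Theorem~3 in \cite{charact},'' and is simply quoted from Pimentel-Alarc\'on \emph{et al.}\ as an external input to the unfolding approach of Section~\ref{subsunf}. There is therefore nothing in the present paper to compare your argument against.

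Your plan---specialize the deterministic CP condition of Theorem~\ref{mainThm} to $d=2$ and then run a Hall/union-bound argument to verify that condition with high probability---is a legitimate alternative route, and it is essentially how \cite{charact} itself proceeds (and how the closely related Lemma~\ref{genlemm}, quoted here from \cite{ashraphijuo2}, is proved). Two cautions. First, there is a mild circularity of presentation: in this paper Theorem~\ref{Grassmannianprob} is placed \emph{before} and logically independent of the CP machinery, so invoking Theorem~\ref{mainThm} to establish it reverses the paper's dependency structure; that is fine mathematically but should be flagged. Second, the precise constant $12\log(n/\epsilon)+12$ is inherited verbatim from \cite{charact}; the analogous in-house bound (Lemma~\ref{genlemm}) carries the different threshold $9\log(n/\epsilon)+3\log(k/\epsilon)+6$, so if you rerun the union bound from scratch you should expect to land on constants of that form rather than recover the $12/12$ exactly. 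If matching the stated constants matters, you will indeed have to follow the bookkeeping in \cite{charact} rather than the variants in this paper.
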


Observe that in the case of $1< k < n-1$, the assumption $ k(n-k) \leq N$ results that $n < N$ which is very important to check when we apply this theorem. In order to use Theorem \ref{Grassmannianprob}, we need the following lemma to obtain an upper bound on the rank of unfoldings of $\mathcal{U}$.

\begin{lemma}\label{rankunfsmCP}
Consider an arbitrary nonempty $I \subset \{1,\dots,d\}$ and recall that $r$ denotes the CP rank of $\mathcal{U}$. Then, $\text{rank} \left( \mathbf{\widetilde U}_{(I)} \right) \leq r$.
\end{lemma}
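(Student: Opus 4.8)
The plan is to exploit the fact that every unfolding operation is linear (a mere re-indexing of entries), so it commutes with the sum in the CP decomposition \eqref{CPdecom}, and that the unfolding of a single rank-$1$ tensor is a rank-$1$ matrix. Combining these two observations immediately bounds the rank of $\mathbf{\widetilde U}_{(I)}$ by the number of summands $r$.

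First I would fix the nonempty set $I = \{i_1,\dots,i_{|I|}\}$ with complement $\bar I = \{i_{|I|+1},\dots,i_d\}$ and write out the CP decomposition $\mathcal{U} = \sum_{l=1}^{r} \mathbf{a}_{1}^{l} \otimes \cdots \otimes \mathbf{a}_{d}^{l}$. Since the entrywise identity $\mathcal{U}(\vec{x}) = \mathbf{\widetilde U}_{(I)}({\widetilde M}_{I}(x_{i_1},\dots,x_{i_{|I|}}),{\widetilde M}_{\bar I}(x_{i_{|I|+1}},\dots,x_{i_d}))$ defining the unfolding is linear in $\mathcal{U}$, the unfolding of the sum equals the sum of the unfoldings of the individual rank-$1$ terms. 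So it suffices to analyze one term.

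Next I would show that the unfolding of a single rank-$1$ tensor $\mathbf{a}_{1}^{l} \otimes \cdots \otimes \mathbf{a}_{d}^{l}$ along $I$ is a rank-$1$ matrix. Using Fact $1$ and \eqref{CPdecompolyeq}, the $\vec{x}$-entry of this rank-$1$ tensor factors as $\left( \prod_{i \in I} \mathbf{a}_{i}^{l}(x_i) \right)\left( \prod_{i \in \bar I} \mathbf{a}_{i}^{l}(x_i) \right)$. Under the bijections ${\widetilde M}_{I}$ and ${\widetilde M}_{\bar I}$, the first factor depends only on the row index and the second only on the column index, so the corresponding matrix equals $\mathbf{v}_l \mathbf{w}_l^{\top}$, where $\mathbf{v}_l \in \mathbb{R}^{N_I}$ collects the values $\prod_{i \in I} \mathbf{a}_{i}^{l}(x_i)$ (the Kronecker product of the $\mathbf{a}_{i}^{l}$ for $i \in I$, reordered by ${\widetilde M}_{I}$) and $\mathbf{w}_l \in \mathbb{R}^{N_{\bar I}}$ collects $\prod_{i \in \bar I} \mathbf{a}_{i}^{l}(x_i)$. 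This is a rank-$1$ matrix. Hence $\mathbf{\widetilde U}_{(I)} = \sum_{l=1}^{r} \mathbf{v}_l \mathbf{w}_l^{\top}$ is a sum of $r$ rank-$1$ matrices, and by subadditivity of rank, $\text{rank}(\mathbf{\widetilde U}_{(I)}) \leq r$.

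The only step requiring care — and the one I would treat as the main (though modest) obstacle — is verifying that the index mappings ${\widetilde M}_{I}$ and ${\widetilde M}_{\bar I}$ indeed separate cleanly so that $\mathbf{v}_l$ is a function of the row index alone and $\mathbf{w}_l$ of the column index alone; once this bookkeeping of the bijective flattening is in place, the rank-$1$ factorization of each term and the final subadditivity bound are immediate.
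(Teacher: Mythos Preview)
Your proposal is correct and follows essentially the same route as the paper: both arguments start from the CP decomposition \eqref{CPdecom}, observe that the $I$-unfolding of each rank-$1$ term $\mathbf{a}_{1}^{l} \otimes \cdots \otimes \mathbf{a}_{d}^{l}$ is the outer product of the vectorizations of $\bigotimes_{i\in I}\mathbf{a}_i^l$ and $\bigotimes_{i\in \bar I}\mathbf{a}_i^l$ under the bijections ${\widetilde M}_I$ and ${\widetilde M}_{\bar I}$, and conclude that $\mathbf{\widetilde U}_{(I)}$ is a sum of $r$ rank-$1$ matrices. The paper frames the conclusion as exhibiting a CP decomposition \eqref{CPdecomunfold} of $\mathbf{\widetilde U}_{(I)}$ with $r$ terms, whereas you invoke subadditivity of rank, but this is only a cosmetic difference.
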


\begin{proof}
In order to show $\text{rank} \left( \mathbf{\widetilde U}_{(I)} \right) \leq r$, we show the existence of a CP decomposition of $\mathbf{\widetilde U}_{(I)}$ with $r$ components, i.e., we show that there exist $\mathbf{a}_{i}^{l} \in \mathbb{R}^{m_i}$ for $1 \leq i \leq 2$, $1 \leq l \leq r$, $m_1 \triangleq N_{I}$ and $m_2 \triangleq N_{\bar I}$ such that
\begin{eqnarray}\label{CPdecomunfold}
\mathbf{\widetilde U}_{(I)} = \sum_{l=1}^{r}  \mathbf{a}_{1}^{l} \otimes \mathbf{a}_{2}^{l}.
\end{eqnarray}

In order to do so, recall that since the CP rank of $\mathcal{U}$ is $r$, there exist $\mathbf{b}_{i}^{l} \in \mathbb{R}^{n_i}$ for $1 \leq i \leq d$ and $1 \leq l \leq r$ such that
\begin{eqnarray}\label{CPdecomcopy}
\mathcal{U} = \sum_{l=1}^{r}  \mathbf{b}_{1}^{l} \otimes \mathbf{b}_{2}^{l} \otimes \dots \otimes \mathbf{b}_{d}^{l}.
\end{eqnarray}
Define $\mathcal{A}_{1}^{l}=\mathbf{b}_{i_1}^{l} \otimes \dots \otimes \mathbf{b}_{i_{|I|}}^{l}$ and $\mathcal{A}_{2}^{l}=\mathbf{b}_{i_{|I|+1}}^{l} \otimes \dots \otimes \mathbf{b}_{i_d}^{l}$ for $1 \leq l \leq l$, where $I=\{{i_1},\dots,{i_{|I|}}\}$, $\bar I=\{{i_{|I|+1}},\dots,{i_d}\}$. Let $\mathbf{a}_{1}^{l}$ and $\mathbf{a}_{2}^{l}$ denote the vectorizations of $\mathcal{A}_{1}^{l}$ and $\mathcal{A}_{2}^{l} $ with the same bijective mappings  ${\widetilde M}_{I}: (x_{i_1},\dots,x_{i_{|I|}}) \rightarrow  \{1,2,\dots, N_I\}$ and ${\widetilde{{M}}}_{\bar I}: (x_{i_{|I|+1}},\ldots,x_{i_d}) \rightarrow  \{1,2,\dots, \bar N_{\bar I} \}$ of the unfolding $\mathbf{\widetilde U}_{(I)}$. Hence, there exist $\mathbf{a}_{i}^{l} \in \mathbb{R}^{m_i}$ for $1 \leq i \leq 2$, $1 \leq l \leq r$ such that \eqref{CPdecomunfold} holds.
\end{proof}

\begin{remark}\label{increasorder}
Assume that $k \leq k^{\prime} \leq \frac{n}{6}$, $l > \max\left\{12 \ \log \left( \frac{n}{\epsilon} \right) + 12, 2k^{\prime} \right\}$ and $ k^{\prime}(n-k^{\prime}) \leq N$. Then, we have $l > \max\left\{12 \ \log \left( \frac{n}{\epsilon} \right) + 12, 2k \right\}$ since $k \leq k^{\prime}$. Moreover, we have $ k(n-k) \leq N$ since $k+k^{\prime} < n$ which results $ k(n-k) < k^{\prime}(n-k^{\prime})$.
\end{remark}

\begin{lemma}\label{finunfoldappr}
Let $I=\{{i_1},\dots,{i_{|I|}}\}$ be an arbitrary nonempty and proper subset of $ \{1,\dots,d\}$. Assume that $|I| < \frac{d}{2}$ and $r \leq \frac{n}{6}$, where $r$ is the CP rank of the sampled tensor $\mathcal{U}$. Moreover, assume that each {\bf column} of $\mathbf{\widetilde U}_{(I)}$ is observed in at least $l$ entries, distributed uniformly at random and independently across entries, where
\begin{eqnarray}\label{unfoldingprobineq}
l > \max\left\{12 \ \log \left( \frac{N_I r}{\epsilon} \right) + 12, 2r \right\}. 
\end{eqnarray}
Then, with probability at least $1 - \epsilon$, the sampled tensor will be finitely completable.
\end{lemma}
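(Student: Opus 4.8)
The plan is to reduce the statement to the matrix result of Theorem \ref{Grassmannianprob} applied to the single unfolding $\mathbf{\widetilde U}_{(I)}$. Since $n_1=\dots=n_d=n$, this unfolding is an $N_I \times N_{\bar I}$ matrix with $N_I = n^{|I|}$ and $N_{\bar I}=n^{d-|I|}$, and by Lemma \ref{rankunfsmCP} its rank $k$ satisfies $k \le r$. I would view $\mathbf{\widetilde U}_{(I)}$ as the ``$n \times N$'' matrix of Theorem \ref{Grassmannianprob}, with the number of rows taken to be $N_I$ and the number of columns taken to be $N_{\bar I}$, and then verify that the hypotheses of that theorem hold for its true rank $k$.

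Because $k$ is not known exactly, only bounded by $r$, the verification is carried out for the surrogate rank $r$ and then transferred to $k$ via Remark \ref{increasorder}. First, the rank condition $r \le \frac{N_I}{6}$ follows from the assumption $r\le\frac{n}{6}$ together with $N_I=n^{|I|}\ge n$ (as $I$ is nonempty). Second, the sampling condition is immediate: the lemma postulates $l>\max\{12\log(N_I r/\epsilon)+12,\,2r\}$, and since $r\ge 1$ and $k\le r$ this already exceeds $\max\{12\log(N_I/\epsilon)+12,\,2k\}$, which is exactly what Theorem \ref{Grassmannianprob} requires once its ``$n$'' is identified with $N_I$; the spurious factor $r$ inside the logarithm only makes the hypothesis more comfortable. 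Third, the dimension condition $r(N_I-r)\le N_{\bar I}$ is where the assumption $|I|<\frac{d}{2}$ enters: it gives $d-|I|\ge|I|+1$, hence $N_{\bar I}=n^{d-|I|}\ge n\,n^{|I|}=n\,N_I$, and then $r(N_I-r)<r\,N_I\le\frac{n}{6}N_I<n\,N_I\le N_{\bar I}$. With these three facts established for the bound $r$, Remark \ref{increasorder} (with $k'=r$) yields the same hypotheses for the actual rank $k$, so Theorem \ref{Grassmannianprob} applies and, with probability at least $1-\epsilon$, the unfolding $\mathbf{\widetilde U}_{(I)}$ is finitely completable.

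It remains to transfer finite completability from the unfolding matrix to the tensor, which I expect to be the only delicate point. Unfolding is a fixed bijective reshaping of entries, so every CP-rank-$r$ completion $\mathcal{V}$ of $\mathcal{U}$ produces, through $\mathbf{\widetilde V}_{(I)}$, a completion of the sampled matrix $\mathbf{\widetilde U}_{(I)}$, and by Lemma \ref{rankunfsmCP} this matrix completion has rank at most $r$. Distinct tensors give distinct matrices under the reshaping, so the family of CP-rank-$r$ completions of $\mathcal{U}$ injects into the family of rank-$\le r$ completions of $\mathbf{\widetilde U}_{(I)}$. The previous step shows the latter family is finite: for almost every $\mathcal{U}$ the relevant unfolding rank is exactly $r$, and the lower strata are covered by the finitely-many-integer-ranks argument built into Remark \ref{increasorder}. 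Since finiteness is preserved under the injection, $\mathcal{U}$ admits only finitely many CP-rank-$r$ completions. The point to get right here is precisely this last implication---ensuring that matrix-level finiteness, which Theorem \ref{Grassmannianprob} certifies one rank at a time, bounds the entire family of CP-rank-$r$ tensor completions rather than a single rank stratum.
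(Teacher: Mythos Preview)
Your approach is the paper's: bound the unfolding rank via Lemma~\ref{rankunfsmCP}, verify the three hypotheses of Theorem~\ref{Grassmannianprob} for the $N_I\times N_{\bar I}$ unfolding (using $|I|<d/2$ to get $N_{\bar I}\ge nN_I$ and hence $r(N_I-r)\le N_{\bar I}$), invoke Remark~\ref{increasorder} to descend from the bound $r$ to the actual unfolding rank, and then observe that matrix-level finiteness forces tensor-level finiteness.

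The one point where you diverge is in calling the factor $r$ inside the logarithm ``spurious.'' In the paper it is not slack: the authors read $12\log(N_I r/\epsilon)=12\log\bigl(N_I/(\epsilon/r)\bigr)$, apply Theorem~\ref{Grassmannianprob} with failure probability $\epsilon/r$ at each candidate unfolding rank $r_I\in\{1,\dots,r\}$, and then take a union bound over these $r$ events to reach total failure probability $\epsilon$. Your later appeal to ``the finitely-many-integer-ranks argument built into Remark~\ref{increasorder}'' is misplaced---that remark merely transfers the \emph{hypotheses} of Theorem~\ref{Grassmannianprob} from a larger rank $k'$ down to a smaller $k\le k'$; it contains no union bound and does not by itself control multiple rank strata. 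So the factor $r$ is precisely what lets the paper cover every possible value of the unfolding rank simultaneously, and you should restore that reading rather than dismiss the factor as slack.
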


\begin{proof}
According to Lemma \ref{rankunfsmCP}, $r_I \triangleq \text{rank} \left( \mathbf{\widetilde U}_{(I)} \right) \leq r$. Note that $N_I \leq n^{\frac{d-1}{2}} = \frac{n^{\frac{d+1}{2}}}{n} \leq \frac{N_{\bar I}}{n} $ which results that $r(N_I -r) \leq N_{\bar I}$. Furthermore, according to Remark \ref{increasorder}, we have $r_I(N_I -r_I) \leq N_{\bar I}$ and also
\begin{eqnarray}\label{unfoldingprobineqcopy}
l > \max\left\{12 \ \log \left( \frac{N_I r}{\epsilon} \right) + 12, 2r_I \right\}. 
\end{eqnarray}
Therefore, according to Theorem \ref{Grassmannianprob}, $\mathbf{\widetilde U}_{(I)}$ is finitely completable for an arbitrary value of $r_I $ that belongs to $\{1,\dots,r\}$ with probability at least $1 - \frac{\epsilon}{r}$. Hence, with probability at least $1 - \epsilon$, for all possible values of $r_I$, $\mathbf{\widetilde U}_{(I)}$ is finitely completable, i.e., $\mathbf{\widetilde U}_{(I)}$ is finitely completable. In order to complete to proof, it suffices to observe that finite completability of any of the unfoldings of $\mathcal{U}$ results the finite completability of $\mathcal{U}$.
\end{proof}

\begin{remark}
In the case of $|I| > \frac{d}{2}$ in Lemma \ref{finunfoldappr}, we can simply consider the transpose of $\mathbf{\widetilde U}_{(I)}$ to have the similar results.
\end{remark}

\begin{remark}\label{numbofsampmatapp}
Lemma \ref{finunfoldappr} requires
\begin{eqnarray}\label{unfoldtotsampcase}
n^{d-|I|} \max\left\{12 \ \log \left( \frac{n^{|I|} r}{\epsilon} \right) + 12, 2r \right\}
\end{eqnarray}
samples in total to ensure the finite completability of $\mathcal{U}$ with probability at least $1 - \epsilon$. Hence, the best bound on the total number of samples to ensure the finite completability with probability at least $1-\epsilon$ will be obtained when $|I|=\lfloor \frac{d-1}{2} \rfloor$, which is
\begin{eqnarray}\label{unfoldtotsampgen}
n^{\lceil \frac{d+1}{2} \rceil} \max\left\{12 \ \log \left( \frac{n^{\lfloor \frac{d-1}{2} \rfloor} r}{\epsilon} \right) + 12, 2r \right\}.
\end{eqnarray}
\end{remark}

\subsection{CP Approach}\label{subsCP}

In this section, we present an approach based on the tensor CP decomposition instead of unfolding. Conditions (i) and (ii) in Theorem \ref{mainThm} ensure finite completability with probability one. Here, using combinatorial methods, we derive a lower bound on the number of sampled entries, i.e., the sampling probability, which ensures conditions (i) and (ii) in Theorem \ref{mainThm} hold with high probability. We first provide a few lemmas from our previous works. Lemma \ref{genlemm} below is Lemma $5$ in \cite{ashraphijuo2}, which will be used later.

\begin{lemma}\label{genlemm}
Assume that $r^{\prime} \leq \frac{n}{6}$ and also each column of $\mathbf{\Omega}_{(1)}$ (first matricization of $\Omega$) includes at least $l$ nonzero entries, where 
\begin{eqnarray}\label{genminl1}
l > \max\left\{9 \ \log \left( \frac{n}{\epsilon} \right) + 3 \ \log \left( \frac{k}{\epsilon} \right) + 6, 2r^{\prime}\right\}. 
\end{eqnarray}
Let $\mathbf{\Omega}^{\prime}_{(1)}$ be an arbitrary set of $n -r^{\prime}$ columns of $\mathbf{\Omega}_{(1)}$. Then, with probability at least $1-\frac{\epsilon}{k}$, every subset $\mathbf{\Omega}^{\prime \prime}_{(1)}$ of columns of $\mathbf{\Omega}^{\prime}_{(1)}$ satisfies 
\begin{eqnarray}\label{genproper1}
m_{1}({\Omega}^{\prime \prime}) - r^{\prime} \geq t,
\end{eqnarray}
where $t$ is the number of columns of $\mathbf{\Omega}^{\prime \prime}_{(1)}$ and $m_{1}({\Omega}^{\prime \prime})$ is the number of nonzero rows of $\mathbf{\Omega}^{\prime \prime}_{(1)}$.
\end{lemma}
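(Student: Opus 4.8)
The plan is to reduce this statement to a known result about the Grassmannian manifold, in fact to Theorem \ref{Grassmannianprob} (Theorem $3$ in \cite{charact}), by observing that the quantity $m_{1}({\Omega}^{\prime \prime}) - r^{\prime}$ being at least $t$ is precisely the combinatorial condition governing finite completability of an $n \times N$ matrix of rank $r^{\prime}$ with the given sampling pattern. The inequality \eqref{genproper1} is the matrix-completion analogue of the condition \eqref{ineqindppoly} specialized to the two-way ($d=2$) case: there $m_{1}({\Omega}^{\prime \prime})$ counts the nonzero rows touched by the chosen columns and the required excess over $t$ is exactly $r^{\prime}$. Since this lemma is quoted verbatim as Lemma $5$ in \cite{ashraphijuo2}, the cleanest route is to present the argument that underlies that earlier result rather than to reprove everything from scratch.

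First I would fix the arbitrary set $\mathbf{\Omega}^{\prime}_{(1)}$ of $n - r^{\prime}$ columns and, for each value of $t \in \{1,\dots,n-r^{\prime}\}$, bound the probability that \emph{some} subset of $t$ columns violates \eqref{genproper1}, i.e.\ touches fewer than $t + r^{\prime}$ nonzero rows. The key combinatorial step is a union bound over the $\binom{n-r^{\prime}}{t}$ choices of columns together with the choice of which rows are \emph{avoided}: if a set of $t$ columns has $m_{1}({\Omega}^{\prime \prime}) < t + r^{\prime}$, then all the sampled entries of those columns lie in some fixed set of $t + r^{\prime} - 1$ rows. Each column independently has at least $l$ samples, so the probability that a single column lands entirely within a prescribed set of $t + r^{\prime} - 1$ rows is at most $\left( \frac{t + r^{\prime} - 1}{n} \right)^{l}$ when $l$ samples are placed; I would multiply across the $t$ independent columns and then sum over the $\binom{n}{t + r^{\prime} - 1}$ row-sets and $\binom{n-r^{\prime}}{t}$ column-sets.

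The heart of the calculation is then to show that the resulting sum over $t$ is at most $\frac{\epsilon}{k}$, and this is exactly where the logarithmic form of the hypothesis \eqref{genminl1} is used: the term $9\log(n/\epsilon)$ controls the binomial coefficients $\binom{n}{\cdot}$ (which are bounded by $n^{\,t+r^{\prime}}$-type factors), the additional $3\log(k/\epsilon)$ supplies the extra factor $\frac{1}{k}$ demanded by the conclusion (this is the slack that later lets one take a union over $k$ events when the lemma is invoked repeatedly), and the constant $6$ absorbs the lower-order contributions. I expect the main obstacle to be the careful bookkeeping in this estimate --- in particular, splitting the range of $t$ into a small-$t$ regime (where $t + r^{\prime} - 1$ is comparable to $r^{\prime}$ and the binomial coefficients are the dominant nuisance) and a large-$t$ regime, and verifying that in each regime the exponential decay $\left( \frac{t + r^{\prime} - 1}{n} \right)^{l t}$ beats the polynomial growth of the counting factors once $l$ exceeds the stated threshold. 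The side condition $r^{\prime} \le \frac{n}{6}$ is what guarantees that the ratio $\frac{t + r^{\prime} - 1}{n}$ stays bounded away from $1$ throughout, so that the geometric-series sum over $t$ converges and is dominated by its first term.
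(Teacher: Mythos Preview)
The paper does not prove this lemma at all: it simply quotes it as Lemma~5 of \cite{ashraphijuo2} and moves on. Your proposal is therefore not competing with any argument in the present paper, and you correctly flag this yourself.

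Your sketched direct argument (union bound over the $\binom{n-r'}{t}$ column subsets and the $\binom{n}{t+r'-1}$ candidate row-supports, combined with the per-column concentration $\bigl(\tfrac{t+r'-1}{n}\bigr)^{l}$ and a sum over $t$) is indeed the standard route underlying results of this type in \cite{charact} and \cite{ashraphijuo2}, and the role you assign to each piece of the threshold \eqref{genminl1} is accurate. One small correction to your framing: you do not actually \emph{reduce} to Theorem~\ref{Grassmannianprob}. That theorem is a downstream consequence (finite completability) of the combinatorial property \eqref{genproper1}, not a tool you can invoke to establish \eqref{genproper1}; the logical direction is the reverse. Once you drop that opening sentence, what remains is exactly the argument that proves the cited lemma, so your proposal is fine as a self-contained proof even though the present paper offers none.
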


\begin{lemma}\label{combpin}
Let $j \in \{1,2,\dots,d-1\}$ be a fixed number and $I = \{1,2,\dots,j\}$. Consider an arbitrary set $\widetilde{\mathbf{\Omega}}^{\prime}_{(I)}$ of $n -r^{\prime}$ columns of $\widetilde{\mathbf{\Omega}}_{(I)}$, where $r^{\prime} \leq r \leq \frac{n}{6}$. Assume that $n > 200$, and also each column of $\widetilde{\mathbf{\Omega}}_{(I)}$ includes at least $l$ nonzero entries, where 
\begin{eqnarray}\label{minlpn}
l > \max\left\{27 \ \log \left( \frac{n}{\epsilon} \right) + 9 \ \log \left( \frac{2k}{\epsilon} \right) + 18, 6r^{\prime}\right\},
\end{eqnarray}
where $k \leq r$. Then, with probability at least $1-\frac{\epsilon}{2k}$, each column of $\widetilde{\mathbf{\Omega}}^{\prime}_{(I)}$ includes more than $l_0 \triangleq \max\left\{9 \ \log \left( \frac{n}{\epsilon} \right) + 3 \ \log \left( \frac{2k}{\epsilon} \right) + 6, 2r^{\prime}\right\}$ observed entries of $\Omega$ with different values of the $i$-th coordinate, i.e., the $i$-th matricization of the tensor $\Omega^{\prime}$ that corresponds to $\widetilde{\mathbf{\Omega}}^{\prime}_{(I)}$ includes more than $l_0$ nonzero rows, $1 \leq i \leq j$.
\end{lemma}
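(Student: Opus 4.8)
The plan is to build the statement up from the single-matricization result of Lemma \ref{genlemm}, applying it coordinate by coordinate and then combining the resulting high-probability events by a union bound. The key observation is that a column of $\widetilde{\mathbf{\Omega}}_{(I)}$ with $I=\{1,\dots,j\}$ corresponds to a fixed choice of the coordinates $x_{j+1},\dots,x_d$, so its nonzero entries form a $j$-way subtensor whose $i$-th matricization (for $1\le i\le j$) records how many distinct values of the $i$-th coordinate appear among the observed entries in that column. Thus the quantity I must control — the number of nonzero rows of the $i$-th matricization of $\Omega^\prime$ — is exactly the object that Lemma \ref{genlemm} bounds, once I reinterpret the columns correctly for each fixed $i$.

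First I would fix an index $i\in\{1,\dots,j\}$ and reorganize the sampling pattern so that the $i$-th coordinate plays the role of the ``row'' coordinate in Lemma \ref{genlemm}. The target threshold is $l_0=\max\{9\log(n/\epsilon)+3\log(2k/\epsilon)+6,\,2r^\prime\}$, which matches the right-hand side of \eqref{genminl1} with $k$ replaced by $2k$; the hypothesis \eqref{minlpn} provides each column with at least $l>\max\{27\log(n/\epsilon)+9\log(2k/\epsilon)+18,\,6r^\prime\}=3l_0$ nonzero entries. The point of the factor $3$ (and of the assumption $n>200$) is that, even after restricting to a single value of the remaining coordinates and passing to the $n-r^\prime$ chosen columns, enough observed entries survive to guarantee, via a Chernoff/concentration estimate of the type used to prove Lemma \ref{genlemm}, that each column still carries more than $l_0$ observed entries \emph{with distinct $i$-th coordinates}. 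Invoking Lemma \ref{genlemm} with $k\leftarrow 2k$ then yields the desired row-count bound for this fixed $i$ with failure probability at most $\frac{\epsilon}{2k}$ scaled down appropriately.

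Next I would union-bound over the $j\le d-1$ choices of $i$ (and over the relevant subsets), absorbing the loss into the stated $\frac{\epsilon}{2k}$ by choosing the per-coordinate failure probability to be a fraction of it; this is where the doubling of the logarithmic argument from $k$ to $2k$ in \eqref{minlpn} is spent. The main obstacle I anticipate is the conditioning step: Lemma \ref{genlemm} assumes entries are sampled independently with a guaranteed per-column count, but here I am simultaneously demanding that the surviving entries have \emph{distinct} values of the $i$-th coordinate, which couples the coordinates and breaks naive independence. The careful part is to show that restricting to distinct $i$-th coordinates removes at most a controlled fraction of the $l$ entries per column — heuristically, collisions in the $i$-th coordinate are rare enough that $l=3l_0$ observed entries still leave more than $l_0$ with distinct $i$-th coordinate — so that the hypotheses of Lemma \ref{genlemm} remain satisfied after the restriction. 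Once that reduction is established, the conclusion follows directly from Lemma \ref{genlemm} and the union bound.
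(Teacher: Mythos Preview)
Your core instinct is right: the lemma is a projection statement about a single column --- if a column of $\widetilde{\mathbf\Omega}_{(I)}$ carries $l>3l_0$ observed entries, then with high probability more than $l_0$ distinct values of the $i$-th coordinate appear among them --- and the factor $3$ together with $n>200$ is precisely what makes a direct concentration/counting estimate go through, after which one union-bounds over the $n-r'$ columns and over $i\in\{1,\dots,j\}$. The paper in fact omits the proof and refers to the analogous lemma in \cite{ashraphijuo3}, whose argument is exactly of this type.

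Where your plan goes wrong is in the repeated appeal to Lemma~\ref{genlemm}. That lemma is a Hall-type statement: from ``each column of $\mathbf\Omega_{(1)}$ has at least $l$ nonzero entries'' it concludes ``every subset of $t$ columns has at least $t+r'$ nonzero rows''. This is neither the hypothesis nor the conclusion of Lemma~\ref{combpin}; the conclusion here is about each column \emph{individually}, not about subsets of columns. In the paper's logic the dependency actually runs the other way: Lemma~\ref{combpin} shows that after projecting to the $i$-th coordinate each column still has more than $l_0$ nonzero entries, and \emph{that} is exactly the hypothesis needed to feed into Lemma~\ref{genlemm} --- see the proof of Lemma~\ref{lemman3}, where Lemma~\ref{combpin} is invoked first to obtain ``Property~II'', which then legitimizes the application of Lemma~\ref{genlemm} to obtain ``Property~I''. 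So your step ``invoke Lemma~\ref{genlemm} with $k\leftarrow2k$ to get the row-count bound'' has the dependency inverted.

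The fix is simply to stop one step earlier. The concentration estimate you already describe --- that $l>3l_0$ uniformly placed entries in an $n^j$-position column yield more than $l_0$ distinct values in any fixed coordinate direction, with failure probability small enough to survive the union bound --- \emph{is} the whole argument. No call to Lemma~\ref{genlemm} is needed or correct here.
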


\begin{proof}
The proof is omitted due to the similarity to the proof for Lemma $9$ in \cite{ashraphijuo3}.
\end{proof}

The following lemma is Lemma $8$ in \cite{ashraphijuo}, which states that if the property in Lemma \ref{genlemm} holds for the sampling pattern $\Omega$, it will be satisfied for $\breve{\Omega}$ as well.

\begin{lemma}\label{Omega}
Let $r^{\prime}$ be a given nonnegative integer and $1 \leq i \leq j \leq d-1$ and $I = \{1,2,\dots,j\}$. Assume that there exists an $n^j \times (n -r^{\prime})$ matrix $\widetilde{\mathbf{\Omega}}^{\prime}_{(I)}$ composed  of $n-r^{\prime}$ columns of $\widetilde{\mathbf{\Omega}}_{(I)}$ such that each column of $\widetilde{\mathbf{\Omega}}^{\prime}_{(I)}$ includes at least $r^{\prime}+1$ nonzero entries and satisfies the following property:
\begin{itemize}
\item Denote an $n^j \times t$  matrix (for any $1 \leq t \leq n-r^{\prime}$) composed of any $t$ columns of $\widetilde{\mathbf{\Omega}}^{\prime}_{(I)}$ by $\widetilde{\mathbf{\Omega}}^{\prime \prime}_{(I)}$. Then 
\begin{eqnarray}\label{proper233}
m_{i}({\Omega}^{\prime \prime}) -r^{\prime}  \geq t,
\end{eqnarray}
where $\widetilde{\mathbf{\Omega}}^{\prime \prime}_{(I)}$ is the unfolding of ${\Omega}^{\prime \prime}$ corresponding to the set $I$.
\end{itemize}
Then, there exists an $n^j \times (n -r^{\prime})$ matrix $\widetilde{\mathbf{\breve{\Omega}}}^{\prime}_{(I)}$ such that: each column has exactly $r^{\prime}+1$ entries equal to one, and if $\widetilde{\mathbf{\breve{\Omega}}}^{\prime}_{(I)}(x,y)=1$ then we have $\widetilde{\mathbf{\Omega}}^{\prime}_{(I)}(x,y)=1$. Moreover, $\widetilde{\mathbf{\breve{\Omega}}}^{\prime}_{(I)}$ satisfies the above-mentioned property.
\end{lemma}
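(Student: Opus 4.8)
The plan is to recast the hypothesis as a defect-Hall expansion condition on a bipartite graph and then to thin each column down to exactly $r'+1$ ones by deleting edges one at a time, maintaining the expansion condition throughout. Concretely, I would build a bipartite graph $G$ whose left vertices are the $n-r'$ columns of $\widetilde{\mathbf{\Omega}}^{\prime}_{(I)}$ and whose right vertices are the $n$ possible values of the $i$-th coordinate, placing an edge between a column $c$ and a value $\rho$ whenever $c$ has an observed entry with $i$-th coordinate equal to $\rho$; then for any set $S$ of columns the right-neighborhood $N(S)$ has cardinality $m_i(\Omega'')$ for the corresponding $\Omega''$, and the assumed property \eqref{proper233} reads $|N(S)| \ge |S| + r'$ for every $S$. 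The $t=1$ case already gives each column at least $r'+1$ distinct $i$-values, so $G$ has left-degree at least $r'+1$. The target pattern $\widetilde{\mathbf{\breve{\Omega}}}^{\prime}_{(I)}$ is then a spanning subgraph $G' \subseteq G$ in which every column has degree exactly $r'+1$ (selecting $r'+1$ genuine observed positions) and which still satisfies $|N_{G'}(S)| \ge |S| + r'$ for all $S$; the containment ``$\widetilde{\mathbf{\breve{\Omega}}}^{\prime}_{(I)}(x,y)=1 \Rightarrow \widetilde{\mathbf{\Omega}}^{\prime}_{(I)}(x,y)=1$'' is automatic because $G'$ is obtained from $G$ only by deleting edges.

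First I would set up a monotone removal procedure: as long as some column $c$ has degree strictly larger than $r'+1$, delete one edge incident to $c$ chosen so that the expansion inequality is preserved. Each deletion strictly decreases the total number of ones and never pushes a column below $r'+1$, so the procedure terminates with every column at degree exactly $r'+1$ and the expansion condition \eqref{proper233} intact; all content therefore lies in showing that a deletable edge always exists. To this end I would introduce the set function $f(S) = |N(S)| - |S| - r'$, observe that $|N(\cdot)|$ is submodular while $|S|$ is modular so that $f$ is submodular and nonnegative, and call $S$ \emph{tight} when $f(S)=0$. By standard submodular-minimizer theory the tight sets form a lattice closed under intersection and union. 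Deleting the edge $(c,\rho)$ violates the inequality precisely when $\rho$ is a \emph{private} neighbor of $c$ (its unique neighbor through $c$) in some tight set $S \ni c$; and for any such tight $S$, tightness together with $|N(S\setminus\{c\})| \ge |S|-1+r' = |N(S)|-1$ forces $|N(S)|-|N(S\setminus\{c\})| \le 1$, so $c$ has \textbf{at most one} private neighbor inside each tight set.

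The crux is an uncrossing argument showing that all tight sets containing $c$ share a single private neighbor. Suppose $S_1,S_2 \ni c$ are tight with private neighbors $\rho_1 \neq \rho_2$. Their intersection $S_1\cap S_2$ is again tight and contains $c$; since $\rho_1$ has $c$ as its only neighbor within $S_1 \supseteq S_1\cap S_2$, it remains private to $c$ in $S_1\cap S_2$, and likewise for $\rho_2$, contradicting the at-most-one bound just established. Hence the bad rows reduce to at most a single value $\rho_0$, and because $\deg_G(c) \ge r'+2 \ge 2$ there is an incident row distinct from $\rho_0$ whose edge can be safely deleted. I expect this uncrossing step -- controlling private neighbors across \emph{all} tight sets simultaneously rather than one at a time -- to be the main obstacle, and the lattice structure of tight sets is exactly the tool that makes it go through. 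Once it is established, iterating the safe deletion produces the desired $\widetilde{\mathbf{\breve{\Omega}}}^{\prime}_{(I)}$, which by construction has exactly $r'+1$ ones per column, is supported on $\widetilde{\mathbf{\Omega}}^{\prime}_{(I)}$, and satisfies \eqref{proper233}, completing the proof.
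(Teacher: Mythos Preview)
Your argument is correct. The reduction to a bipartite expansion condition $|N(S)|\ge |S|+r'$, the submodularity of $f(S)=|N(S)|-|S|-r'$, the lattice structure of tight sets, and the uncrossing step showing that a column of excess degree has at most one ``undeletable'' incident edge all go through as written. Two small points you handle implicitly but are worth making explicit: (a) when $c$ has degree $\ge r'+2$ the singleton $\{c\}$ is never tight, so $S\setminus\{c\}$ is always nonempty and the hypothesis applies; (b) after thinning $G$ to $G'$ you must pass back from edges to actual matrix entries by picking one observed position per surviving edge, which you indicate but do not spell out.

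As for comparison with the paper: there is essentially nothing to compare, because the paper does not prove this lemma here at all --- it simply cites Lemma~8 of the authors' earlier Tucker-rank paper. Your proof is therefore a self-contained substitute for that citation. The submodular/uncrossing approach you take is the standard clean way to do such ``thin while preserving defect--Hall'' arguments, and it has the advantage of being short and of making transparent exactly why one removable edge always exists.
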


\begin{lemma}\label{lemman3}
Assume that $ n > 200$, $1 \leq i \leq j \leq d-1$ and $I = \{1,2,\dots,j\}$. Consider $r$ disjoint sets $\widetilde{\mathbf{\Omega}}^{{\prime}}_{l_{(I)}}$, each with $n -r^{\prime}_i$ columns of $\widetilde{\mathbf{\Omega}}_{(I)}$ for $1 \leq l \leq r$, where $r^{\prime}_i \leq r \leq \frac{n}{6}$. Let $\widetilde{\mathbf{\Omega}}^{{\prime}}_{{(I)}}$ denote the union of all $r$ sets of columns $\widetilde{\mathbf{\Omega}}^{{\prime}}_{l_{(I)}}$'s. Assume that each column of $\widetilde{\mathbf{\Omega}}_{(I)}$ includes at least $l$ nonzero entries, where 
\begin{eqnarray}\label{minlforset3}
l > \max\left\{27 \ \log \left( \frac{n}{\epsilon} \right) + 9 \ \log \left( \frac{2rk}{\epsilon} \right) + 18, 6r_i^{\prime}\right\}. 
\end{eqnarray}
Then, there exists an $n^j \times r(n -r^{\prime}_i)$ matrix $\widetilde{\breve{\mathbf{\Omega}}}^{\prime}_{(I)}$ such that: each column has exactly $r^{\prime}_i+1$ entries equal to one, and if $\widetilde{\breve{\mathbf{\Omega}}}^{\prime}_{(I)}(x,y)=1$ then we have $\widetilde{\mathbf{\Omega}}^{\prime}_{(I)}(x,y)=1$ and also it satisfies the following property: with probability at least $1-\frac{\epsilon }{k}$, every subset $\widetilde{\breve{\mathbf{\Omega}}}^{\prime \prime}_{(I)}$ of columns of $\widetilde{\breve{\mathbf{\Omega}}}^{\prime}_{(I)}$ satisfies the following inequality
\begin{eqnarray}\label{proper3}
r \left(m_{i}(\breve{{\Omega}}^{\prime \prime}) -r^{\prime}_i \right) \geq t,
\end{eqnarray}
where $t$ is the number of columns of $\widetilde{\breve{\mathbf{\Omega}}}^{\prime \prime}_{(I)}$ and $\breve{\Omega}^{\prime \prime}$ is the tensor corresponding to unfolding $\widetilde{\breve{\mathbf{\Omega}}}^{\prime \prime}_{(I)}$.
\end{lemma}

\begin{proof}
We first claim that with probability at least $1-\frac{\epsilon }{kr}$, every subset $\widetilde{\mathbf{\Omega}}^{{\prime \prime}}_{l_{(I)}}$ of columns of $\widetilde{\mathbf{\Omega}}^{{\prime}}_{l_{(I)}}$ satisfies
\begin{eqnarray}\label{proper3jj}
m_{i}({{\Omega}}^{\prime \prime}_l) -r^{\prime}_i  \geq t,
\end{eqnarray}
where $t$ is the number of columns of $\widetilde{\mathbf{\Omega}}^{{\prime \prime}}_{l_{(I)}}$ and ${\Omega}^{\prime \prime}_l$ is the tensor corresponding to unfolding $\widetilde{\mathbf{\Omega}}^{{\prime \prime}}_{l_{(I)}}$. For simplicity we denote the above-mentioned property by Property I. According to Lemma \ref{combpin}, with probability at least $1-\frac{\epsilon }{2kr}$, the $i$-th matricization of the tensor ${\Omega}^{\prime \prime}_l$ includes more than $ \max\left\{9 \ \log \left( \frac{n}{\epsilon} \right) + 3 \ \log \left( \frac{2kr}{\epsilon} \right) + 6, 2r^{\prime}_i \right\}$ nonzero rows, $1 \leq i \leq j$, and we denote this property by Property II. On the other hand, given that Property II holds for ${\Omega}^{\prime \prime}_l$ and according to Lemma \ref{genlemm}, with probability at least $1-\frac{\epsilon }{2kr}$, Property I holds for ${\Omega}^{\prime \prime}_l$ as well. Hence, with probability at least $1-\frac{\epsilon }{kr}$, Property I holds for ${\Omega}^{\prime \prime}_l$, which completes the proof our earlier claim.

Consequently, according to Lemma \ref{Omega}, with probability at least $1-\frac{\epsilon }{kr}$, there exists an $n^j \times (n -r^{\prime}_i)$ matrix $\widetilde{\breve{\mathbf{\Omega}}}^{\prime}_{l_{(I)}}$ such that: each column has exactly $r^{\prime}_i+1$ entries equal to one, and if $\widetilde{\breve{\mathbf{\Omega}}}^{\prime}_{l_{(I)}}(x,y)=1$ then we have $\widetilde{\mathbf{\Omega}}^{\prime}_{l_{(I)}}(x,y)=1$ and also $\breve{\Omega}^{\prime}_l$ satisfies Property I. Finally define $\widetilde{\breve{\mathbf{\Omega}}}^{\prime}_{{(I)}} \triangleq \left[\widetilde{\breve{\mathbf{\Omega}}}^{\prime}_{1_{(I)}}|\widetilde{\breve{\mathbf{\Omega}}}^{\prime}_{2_{(I)}}| \dots | \widetilde{\breve{\mathbf{\Omega}}}^{\prime}_{r_{(I)}}\right]$. Since each $\breve{\Omega}^{\prime}_l$ satisfies Property I with probability at least $1-\frac{\epsilon }{kr}$, all $\breve{\Omega}^{\prime}_l$'s satisfy Property I with probability at least $1-\frac{\epsilon }{k}$, simultaneously. Consider an arbitrary subset $\widetilde{\breve{\mathbf{\Omega}}}^{\prime \prime}_{{(I)}}$ of columns of $\widetilde{\breve{\mathbf{\Omega}}}^{\prime }_{{(I)}}$. Let $\widetilde{\breve{\mathbf{\Omega}}}^{\prime \prime}_{l_{(I)}}$ denote those columns of $\widetilde{\breve{\mathbf{\Omega}}}^{\prime \prime}_{{(I)}}$ that belong to $\widetilde{\breve{\mathbf{\Omega}}}^{\prime}_{l_{(I)}}$ and define $t_l$ as the number of columns of $\widetilde{\breve{\mathbf{\Omega}}}^{\prime \prime}_{l_{(I)}}$, $1 \leq l \leq r$, and define $t$ as the number of columns of $\widetilde{\breve{\mathbf{\Omega}}}^{\prime \prime}_{{(I)}}$. Without loss of generality, assume that $t_1 \leq t_2 \leq \dots \leq t_{r}$. Also, assume that all $\breve{\Omega}^{\prime}_l$'s satisfy Property I. Hence, we have
\begin{eqnarray}\label{proper7} 
t = \sum_{l=1}^{r} t_l \leq r t_r \leq r \left( m_{i}({\breve{\Omega}}^{\prime \prime}_{r}) -r_i^{\prime} \right) \leq r \left(m_{i}(\breve{\Omega}^{\prime \prime}) -r_i^{\prime} \right).
\end{eqnarray}
\end{proof}

\begin{theorem}\label{mainthmprobfin}
Assume that $d>2$, $ n > \max \{ 200, r(d-2)\}$, $r \leq \frac{n}{6}$ and $I = \{1,2,\dots,d-2\}$.  Assume that each column of $\widetilde{\mathbf{\Omega}}_{(I)}$ includes at least $l$ nonzero entries, where 
\begin{eqnarray}\label{minl3j}
l > \max\left\{27 \ \log \left( \frac{n}{\epsilon} \right) + 9 \ \log \left( \frac{2r(d-2)}{\epsilon} \right) + 18, 6r \right\}. 
\end{eqnarray}
Then, with probability at least $1-\epsilon$, for almost every $\mathcal{U} \in \mathbb{R}^{\overbrace {n \times \dots \times n}^{d}}$, there exist only finitely many completions of the sampled tensor $\mathcal{U}$ with CP rank $r$.
\end{theorem}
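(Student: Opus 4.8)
The plan is to reduce this probabilistic statement to the deterministic characterization of Theorem \ref{mainThm} and then verify its two conditions with high probability using the combinatorial construction of Lemma \ref{lemman3}. By Theorem \ref{mainThm}, it suffices to exhibit, with probability at least $1-\epsilon$, a subtensor $\breve{\Omega}^{\prime} \in \mathbb{R}^{n \times \cdots \times n \times t}$ of $\breve{\Omega}$ with exactly $t = r(d-1)n - r^2 - r(d-2)$ columns such that \eqref{ineqindppoly} holds for every sub-subtensor $\breve{\Omega}^{\prime \prime}$. The first move is to loosen the target inequality: since $\min\{\max\{m_1,\dots,m_{d-1}\},r\} \le r$, the left-hand side of \eqref{ineqindppoly} is bounded below by $r\left( \sum_{i=1}^{d-1} m_i(\breve{\Omega}^{\prime \prime}) - r - (d-2) \right)$, so it is enough to guarantee $r\left( \sum_{i=1}^{d-1} m_i(\breve{\Omega}^{\prime \prime}) - r - (d-2) \right) \ge t^{\prime}$ for every sub-subtensor with $t^{\prime}$ columns. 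In other words, I only need to control the \emph{sum} of the numbers of nonzero rows across the first $d-1$ matricizations.

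The key is a column-count decomposition matching this structure, namely $t = r(n-r) + (d-2)\,r(n-1)$, which suggests building $\breve{\Omega}^{\prime}$ from groups of columns carrying two kinds of offset: one group of $r(n-r)$ columns with offset $r$, playing the role of the $r \times r$ pattern $\mathbf{P}_j$ in Lemma \ref{patternCP}, and several groups of $r(n-1)$ columns with offset $1$, playing the role of the $1 \times r$ patterns $\mathbf{P}_i$. For the dimensions in $I = \{1,\dots,d-2\}$ I would invoke Lemma \ref{lemman3} with the appropriate $r^{\prime}_i \in \{1,r\}$; the hypotheses $n > \max\{200,r(d-2)\}$, $r \le \frac{n}{6}$, and the sampling bound \eqref{minl3j} are calibrated precisely so that the requirement \eqref{minlforset3} is met for each application with $k = d-2$. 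Lemma \ref{lemman3} then certifies that, for the group designed for dimension $i$, any selection of $s_i$ of its columns already forces $m_i \ge s_i/r + r^{\prime}_i$, while the remaining dimension $d-1$ is controlled through the column index itself, since the columns of $\widetilde{\mathbf{\Omega}}_{(I)}$ are indexed by $(x_{d-1},x_d)$ and their spread across $x_{d-1}$ governs $m_{d-1}$.

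Combining the pieces is then a summation: for an arbitrary sub-subtensor $\breve{\Omega}^{\prime \prime}$ with $t^{\prime}$ columns distributed as $s_i$ per group, monotonicity of the count of nonzero rows together with the per-dimension guarantees yields $\sum_{i=1}^{d-1} m_i(\breve{\Omega}^{\prime \prime}) \ge t^{\prime}/r + r + (d-2)$, where the offsets are arranged to sum to exactly $r + (d-2)$ (one offset $r$, the remaining $d-2$ offsets equal to $1$, with dimension $d-1$ supplying the final unit). This is precisely the loosened form of \eqref{ineqindppoly}. A union bound over the $d-2$ applications of Lemma \ref{lemman3}, each failing with probability at most $\frac{\epsilon}{d-2}$, keeps the total failure probability below $\epsilon$; the factor $2r(d-2)$ inside the logarithm of \eqref{minl3j} is exactly what absorbs both this union bound and the internal factor coming from the $r$ disjoint column sets in Lemma \ref{lemman3}. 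Finally, the resulting pattern contains $t$ columns and, because condition (ii) is inherited by all sub-subtensors, restricting to exactly $t$ columns preserves the inequality, giving condition (i) as well.

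I expect the main obstacle to be the bookkeeping that ties all $d-1$ matricization dimensions together, since Lemma \ref{lemman3} is framed for the index set $I = \{1,\dots,d-2\}$ and its guarantee addresses a single matricization index at a time. The delicate points are: ensuring the offsets $r^{\prime}_i$ sum to exactly $r + (d-2)$; verifying that dimension $d-1$, which lies outside $I$, is genuinely controlled through the column structure rather than requiring a separate probabilistic cost; confirming that the constructed pattern $\widetilde{\breve{\mathbf{\Omega}}}^{\prime}_{(I)}$ of Lemma \ref{lemman3} embeds into the actual constraint tensor $\breve{\Omega}$ arising from the random sample; and checking that the looseness introduced by replacing $\min\{\max_i m_i,r\}$ with $r$ costs nothing, which holds as soon as the offset-$r$ group forces some $m_i \ge r$. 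Once these are pinned down, the probabilistic conclusion follows directly from Theorem \ref{mainThm}.
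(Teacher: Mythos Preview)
Your overall architecture is right---reduce to Theorem~\ref{mainThm} and build $\breve{\Omega}'$ out of blocks supplied by Lemma~\ref{lemman3}, one block per matricization index, with offsets summing to $r+(d-2)$---but there are two genuine gaps, and the paper's proof closes both with moves you have not anticipated.

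\textbf{The first gap is the handling of dimension $d-1$.} You propose to work with the $d$-way constraint tensor and to get $m_{d-1}$ ``for free'' because columns of $\widetilde{\mathbf{\Omega}}_{(I)}$ are indexed by $(x_{d-1},x_d)$. This does not go through: the columns of the $d$-way constraint tensor $\breve{\Omega}$ are indexed by $x_d$ alone (plus a polynomial sub-index), and the $r+1$ nonzero entries in such a column may carry arbitrary $x_{d-1}$ values. So the column index of $\widetilde{\mathbf{\Omega}}_{(I)}$ does not correspond to a column of $\breve{\Omega}$, and there is no mechanism giving $r(m_{d-1}(\breve{\Omega}'')-1)\ge s_{d-1}$ for the block you would like to assign to dimension $d-1$. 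Lemma~\ref{lemman3} only certifies \eqref{proper3} for $i\le d-2$. The paper's fix is to \emph{merge} dimensions $d-1$ and $d$ into a single dimension of size $n^2$, and then apply Theorem~\ref{mainThm} to the resulting $(d-1)$-way tensor $\mathcal{U}'$. After merging, the last-dimension slices of $\mathcal{U}'$ are exactly the columns of $\widetilde{\mathbf{\Omega}}_{(I)}$, so the constraint-tensor construction lines up perfectly with the output of Lemma~\ref{lemman3}, and only $m_1,\dots,m_{d-2}$ need to be controlled. Correspondingly the target becomes $t=r(n-r)+(d-3)r(n-1)$ and one needs only $d-2$ blocks (one with offset $r$, the remaining $d-3$ with offset~$1$); the hypothesis $n>r(d-2)$ then guarantees $n^2$ exceeds this column count. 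Your decomposition $t=r(n-r)+(d-2)r(n-1)$ and your ``$d-2$ applications'' are thus off by one relative to each other, and the discrepancy is exactly the missing merge.

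\textbf{The second gap is the replacement $\min\{\max_i m_i,r\}\to r$.} You note this is harmless ``as soon as the offset-$r$ group forces some $m_i\ge r$,'' but a sub-subtensor $\breve{\Omega}''$ need not contain any column from that group. The paper handles this with an explicit \emph{upgrade} step: for each column in an offset-$1$ block (which Lemma~\ref{lemman3} delivers with only two nonzero entries), Lemma~\ref{combpin} is invoked to locate $r-1$ further observed entries in distinct rows of the $(d-2)$-th matricization, and these are added back in. This simultaneously (a) brings every column up to exactly $r+1$ ones, so that the assembled $\breve{\Omega}'$ is genuinely a subtensor of the constraint tensor, and (b) forces $m_{d-2}(\breve{\Omega}'')\ge r$ for \emph{every} nonempty sub-subtensor, so the replacement is valid in both cases $t_{d-2}=0$ and $t_{d-2}>0$ of the final case split. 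Without this step your argument cannot certify \eqref{ineqindppoly} when $\breve{\Omega}''$ lies entirely inside the offset-$1$ blocks.
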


\begin{proof}
Define the $(d-1)$-way tensor $\mathcal{U}^{\prime} \in \mathbb{R}^{\footnotesize {\overbrace {n \times \dots \times n}^{d-2}} \times {n}^2}$ which is obtained through merging the $(d-1)$-th and $d$-th dimensions of the tensor $\mathcal{U}$. Observe that the finiteness of the number of completions of the tensor $\mathcal{U}^{\prime}$ of rank $r$ ensures the finiteness of the number of completions of the tensor $\mathcal{U}$ of rank $r$. For notational simplicity, let $\Omega$ and $\breve{\Omega}$ denote the $(d-1)$-way sampling pattern and constraint tensors corresponding to $\mathcal{U}^{\prime}$, respectively. In order to complete the proof it suffices to show with probability at least $1 - \epsilon$, conditions (i) and (ii) in Theorem \ref{mainThm} hold for this modified $(d-1)$-way tensor.

Now, we apply Lemma \ref{lemman3} for each of the numbers $r_1^{\prime}=1, \dots , r_{d-3}^{\prime}=1, r_{d-2}^{\prime}=r$. Also, note that since $n > r(d-2)$ we conclude $n^2 > r(n-r) + (d-3)r(n-1)$, and therefore $\widetilde{{\mathbf{\Omega}}}_{{(I)}}$ includes more than  $r(n-r) + (d-3)r(n-1)$ columns. According to Lemma \ref{lemman3}, there exist $\breve{\Omega}^{\prime}_i$ for $1 \leq i \leq d-2$ such that: (i) each column of $\widetilde{\breve{\mathbf{\Omega}}}^{\prime}_{i_{(I)}}$ includes $r_i^{\prime}+1$ nonzero entries for $1 \leq i \leq d-2$, and if $\widetilde{\breve{\mathbf{\Omega}}}^{\prime}_{i_{(I)}}(x,y)=1$ then we have $\widetilde{\mathbf{\Omega}}^{\prime}_{i_{(I)}}(x,y)=1$, (ii) $\widetilde{\breve{\mathbf{\Omega}}}^{\prime}_{i_{(I)}}$ includes $r(n-1)$ and $r(n-r)$ columns for $1 \leq i \leq d-3$ and $i=d-2$, respectively, (iii) with probability at least $1-\frac{\epsilon }{d-2}$, every subset $\widetilde{\breve{\mathbf{\Omega}}}^{\prime \prime}_{i_{(I)}}$ of columns of $\widetilde{\breve{\mathbf{\Omega}}}^{\prime}_{i_{(I)}}$ satisfies \eqref{proper3} for $r_1^{\prime}=1, \dots , r_{d-3}^{\prime}=1, r_{d-2}^{\prime}=r$.

Recall that each column of $\widetilde{{\mathbf{\Omega}}}_{{(I)}}$ includes $r+1$ nonzero entries, and therefore for $1 \leq i \leq d-3$ that we have $r_i^{\prime}+1=2$, the column of $\widetilde{{\mathbf{\Omega}}}_{{(I)}}$ corresponding to an column of $\widetilde{\breve{\mathbf{\Omega}}}^{\prime}_{i_{(I)}}$ has $r-1$ more nonzero entries.

Observe that $ \max\left\{9 \ \log \left( \frac{n}{\epsilon} \right) + 3 \ \log \left( \frac{2kr}{\epsilon} \right) + 6, 2r \right\} \geq 2r \geq (r-1)+2$. According to Lemma \ref{combpin} and given \eqref{minl3j}, for each column of $\widetilde{\breve{\mathbf{\Omega}}}^{\prime}_{i_{(I)}}$ there exists another $r-1$ zero entries $(x_s,y_s)$ for $s \in \{1,\dots,r-1\}$ in different rows of the $(d-2)$-th matricization of $\breve{\Omega}^{\prime}_i$ from the two nonzero entries such that $\widetilde{\mathbf{\Omega}}^{\prime}_{i_{(I)}}(x_s,y_s)=1$, $i=1,\dots,d-3$. Hence, for each column of $\widetilde{\breve{\mathbf{\Omega}}}^{\prime}_{i_{(I)}}$, we substitute it with the column of $\widetilde{{\mathbf{\Omega}}}_{{(I)}}$  that the value of such $r-1$ entries (in different rows of the $(d-2)$-th matricization of $\breve{\Omega}^{\prime}_i$) is $1$ instead of $0$ , $i=1,\dots,d-3$. Therefore, each column of $\widetilde{\breve{\mathbf{\Omega}}}^{\prime}_{i_{(I)}}$ includes exactly $r+1$ nonzero entries for $1 \leq i \leq d-3$ such that if $\widetilde{\breve{\mathbf{\Omega}}}^{\prime}_{i_{(I)}}(x,y)=1$ then we have $\widetilde{\mathbf{\Omega}}^{\prime}_{i_{(I)}}(x,y)=1$.

Let $\widetilde{\breve{\mathbf{\Omega}}}^{\prime}_{{(I)}}= \left[ \widetilde{\breve{\mathbf{\Omega}}}^{\prime}_{1_{(I)}} | \dots | \widetilde{\breve{\mathbf{\Omega}}}^{\prime}_{{d-2}_{(I)}} \right]$, which includes $r(n-r) + (d-3)r(n-1)$ columns. Therefore, ${\breve{\Omega}}^{\prime} \in  \mathbb{R}^{\footnotesize {\overbrace {n \times \dots \times n}^{d-2}} \times t}$ is a subtensor of the constraint tensor such that $t = r(\sum_{i=1}^{d-2} n) - r^2 - r(d-3)$ and also and with probability at least $1-\epsilon$, every subset $\widetilde{\breve{\mathbf{\Omega}}}^{\prime \prime}_{i_{(I)}}$ of columns of $\widetilde{\breve{\mathbf{\Omega}}}^{\prime}_{i_{(I)}}$ satisfies \eqref{proper3} for $r_1^{\prime}=1, \dots , r_{d-3}^{\prime}=1, r_{d-2}^{\prime}=r$, simultaneously for $i = 1,\dots,d-2$.

Consider an arbitrary subset $\widetilde{\breve{\mathbf{\Omega}}}^{\prime \prime}_{{(I)}}$ of columns of $\widetilde{\breve{\mathbf{\Omega}}}^{\prime }_{{(I)}}$. Let $\widetilde{\breve{\mathbf{\Omega}}}^{\prime \prime}_{i_{(I)}}$ denote those columns of $\widetilde{\breve{\mathbf{\Omega}}}^{\prime \prime}_{{(I)}}$ that belong to $\widetilde{\breve{\mathbf{\Omega}}}^{\prime}_{i_{(I)}}$ and define $t_i$ as the number of columns of $\widetilde{\breve{\mathbf{\Omega}}}^{\prime \prime}_{i_{(I)}}$, $1 \leq i \leq d-2$, and define $t$ as the number of columns of $\widetilde{\breve{\mathbf{\Omega}}}^{\prime \prime}_{{(I)}}$. Also, assume that all $\breve{\Omega}^{\prime}_l$'s satisfy \eqref{proper3} for $r_1^{\prime}=1, \dots , r_{d-3}^{\prime}=1, r_{d-2}^{\prime}=r$. Then, we have the following two scenarios:

(i) $t_{d-2} =0$: Hence, we have $t = \sum_{i=1}^{d-3} t_i$. Moreover, we have
\begin{eqnarray}\label{proper09} 
t_i \leq r \left( m_{i}({\breve{\Omega}}^{\prime \prime}_i) -1 \right)^+ \leq r \left( m_{i}({\breve{\Omega}}^{\prime \prime}) -1 \right)^+ =  r \left( m_{i}({\breve{\Omega}}^{\prime \prime}) -1 \right),
\end{eqnarray}
for $1 \leq i \leq  d-3$. Recall that each column of $\widetilde{\breve{\mathbf{\Omega}}}^{\prime}_{i_{(I)}}$ includes at least $r$ nonzero entries in different rows of the $(d-2)$-th matricization of $\breve{\Omega}^{\prime}_i$ for $1 \leq i \leq  d-3$. On the other hand, since $\widetilde{\breve{\mathbf{\Omega}}}^{\prime \prime}_{{(I)}}$ includes at least one column of $\left[ \widetilde{\breve{\mathbf{\Omega}}}^{\prime}_{1_{(I)}} | \dots | \widetilde{\breve{\mathbf{\Omega}}}^{\prime}_{{d-3}_{(I)}} \right]$ (recall that $t_{d-2}=0$), we have
\begin{eqnarray}\label{proper10} 
r \leq  m_{d-2}({\breve{\Omega}}^{\prime \prime}),
\end{eqnarray}
which also results that $\text{min} \left\{ \text{max} \left\{ m_1 (\breve{\Omega}^{\prime \prime}) , \dots ,m_{d-2}(\breve{\Omega}^{\prime \prime}) \right\} , r \right\} = r$.

Therefore, having \eqref{proper09} and \eqref{proper10}, we conclude
\begin{eqnarray}\label{proper11} 
 t = \sum_{i=1}^{d-3} t_i \leq  \sum_{i=1}^{d-3} r \left( m_{i}({\breve{\Omega}}^{\prime \prime}) - 1 \right) \leq \sum_{i=1}^{d-3} r \left( m_{i}({\breve{\Omega}}^{\prime \prime}) - 1 \right) + r \left( m_{d-2}({\breve{\Omega}}^{\prime \prime}) - r  \right)  \\ \nonumber  = r \left( \sum_{i=1}^{d-2} m_{i}({\breve{\Omega}}^{\prime \prime}) \right) - r^2 - r(d-3)  \\ \nonumber = r \left( \left( \sum_{i=1}^{d-2} m_i (\breve{\Omega}^{\prime \prime}) \right) -   \text{min} \left\{ \text{max} \left\{ m_1 (\breve{\Omega}^{\prime \prime}) , \dots ,m_{d-2}(\breve{\Omega}^{\prime \prime}) \right\} ,r \right\} - (d-3) \right).
\end{eqnarray}

(ii) $t_{d-2} > 0$: Hence, we have
\begin{eqnarray}\label{proper12} 
t_{d-2} \leq r \left( m_{d-2}({\breve{\Omega}}^{\prime \prime}_{d-2}) -r \right)  \leq  r \left( m_{d-2}({\breve{\Omega}}^{\prime \prime}) -r \right),
\end{eqnarray}
which also results that $\text{min} \left\{ \text{max} \left\{ m_1 (\breve{\Omega}^{\prime \prime}) , \dots ,m_{d-2}(\breve{\Omega}^{\prime \prime}) \right\} , r \right\} = r$. Moreover, similar to scenario (i), \eqref{proper09} holds. Therefore, having \eqref{proper09} and \eqref{proper12}, we conclude
\begin{eqnarray}\label{proper13} 
 t = \sum_{i=1}^{d-2} t_i \leq  \sum_{i=1}^{d-3} r \left( m_{i}({\breve{\Omega}}^{\prime \prime}) - 1 \right) + r \left( m_{d-2}({\breve{\Omega}}^{\prime \prime}) -r \right)    = r \left( \sum_{i=1}^{d-2} m_{i}({\breve{\Omega}}^{\prime \prime}) \right) - r^2 - r(d-3)  \\ \nonumber = r \left( \left( \sum_{i=1}^{d-2} m_i (\breve{\Omega}^{\prime \prime}) \right) -   \text{min} \left\{ \text{max} \left\{ m_1 (\breve{\Omega}^{\prime \prime}) , \dots ,m_{d-2}(\breve{\Omega}^{\prime \prime}) \right\} ,r \right\} - (d-3) \right).
\end{eqnarray}
\end{proof}

Note that for the general values of $n_1$, $\dots$, $n_d$ the same proof will still work, but instead of the assumption $ n > \max \{ 200, r(d-2)\}$, we need another assumption in terms of $n_1$, $\dots$, $n_d$  to ensure that the corresponding unfolding has enough number of columns.

\begin{remark}\label{remsam}
A tensor $\mathcal{U}$ that satisfies the properties in the statement of Theorem \ref{mainthmprobfin} requires 
\begin{eqnarray}\label{rremsam}
n^2 \max\left\{27 \ \log \left( \frac{n}{\epsilon} \right) + 9 \ \log \left( \frac{2r(d-2)}{\epsilon} \right) + 18, 6r \right\}
\end{eqnarray}
samples to be finitely completable with probability at least $1-\epsilon$, which is lower than the number of samples required by the unfolding approach given in \eqref{unfoldtotsampgen} by orders of magnitude.
\end{remark}

The following lemma is taken from \cite{ashraphijuo} and is used in Lemma \ref{prsCPfin} to derive a lower bound on the sampling probability that results \eqref{minl3j} with high probability.

\begin{lemma}\label{azumares}
Consider a vector with $n$ entries where each entry is observed with  probability  $p$  independently from the other entries. If $p > p^{\prime} = \frac{k}{n} + \frac{1}{\sqrt[4]{n}}$, then with probability  at least $\left(1-\exp(-\frac{\sqrt{n}}{2})\right)$, more than $k$ entries are observed.
\end{lemma}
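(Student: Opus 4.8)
The plan is to express the number of observed entries as a sum of independent Bernoulli indicators and control its lower tail with a concentration inequality of the Azuma--Hoeffding type. First I would set $X_i$ to be the indicator that the $i$-th entry is observed, so that $X_1,\dots,X_n$ are i.i.d. with $\mathbb{E}[X_i]=p$, and let $X=\sum_{i=1}^{n}X_i$ denote the total number of observed entries. Then $\mathbb{E}[X]=np$, and the hypothesis $p>p'=\frac{k}{n}+\frac{1}{\sqrt[4]{n}}$ multiplied through by $n$ gives $\mathbb{E}[X]=np>k+n^{3/4}$, i.e. $k<\mathbb{E}[X]-n^{3/4}$.

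Next I would reduce the target statement to a lower-deviation estimate. Since $k<\mathbb{E}[X]-n^{3/4}$, the complementary (bad) event satisfies the inclusion $\{X\le k\}\subseteq\{\mathbb{E}[X]-X\ge n^{3/4}\}$. Viewing $X$ as a function of the $n$ independent coordinates $X_1,\dots,X_n$, each of which alters the value of $X$ by at most $1$, the bounded-difference Azuma--Hoeffding inequality yields $\mathbb{P}\!\left(\mathbb{E}[X]-X\ge t\right)\le\exp\!\left(-\frac{t^2}{2n}\right)$ for every $t>0$.

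Finally I would substitute $t=n^{3/4}$, a value chosen precisely so that the exponent collapses to the advertised rate, namely $\frac{t^2}{2n}=\frac{n^{3/2}}{2n}=\frac{\sqrt{n}}{2}$. This gives $\mathbb{P}(X\le k)\le\exp\!\left(-\frac{\sqrt{n}}{2}\right)$, and passing to the complementary event yields $\mathbb{P}(X>k)\ge 1-\exp\!\left(-\frac{\sqrt{n}}{2}\right)$, which is exactly the claim.

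The hard part will not be the concentration step itself, which is a textbook application, but rather the exact bookkeeping of the deviation radius. The additive slack $\frac{1}{\sqrt[4]{n}}$ in $p'$ is engineered so that, after scaling by $n$, it produces a gap of $n^{3/4}$ between $\mathbb{E}[X]$ and $k$, and this same $n^{3/4}$ must line up with the $\frac{1}{2n}$ coefficient in the Azuma exponent so the two combine to the clean bound $\frac{\sqrt{n}}{2}$. Verifying that these quantities match exactly, and that the inequalities point in the correct direction throughout, is the only point that requires care.
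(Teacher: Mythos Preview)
Your argument is correct: writing the number of observed entries as a sum of $n$ independent $\{0,1\}$ indicators, using the hypothesis to obtain the gap $\mathbb{E}[X]-k>n^{3/4}$, and then applying the bounded-difference (Azuma--Hoeffding) inequality with $t=n^{3/4}$ yields precisely $\exp(-\sqrt{n}/2)$ as the tail bound. The paper does not give its own proof of this lemma but merely imports it from an earlier reference; the lemma's label (\texttt{azumares}) indicates that the intended argument there is exactly the Azuma-type bound you wrote down, so your approach matches.
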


\begin{lemma}\label{prsCPfin}
Assume that $d>2$, $ n > \max \{ 200, r(d-2)\}$ and $r \leq \frac{n}{6}$. Moreover, assume that the sampling probability satisfies
\begin{eqnarray}\label{prsCPbdfin}
p > \frac{1}{n^{d-2}} \max\left\{27 \ \log \left( \frac{n}{\epsilon} \right) + 9 \ \log \left( \frac{2r(d-2)}{\epsilon} \right) + 18, 6r \right\} + \frac{1}{\sqrt[4]{n^{d-2}}}.
\end{eqnarray}
Then, with probability at least $ (1- \epsilon) \left( 1-\exp(-\frac{\sqrt{n^{d-2}}}{2}) \right)^{n^2}$, $\mathcal{U}$ is finitely completable.
\end{lemma}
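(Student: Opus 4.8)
The plan is to connect the probabilistic sampling assumption \eqref{prsCPbdfin} to the deterministic per-column requirement of Theorem \ref{mainthmprobfin}. As in the proof of that theorem, I would work with the merged $(d-1)$-way tensor $\mathcal{U}^{\prime} \in \mathbb{R}^{n\times\cdots\times n\times n^2}$ and the set $I=\{1,\dots,d-2\}$, so that the relevant unfolding $\widetilde{\mathbf{\Omega}}_{(I)}$ has $n^2$ columns, each of length $n^{d-2}$. The single quantity I need to produce with high probability is the event that \emph{every} one of these $n^2$ columns contains more than $l$ observed entries, where $l=\max\{27\log(n/\epsilon)+9\log(2r(d-2)/\epsilon)+18,\,6r\}$ as in \eqref{minl3j}; once this holds, Theorem \ref{mainthmprobfin} applies verbatim and yields finite completability with probability at least $1-\epsilon$.

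The first step is to apply Lemma \ref{azumares} to a single column. That lemma is stated for a length-$n$ Bernoulli$(p)$ vector and threshold $k$; here I substitute $n\mapsto n^{d-2}$ (the column length) and $k\mapsto l$. The critical probability of Lemma \ref{azumares} then becomes $l/n^{d-2}+1/\sqrt[4]{n^{d-2}}$, which is exactly the right-hand side of \eqref{prsCPbdfin}. Hence under \eqref{prsCPbdfin} a fixed column contains more than $l$ observed entries with probability at least $1-\exp(-\sqrt{n^{d-2}}/2)$.

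Next, since the entries are sampled independently, the $n^2$ column events are mutually independent, so all columns simultaneously exceed the threshold with probability at least $(1-\exp(-\sqrt{n^{d-2}}/2))^{n^2}$. Finally I would combine the two probabilistic layers: letting $E_1$ be this ``all-columns-heavy'' event and $E_2$ the finite-completability event, Theorem \ref{mainthmprobfin} gives $\Pr(E_2\mid E_1)\ge 1-\epsilon$, and multiplying yields $\Pr(E_2)\ge(1-\epsilon)(1-\exp(-\sqrt{n^{d-2}}/2))^{n^2}$, the claimed bound.

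The main obstacle is the compatibility of the conditioning in this two-layer argument: Theorem \ref{mainthmprobfin} is stated for samples ``distributed uniformly at random'' subject to a per-column count, whereas here the samples are i.i.d.\ Bernoulli. I would argue that conditioning Bernoulli sampling on a column's observation count leaves the positions uniform among subsets of that size, so that conditioned on $E_1$ the pattern still satisfies the uniform-at-random hypothesis of Theorem \ref{mainthmprobfin}; monotonicity, namely that extra observations only help the combinatorial inequality \eqref{proper3}, ensures that exceeding $l$ rather than meeting it exactly causes no loss. Verifying this independence and uniformity bookkeeping, rather than any new tail estimate, is the delicate part.
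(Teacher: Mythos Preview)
Your proposal is correct and follows essentially the same route as the paper: apply Lemma~\ref{azumares} with column length $n^{d-2}$ and threshold $l$ to a single column, raise the resulting probability to the $n^2$ power by independence across columns, and then invoke Theorem~\ref{mainthmprobfin} to get the extra factor $(1-\epsilon)$. The conditioning subtlety you raise in the last paragraph is not something the paper addresses explicitly; it simply multiplies the two probabilities and moves on, so your argument is actually a bit more careful than the original.
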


\begin{proof}
According to Lemma \ref{azumares}, \eqref{prsCPbdfin} results that each column of $\widetilde{\mathbf{\Omega}}_{(I)}$ includes at least $l$ nonzero entries, where  $I = \{1,2,\dots,d-2\}$ and $l$ satisfies \eqref{minl3j} with probability at least $\left( 1-\exp(-\frac{\sqrt{n^{d-2}}}{2}) \right)$. Therefore, with probability at least $\left( 1-\exp(-\frac{\sqrt{n^{d-2}}}{2}) \right)^{n^2}$, all $n^2$ columns of $\widetilde{\mathbf{\Omega}}_{(I)}$ satisfy \eqref{minl3j}. Hence, according to Theorem \ref{mainthmprobfin}, with probability at least $ (1- \epsilon) \left( 1-\exp(-\frac{\sqrt{n^{d-2}}}{2}) \right)^{n^2}$, $\mathcal{U}$ is finitely completable.
\end{proof}

\section{Deterministic and Probabilistic Conditions for Unique Completability}\label{secuni}

In this section, we are interested in characterizing the deterministic and probabilistic conditions on the sampling pattern for unique completability. In previous sections we characterized the corresponding conditions for finite completability in Theorem \ref{mainThm} and Theorem \ref{mainthmprobfin}. However, for matrix and tensor completion problems, finite completability does not necessarily imply unique completability  \cite{ashraphijuo}. In this section, we add some additional mild restrictions on $\Omega$ in the statement of Theorem \ref{mainThm} to ensure unique completability (deterministic) and also increase the number of samples  given in the statement of Theorem \ref{mainthmprobfin} mildly to ensure unique completability with high probability (probabilistic). As the first step of this procedure, we use the following lemma for minimally algebraically dependent polynomials to obtain the variables involved in these polynomials uniquely. Hence, by obtaining all entries of the CP decomposition of the sampled tensor $\mathcal{U}$ we can show the uniqueness of $\mathcal{U}$.

The following lemma is a re-statement of Lemma $9$ in \cite{ashraphijuo}.

\begin{lemma}\label{uniindpoldet}
Suppose that Assumption $1$ holds. Let ${\breve{\Omega}}^{\prime} \in \mathbb{R}^{n_1 \times n_2 \times \dots \times n_{d-1} \times t}$ be an arbitrary subtensor of the constraint tensor ${\breve{\Omega}}$. Assume that polynomials in $\mathcal{P}({\breve{\Omega}}^{\prime})$ are minimally algebraically dependent. Then, all variables (unknown entries) of $\mathbf{A}_1, \dots , \mathbf{A}_{d-1}$ that are involved in $\mathcal{P}({\breve{\Omega}}^{\prime})$ can be determined uniquely.
\end{lemma}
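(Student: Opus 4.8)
The plan is to leverage the counting result already established in Lemma~\ref{minimdeppolylemma}: since $\mathcal{P}(\breve{\Omega}^{\prime})$ consists of $t$ minimally algebraically dependent polynomials, exactly $t-1$ entries of $\mathbf{A}_1,\dots,\mathbf{A}_{d-1}$ appear as variables in them. I would denote these variables by $v_1,\dots,v_{t-1}$ and the $t$ polynomials by $p_1,\dots,p_t$, each set equal to its observed value $p_i=c_i$. The goal is then to show that this system of $t$ equations in $t-1$ unknowns admits a single solution $(v_1,\dots,v_{t-1})$.

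First I would record the matroidal consequences of minimality. Because every proper subset of $\{p_1,\dots,p_t\}$ is algebraically independent, each collection of $t-1$ of the polynomials involves all $t-1$ variables and, by Fact~$2$, the associated $(t-1)\times(t-1)$ Jacobian is generically nonsingular. Hence the $t\times(t-1)$ Jacobian $\mathbf{J}=[\partial p_i/\partial v_j]$ has rank exactly $t-1$, and its one-dimensional left null space is spanned by a vector $\lambda=(\lambda_1,\dots,\lambda_t)$ all of whose entries are nonzero: a vanishing $\lambda_i$ would force the remaining $t-1$ rows to be dependent, contradicting the independence of that proper subset. In particular, each of the $t$ subsystems obtained by discarding one polynomial already pins the variables down to finitely many values.

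To upgrade finite solvability to genuine uniqueness I would exploit the multilinear structure of the CP polynomials: after substituting the expressions for $\mathbf{A}_d$ furnished by Lemma~\ref{assum}, every $p_i$ remains of degree one in the entries of each individual matrix $\mathbf{A}_{i^{\prime}}$. This lets me peel the variables off one block of $\mathbf{A}_{i^{\prime}}$ at a time: I would order the variables so that, at each stage, a suitable subset of the equations becomes linear in the not-yet-determined entries, with a coefficient matrix that is nonsingular for generic coefficients—its nonsingularity being exactly what the nonvanishing of the components of $\lambda$ together with the full rank of the $(t-1)$-subset Jacobians guarantee. Solving these linear subsystems successively determines every $v_j$ without ambiguity, while the single remaining dependent equation is automatically consistent by minimality.

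The main obstacle I anticipate is establishing a valid elimination order and verifying that the intermediate linear systems are genuinely full rank rather than merely finite-to-one. The coupling of variables across the blocks $\mathbf{A}_1,\dots,\mathbf{A}_{d-1}$ means that linearity in any one block is only available once the others have been fixed, so the argument must interleave the combinatorial circuit structure encoded by the nonzero pattern of $\lambda$ with the genericity of the coefficients, precisely in order to rule out the spurious extra solutions that the pure dimension-counting of Lemma~\ref{minimdeppolylemma} would otherwise permit.
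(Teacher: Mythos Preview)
The paper does not prove this lemma; it records that the statement is a re-statement of Lemma~9 in \cite{ashraphijuo} and defers entirely to that reference. There is therefore no in-paper argument to compare against, and your proposal has to be judged on its own merits.

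Your overall architecture is sensible: invoke Lemma~\ref{minimdeppolylemma} to see that exactly $t-1$ variables are involved, observe that every $(t-1)$-subset of the $t$ polynomials is algebraically independent so the $t\times(t-1)$ Jacobian has full column rank with a nowhere-zero left kernel vector, and then appeal to structure to pass from finiteness to uniqueness. The gap is in that last step. Your structural input is the assertion that ``after substituting the expressions for $\mathbf{A}_d$ furnished by Lemma~\ref{assum}, every $p_i$ remains of degree one in the entries of each individual matrix $\mathbf{A}_{i'}$.'' This is not correct. In Lemma~\ref{assum} the row $(\mathbf{a}_d^1(x_d),\dots,\mathbf{a}_d^r(x_d))$ is obtained by inverting an $r\times r$ coefficient matrix whose $(k,l)$ entry is the product $\prod_{i<d}\mathbf{a}_i^l(y_i^{(k)})$; hence each entry of $\mathbf{A}_d$ is a \emph{rational} function of the entries of $\mathbf{A}_1,\dots,\mathbf{A}_{d-1}$, with the determinant of that coefficient matrix in the denominator. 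After substitution and clearing denominators, a polynomial in $\mathcal{P}(\breve{\Omega}^{\prime})$ is no longer of degree one in any single block $\mathbf{A}_{i'}$. Your peeling scheme---freeze all blocks but one and solve a linear system---therefore never becomes available, and the difficulty you flag in your final paragraph is not a residual technicality but the entire content of the lemma.

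Without that linearity, nothing in the outline separates this situation from a generic minimal circuit of $t$ polynomials in $t-1$ variables, and such circuits can have several common zeros (e.g.\ $\{x^2,x^4\}$ in one variable with consistent right-hand sides). The uniqueness asserted here genuinely depends on the specific CP and sampling structure together with genericity of $\mathcal{U}$; a correct argument must isolate and exploit that structure directly rather than rely on a degree bound that does not survive the substitution for $\mathbf{A}_d$.
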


Condition (i) in Theorem \ref{mainthmdetuni} results in $  r(\sum_{i=1}^{d-1} n_i) - r^2 - r(d-2) $ algebraically independent polynomials in terms of the entries of $\mathbf{A}_1, \dots , \mathbf{A}_{d-1}$, i.e., results in finite completability. Hence, adding a single polynomial corresponding to any observed entry to these $  r(\sum_{i=1}^{d-1} n_i) - r^2 - r(d-2) $ algebraically independent polynomials results in a set of algebraically dependent polynomials. Then, according to Lemma \ref{uniindpoldet} a subset of the entries of $\mathbf{A}_1, \dots , \mathbf{A}_{d-1}$ can be determined uniquely and these additional polynomials are captured in the structure of condition (ii) such that all entries of CP decomposition can be determined uniquely.

\begin{theorem}\label{mainthmdetuni}
Suppose that Assumption $1$ holds. Also, assume that there exist disjoint subtensors ${\breve{\Omega}}^{\prime} \in \mathbb{R}^{n_1 \times n_2 \times \cdots \times n_{d-1} \times t}$ and ${\breve{\Omega}}^{{\prime}^i} \in \mathbb{R}^{n_1 \times n_2 \times \cdots \times n_{d-1} \times t_i}$ (for $1 \leq i \leq 2d-2$) of the constraint tensor such that the following conditions hold:

(i) $t = r(\sum_{i=1}^{d-1} n_i) - r^2 - r(d-2)$ and for any $ t^{\prime} \in \{1,\dots,t\}$ and any subtensor ${\breve{\Omega}}^{\prime \prime} \in \mathbb{R}^{n_1 \times n_2 \times \cdots \times n_{d-1} \times t^{\prime}}$ of ${\breve{\Omega}}^{\prime}$,  \eqref{ineqindppoly} holds.

(ii) for $i \in \{1,\dots,d-1\}$ we have $t_i = n_i-1 $ and for $i \in \{d,\dots,2d-2\}$ we have $t_i = n_{i-d+1} - r$. Also, for any $t_i^{\prime} \in \{1,\dots,t_i\}$ and any subtensor ${\breve{\Omega}}^{{\prime \prime}^i} \in \mathbb{R}^{n_1 \times n_2 \times \cdots \times n_{d-1} \times t_i^{\prime}}$ of the tensor ${\breve{\Omega}}^{{\prime}^i}$, the following inequalities hold
\begin{eqnarray}\label{ineqpuclm}
m_i({\breve{\Omega}}^{{\prime \prime}^i}) - 1 \geq t_i^{\prime} , \ \ \ \ \ \ \ \ \ \ \ \ \ \ \ \ \ \ \ \  \ \text{for} \ i \in \{1,\dots,d-1\},
\end{eqnarray}
and 
\begin{eqnarray}\label{ineqpuclmpr}
m_{i-d+1}({\breve{\Omega}}^{{\prime \prime}^i}) - r \geq t_i^{\prime},  \ \ \ \ \  \ \ \ \ \  \ \ \ \ \  \ \text{for} \ i \in \{d,\dots,2d-2\}.
\end{eqnarray}
Then, for almost every $\mathcal{U}$, there exists only a unique tensor that fits in the sampled tensor $\mathcal{U}$, and has CP rank $r$.
\end{theorem}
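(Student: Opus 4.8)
The plan is to combine the finite-completability characterization of Theorem \ref{mainThm} with the unique-determination property of minimally algebraically dependent polynomials in Lemma \ref{uniindpoldet}. First I would note that condition (i) is exactly the hypothesis of Theorem \ref{mainThm}, so the subtensor ${\breve{\Omega}}^{\prime}$ certifies finite completability: after fixing the canonical decomposition of Lemma \ref{patternCP}, the set $\mathcal{P}({\breve{\Omega}}^{\prime})$ is a maximal algebraically independent family whose size $t = r(\sum_{i=1}^{d-1} n_i) - r^2 - r(d-2)$ equals the number of free entries of $\mathbf{A}_1, \dots, \mathbf{A}_{d-1}$ in that canonical form. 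Hence $\mathcal{P}({\breve{\Omega}}^{\prime})$ already confines the free entries to finitely many values, and the remaining task is to upgrade ``finitely many'' to ``exactly one''.

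The engine for this upgrade is the observation that adjoining any polynomial of an observed entry lying outside ${\breve{\Omega}}^{\prime}$ to the maximal independent set $\mathcal{P}({\breve{\Omega}}^{\prime})$ yields an algebraically dependent collection (now $t+1$ polynomials in $t$ variables), and therefore a minimally algebraically dependent subset containing the new polynomial. By Lemma \ref{uniindpoldet}, every entry of $\mathbf{A}_1, \dots, \mathbf{A}_{d-1}$ appearing in such a minimal subset is then determined uniquely. I would use the disjoint subtensors ${\breve{\Omega}}^{{\prime}^i}$ (for $1 \le i \le 2d-2$) of condition (ii) as the reservoir of these extra polynomials, arguing that the inequalities \eqref{ineqpuclm} and \eqref{ineqpuclmpr} are the single-dimension specializations of the independence counting in Lemma \ref{characnumindpolylemma} and Lemma \ref{minimdeppolylemma}: read through the lens of one factor matrix, they guarantee that the polynomials of each ${\breve{\Omega}}^{{\prime}^i}$ generate minimal dependencies determining entries in exactly one dimension.

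The $2d-2$ subtensors are arranged so that, for each dimension $j \in \{1,\dots,d-1\}$, the block carrying the ``$m_j-1$'' inequality with $n_j-1$ columns accounts for the rows anchored against a single fixed row, while the block carrying the ``$m_j-r$'' inequality with $n_j-r$ columns accounts for the rows anchored against the $r \times r$ identity block; the column counts $n_j-1$ and $n_j-r$ match exactly the number of free rows each dimension contributes in the canonical form. Once every free entry of $\mathbf{A}_1, \dots, \mathbf{A}_{d-1}$ is shown to lie in one of these minimal dependencies and is thus uniquely determined, $\mathbf{A}_d$ follows uniquely from Lemma \ref{assum}, and by the one-to-one correspondence of Lemma \ref{patternCP} between canonical decompositions and rank-$r$ tensors the completion $\mathcal{U}$ is unique for almost every $\mathcal{U}$.

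The hard part will be the coverage argument of the middle step: verifying that the inequalities of condition (ii), with the precise counts $n_i-1$ and $n_i-r$, force the minimal dependencies generated by $\bigcup_i {\breve{\Omega}}^{{\prime}^i}$ to collectively involve all $t$ free entries rather than a proper subset, so that unique determination propagates to every factor entry. This reduces to checking that each adjoined polynomial contributes a genuinely new minimal dependency whose newly determined variables are not all pinned down already, which I expect to carry out via the single-dimension independence counting underlying Lemma \ref{characnumindpolylemma}, closely following the proof of the analogous uniqueness theorem built on Lemma $9$ in \cite{ashraphijuo}.
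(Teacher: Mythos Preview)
Your proposal is correct and matches the paper's proof: condition (i) supplies the maximal independent family $\{p_1,\dots,p_t\}$ via Theorem \ref{mainThm}, each polynomial from the extra blocks $\breve{\Omega}^{\prime^i}$ is adjoined to produce a minimally dependent subset whose variables are fixed by Lemma \ref{uniindpoldet}, and condition (ii) then certifies that every unknown row of each $\mathbf{A}_i$ is thereby recovered (after which $\mathbf{A}_d$ follows from Lemma \ref{assum}). The only cosmetic difference is that the paper phrases your ``coverage'' step as an independence count on the resulting degree-$1$ equations in the entries of a single $\mathbf{A}_i$---multiplying \eqref{ineqpuclm} or \eqref{ineqpuclmpr} by $r$ and invoking Fact $2$---and, for each dimension $i$, uses only \emph{one} of the two blocks $\breve{\Omega}^{\prime^i}$, $\breve{\Omega}^{\prime^{i+d-1}}$ according to whether $\mathbf{Q}_i$ is $1\times r$ or $r\times r$ in the canonical form of Lemma \ref{patternCP}; this is exactly the case split you anticipate.
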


\begin{proof}
As we showed in the proof of Theorem \ref{mainThm}, $\mathcal{P}(\breve{\Omega}^{\prime})$ includes $t = r(\sum_{i=1}^{d-1} n_i) - r^2 - r(d-2)$ algebraically independent polynomials which results the finite completability of the sampled tensor $\mathcal{U}$. Let $\{ p_1, \dots , p_t \}$ denote these $t$ algebraically independent polynomials in $\mathcal{P}(\breve{\Omega}^{\prime})$. Now, having $\{ p_1, \dots , p_t \}$ and $\mathcal{P}(\breve{\Omega}^{{\prime}^i})$ for  $1 \leq i \leq 2d-2$, and using Lemma \ref{uniindpoldet} several times, we show the unique completability. Recall that $t$ is the number of total variables among the polynomials, and therefore union of any polynomial $p_0$ and $\{ p_1, \dots , p_t \}$ is a set of algebraically dependent polynomials. Hence, there exists a set of polynomials $\mathcal{P}(\breve{\Omega}^{\prime \prime})$ such that $\mathcal{P}(\breve{\Omega}^{\prime \prime}) \subset \{ p_1, \dots , p_t \}$ and also polynomials in $\mathcal{P}(\breve{\Omega}^{\prime \prime}) \cup p_0$ are minimally algebraically dependent polynomials. Therefore, according to Lemma \ref{uniindpoldet}, all variables involved in the polynomials $\mathcal{P}(\breve{\Omega}^{\prime \prime}) \cup p_0$ can be determined uniquely, and consequently, all variables involved in $p_0$ can be determined uniquely.

We can repeat the above procedure for any polynomial $p_0 \in \mathcal{P} ({\breve{\Omega}}^{{\prime}^i})$ to determine the involved variables uniquely with the help of $\{ p_1, \dots , p_t \}$, $i = 1,\dots,2d-2$. Hence, for any polynomial $p_0 \in \mathcal{P} ({\breve{\Omega}}^{{\prime}^i})$ or $p_0 \in \mathcal{P} ({\breve{\Omega}}^{{\prime}^{i+d-1}})$, we obtain $r$ degree-$1$ polynomials in terms of the entries of $\mathbf{A}_i$ but some of the entries of CP decomposition are elements of the $\mathbf{Q}_i$  matrices (in the statement of Lemma \ref{patternCP}), $i = 1,\dots,d-1$. In order to complete the proof, we need to show that condition (ii) with the above procedure using $\{ p_1, \dots , p_t \}$ results in obtaining all variables uniquely. In particular, we show that repeating the described procedure for any of the polynomials in $\mathcal{P} ({\breve{\Omega}}^{{\prime}^i})$ and $\mathcal{P} ({\breve{\Omega}}^{{\prime}^{i+d-1}})$ result in obtaining all variables of the $i$-th element of CP decomposition uniquely.

According to Lemma \ref{patternCP}, we have the following two scenarios for any $i \in  \{1,\dots,d-1\}$:

(i) $\mathbf{Q}_i \in \mathbb{R}^{r \times r}$: In this case, condition (ii) for ${\breve{\Omega}}^{{\prime \prime}^{i+d-1}}$ and for any $t_{i+d-1}^{\prime} \in \{1,\dots,n_i - r\}$ and any subtensor ${\breve{\Omega}}^{{\prime \prime}^{i+d-1}} \in \mathbb{R}^{n_1 \times n_2 \times \cdots \times n_{d-1} \times t_{i+d-1}^{\prime}}$ of the tensor ${\breve{\Omega}}^{{\prime}^{i+d-1}}$ results 
\begin{eqnarray}
r m_i({\breve{\Omega}}^{{\prime \prime}^{i+d-1}}) - r^2 \geq r t_{i+d-1}^{\prime}.
\end{eqnarray}
Note that $r t_{i+d-1}^{\prime}$ is the number of polynomials from the above mentioned procedure corresponding  to ${\breve{\Omega}}^{{\prime \prime}^{i+d-1}}$ and $r m_i({\breve{\Omega}}^{{\prime \prime}^{i+d-1}})$ denotes the number of involved entries of $\mathbf{A}_i$ in these polynomials, and therefore $r m_i({\breve{\Omega}}^{{\prime \prime}^{i+d-1}}) - r^2$ is the number of involved variables of $\mathbf{A}_i$ in these polynomials. As a result, according to Fact $2$, given $\mathcal{P} ({\breve{\Omega}}^{{\prime}^{i+d-1}})$ and $\{ p_1, \dots , p_t \}$, the mentioned procedure results in $rn_i - r^2$ algebraically independent degree-$1$ polynomials in terms of the unknown entries of $\mathbf{A}_i$. Therefore, $\mathbf{A}_i$ can be determined uniquely.

(ii) $\mathbf{Q}_i \in \mathbb{R}^{1 \times r}$: Similar to scenario (i), condition (ii) for ${\breve{\Omega}}^{{\prime \prime}^{i}}$ and for any $t_{i}^{\prime} \in \{1,\dots,n_i \}$ and any subtensor ${\breve{\Omega}}^{{\prime \prime}^{i}} \in \mathbb{R}^{n_1 \times n_2 \times \cdots \times n_{d-1} \times t_{i}^{\prime}}$ of the tensor ${\breve{\Omega}}^{{\prime}^{i}}$ results
\begin{eqnarray}
r m_i({\breve{\Omega}}^{{\prime \prime}^i}) - r \geq r t_i^{\prime},
\end{eqnarray}
and therefore similar to the previous scenario, $\mathbf{A}_i$ can be determined uniquely.
\end{proof}

In Theorem \ref{mainthmdetuni}, we obtained the deterministic condition on the sampling pattern for unique completability. Note that Condition (i) in Theorem \ref{mainthmdetuni} is the same condition for finite completability. 

In the remainder of this section, we are interested in characterizing the probabilistic conditions on the number of samples to ensure unique completability with high probability. For the sake of simplicity, as in Section \ref{sec3} we consider the sampled tensor $\mathcal{U} \in \mathbb{R}^{\overbrace {n \times \dots \times n}^{d}}$. Recall that for the general values of $n_1$, $\dots$, $n_d$ the same proof will still work, but instead of assumption $ n > \max \{ 200, (r+2)(d-2)\}$, we need another assumption in terms of $n_1$, $\dots$, $n_d$.

\begin{theorem}\label{mainthmprobuni}
Assume that $d>2$, $ n > \max \{ 200, (r+2)(d-2)\}$, $r \leq \frac{n}{6}$ and $I = \{1,2,\dots,d-2\}$.  Assume that each column of $\widetilde{\mathbf{\Omega}}_{(I)}$ includes at least $l$ nonzero entries, where 
\begin{eqnarray}\label{minl4j}
l > \max\left\{27 \ \log \left( \frac{2n}{\epsilon} \right) + 9 \ \log \left( \frac{8r(d-2)}{\epsilon} \right) + 18, 6r \right\}. 
\end{eqnarray}
Then, with probability at least $1-\epsilon$, for almost every $\mathcal{U} \in \mathbb{R}^{\overbrace {n \times \dots \times n}^{d}}$, there exist only one completion of the sampled tensor $\mathcal{U}$ with CP rank $r$.
\end{theorem}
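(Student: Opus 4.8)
The plan is to parallel the proof of Theorem~\ref{mainthmprobfin}, except that I must now guarantee that \emph{both} conditions~(i) and~(ii) of the deterministic unique-completability result, Theorem~\ref{mainthmdetuni}, hold with high probability rather than only the finite-completability condition. As in Theorem~\ref{mainthmprobfin}, I would first merge the $(d-1)$-th and $d$-th dimensions to obtain the $(d-1)$-way tensor $\mathcal{U}^{\prime} \in \mathbb{R}^{n \times \dots \times n \times n^2}$ (with $d-2$ copies of $n$), observing that unique completability of $\mathcal{U}^{\prime}$ implies unique completability of $\mathcal{U}$. It then suffices to exhibit, inside the constraint tensor of $\mathcal{U}^{\prime}$, pairwise-disjoint subtensors realizing Theorem~\ref{mainthmdetuni}: one subtensor $\breve{\Omega}^{\prime}$ for condition~(i), together with $2(d-2)$ subtensors $\breve{\Omega}^{{\prime}^i}$ for condition~(ii), each inheriting its required combinatorial inequality with high probability.

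For condition~(i) I would reuse the construction already built in Theorem~\ref{mainthmprobfin}: applying Lemma~\ref{lemman3} with $r_1^{\prime}=\dots=r_{d-3}^{\prime}=1$ and $r_{d-2}^{\prime}=r$ yields a subtensor with $t = r(\sum_{i=1}^{d-2} n) - r^2 - r(d-3)$ columns for which the $r$-multiplied inequality~\eqref{ineqindppoly} holds on every subtensor. Condition~(ii), by contrast, demands the \emph{un-multiplied} properties~\eqref{ineqpuclm} and~\eqref{ineqpuclmpr}, so I would instead use the single-mode machinery of Lemma~\ref{combpin}, Lemma~\ref{genlemm} and Lemma~\ref{Omega}: with $r^{\prime}=1$ to build the $d-2$ subtensors of $n-1$ columns satisfying $m_i(\breve{\Omega}^{{\prime \prime}^i}) - 1 \geq t_i^{\prime}$, and with $r^{\prime}=r$ to build the $d-2$ subtensors of $n-r$ columns satisfying $m_{i-d+2}(\breve{\Omega}^{{\prime \prime}^i}) - r \geq t_i^{\prime}$. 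With all these disjoint structures in place, the deterministic argument of Theorem~\ref{mainthmdetuni} — which relies on Lemma~\ref{uniindpoldet} to pin down every entry of the CP decomposition — upgrades finite completability to unique completability, so no new algebraic geometry is needed at this step.

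The sampling requirement~\eqref{minl4j} and the strengthened assumption $n > \max\{200,(r+2)(d-2)\}$ are exactly the cost of building all families disjointly and simultaneously. Since the families may not share columns, the total number of columns consumed — about $r(d-2)n$ for condition~(i), plus $n-1$ per $r^{\prime}=1$ subtensor and $n-r$ per $r^{\prime}=r$ subtensor, i.e. roughly $2(d-2)n$ for the $2(d-2)$ condition-(ii) subtensors combined — must fit inside the $n^2$ columns of $\widetilde{\mathbf{\Omega}}_{(I)}$; this is precisely what $(r+2)(d-2) < n$ secures, to be compared with $r(d-2) < n$ in the finite case. For the probability I would split the failure budget $\epsilon$ evenly between the two conditions and enlarge the union-bound count to $2(d-2)$ so as to cover the extra condition-(ii) families. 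Condition~(i), being the only requirement carrying the factor $r$ (its inequality~\eqref{ineqindppoly} is $r$-multiplied), supplies the binding bound: carrying $\epsilon \to \epsilon/2$ and $k \to 2(d-2)$ through the bound~\eqref{minlforset3} of Lemma~\ref{lemman3} turns $\log(n/\epsilon)$ into $\log(2n/\epsilon)$ and $\log(2r(d-2)/\epsilon)$ into $\log(8r(d-2)/\epsilon)$, giving exactly~\eqref{minl4j}; the single-mode bounds from Lemma~\ref{combpin} for condition~(ii) carry no factor $r$ and are dominated by it.

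I expect the main obstacle to be the \emph{simultaneous and disjoint} bookkeeping: all $2(d-2)+1$ families of columns must be selected pairwise disjoint from the single unfolding $\widetilde{\mathbf{\Omega}}_{(I)}$, each must independently meet its own property within the per-event budget, and the aggregate column count must be checked to fit under $n > (r+2)(d-2)$. Keeping the three shift parameters ($r^{\prime}=1$ for two families and $r^{\prime}=r$ for one) consistent across the merged-dimension reindexing, while ensuring the union bound keeps the total failure probability below $\epsilon$, is where all the care lies; the algebraic-geometry content is inherited wholesale from Theorem~\ref{mainthmdetuni}.
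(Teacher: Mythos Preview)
Your proposal is correct and follows essentially the same route as the paper's own proof: merge the last two modes to pass to a $(d-1)$-way tensor, reuse the construction of Theorem~\ref{mainthmprobfin} on $r(n-r)+(d-3)r(n-1)$ columns to secure condition~(i) of Theorem~\ref{mainthmdetuni} with probability at least $1-\epsilon/2$, then spend the remaining $\approx 2n(d-2)$ disjoint columns (available because $n>(r+2)(d-2)$) on the $2(d-2)$ condition-(ii) subtensors. The only cosmetic difference is that the paper obtains the condition-(ii) families by invoking Lemma~\ref{lemman3} with the outer parameter set to $1$ (which collapses the $r$-multiplied inequality to the unmultiplied one), whereas you appeal directly to Lemmas~\ref{combpin}, \ref{genlemm} and~\ref{Omega}; these are the same argument, and your derivation of~\eqref{minl4j} from~\eqref{minlforset3} via $\epsilon\to\epsilon/2$ and $k\to 2(d-2)$ matches the paper exactly.
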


\begin{proof}
Similar to the proof of Theorem \ref{mainthmprobfin}, define the $(d-1)$-way tensor $\mathcal{U}^{\prime} \in \mathbb{R}^{\footnotesize {\overbrace {n \times \dots \times n}^{d-2}} \times {n}^2}$ which is obtained through merging the $(d-1)$-th and $d$-th dimensions of the tensor $\mathcal{U}$ and recall that the finiteness of the number of completions of the tensor $\mathcal{U}^{\prime}$ of rank $r$ ensures the finiteness of the number of completions of the tensor $\mathcal{U}$ with rank $r$. Similarly, for simplicity, assume that $\Omega$ and $\breve{\Omega}$ denote the $(d-1)$-way sampling pattern and constraint tensors corresponding to $\mathcal{U}^{\prime}$, respectively. Note that since $n > (r+2)(d-2)$, we conclude $n^2 > r(n-r) + (d-3)r(n-1)+2(d-2)$, and therefore $\widetilde{{\mathbf{\Omega}}}_{{(I)}}$ includes more than  $r(n-r) + (d-3)r(n-1)+2n(d-2)$ columns. According to the proof of Theorem \ref{mainthmprobfin}, considering $r(n-r) + (d-3)r(n-1)$ arbitrary  columns of  $\widetilde{{\mathbf{\Omega}}}_{{(I)}}$ results in existence of ${\breve{\Omega}}^{\prime}$ such that condition (i) holds with probability at least $1 - \frac{\epsilon}{2}$. Also, there exist at least $2n(d-2)$ columns other than these $r(n-r) + (d-3)r(n-1)$ columns.

Consider $n-1$ arbitrary columns of $\widetilde{{\mathbf{\Omega}}}_{{(I)}}$. By setting $r=1$ in the statement of Lemma \ref{lemman3}, these $n-1$ columns result in ${\breve{\Omega}}^{{\prime \prime}^i}$ with $n-1$ columns such that with probability at least $1 - \frac{\epsilon}{4r(d-2)}$, \eqref{ineqpuclm} holds. Similarly, consider $n-r$ arbitrary columns of $\widetilde{{\mathbf{\Omega}}}_{{(I)}}$. Then, there exists ${\breve{\Omega}}^{{\prime \prime}^{i+d-2}}$ such that with probability at least $1 - \frac{\epsilon}{4r(d-2)}$, \eqref{ineqpuclmpr} holds. Hence, condition (ii) holds (for all $i \in \{1,\dots,2d-4\}$) with probability at least $1 - \frac{\epsilon}{2r}$. Therefore, conditions (i) and (ii) hold with probability at least $1- \left( \frac{\epsilon}{2r} + \frac{\epsilon}{2} \right) \geq 1 - \epsilon$.
\end{proof}

\begin{remark}\label{remsamuniq}
A tensor $\mathcal{U}$ that satisfies the properties in the statement of Theorem \ref{mainthmprobuni} requires 
\begin{eqnarray}\label{rremsamuniq}
n^2 \max\left\{27 \ \log \left( \frac{2n}{\epsilon} \right) + 9 \ \log \left( \frac{8r(d-2)}{\epsilon} \right) + 18, 6r \right\}
\end{eqnarray}
samples to be uniquely completable with probability at least $1-\epsilon$, which is orders-of-magnitude lower than the number of samples required by the unfolding approach given in \eqref{unfoldtotsampgen}. Note that the number of samples given in Theorem $3$ of \cite{charact} results in both finite and unique completability, and therefore the number of samples required by the unfolding approach given in Remark \ref{numbofsampmatapp} is for both finite and unique completability.
\end{remark}

\begin{lemma}\label{prsCPuniq}
Assume that $d>2$, $ n > \max \{ 200, (r+2)(d-2) \}$ and $r \leq \frac{n}{6}$. Moreover, assume that the sampling probability satisfies
\begin{eqnarray}\label{prsCPbduniq}
p > \frac{1}{n^{d-2}} \max\left\{27 \ \log \left( \frac{2n}{\epsilon} \right) + 9 \ \log \left( \frac{8r(d-2)}{\epsilon} \right) + 18, 6r \right\} + \frac{1}{\sqrt[4]{n^{d-2}}}
\end{eqnarray}
Then, with probability at least $ (1- \epsilon) \left( 1-\exp(-\frac{\sqrt{n^{d-2}}}{2}) \right)^{n^2}$, $\mathcal{U}$ is finitely completable. 
\end{lemma}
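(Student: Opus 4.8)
The plan is to follow the same two-stage structure used in the proof of Lemma~\ref{prsCPfin}, replacing the invocation of Theorem~\ref{mainthmprobfin} by its unique-completability analogue Theorem~\ref{mainthmprobuni}. The first stage converts the sampling-probability hypothesis \eqref{prsCPbduniq} into the per-column counting guarantee required by Theorem~\ref{mainthmprobuni}, and the second stage invokes that theorem. (As the label and the placement in the unique-completability section indicate, the intended conclusion is \emph{unique} completability, which is what Theorem~\ref{mainthmprobuni} delivers.)

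First I would fix $I=\{1,2,\dots,d-2\}$ and observe that each column of $\widetilde{\mathbf{\Omega}}_{(I)}$ is a binary vector of length $N_I=n^{d-2}$ whose entries are sampled independently with probability $p$. Applying Lemma~\ref{azumares} with the vector length taken to be $n^{d-2}$ and the threshold taken to be the right-hand side $l$ of \eqref{minl4j}, the hypothesis \eqref{prsCPbduniq} is precisely the inequality $p>\frac{l}{n^{d-2}}+\frac{1}{\sqrt[4]{n^{d-2}}}$. Hence Lemma~\ref{azumares} guarantees that any single column contains more than $l$ observed entries with probability at least $1-\exp(-\frac{\sqrt{n^{d-2}}}{2})$.

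Next I would pass from a single column to all columns: since the $n^2$ columns of $\widetilde{\mathbf{\Omega}}_{(I)}$ are sampled independently, the event that every column simultaneously has at least $l$ nonzero entries, and hence satisfies \eqref{minl4j}, holds with probability at least $\left(1-\exp(-\frac{\sqrt{n^{d-2}}}{2})\right)^{n^2}$. Conditioned on this event, the hypotheses of Theorem~\ref{mainthmprobuni} are met verbatim (the assumptions $d>2$, $n>\max\{200,(r+2)(d-2)\}$, and $r\leq\frac{n}{6}$ are carried over), so for almost every $\mathcal{U}$ there is a unique rank-$r$ completion with probability at least $1-\epsilon$. Combining these two guarantees multiplicatively yields the claimed bound $(1-\epsilon)\left(1-\exp(-\frac{\sqrt{n^{d-2}}}{2})\right)^{n^2}$.

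Since the argument is a bookkeeping composition of Lemma~\ref{azumares} and Theorem~\ref{mainthmprobuni}, I do not expect a genuine mathematical obstacle. The only point demanding care is verifying that the algebraic form of \eqref{prsCPbduniq} matches the hypothesis of Lemma~\ref{azumares} once the ambient vector length is taken to be $n^{d-2}$ rather than $n$, and that the threshold appearing there is exactly the quantity in \eqref{minl4j}, so that Theorem~\ref{mainthmprobuni} applies with no loss in the constants.
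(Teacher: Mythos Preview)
Your proposal is correct and matches the paper's own proof, which simply states that the argument is similar to that of Lemma~\ref{prsCPfin}: apply Lemma~\ref{azumares} columnwise to obtain \eqref{minl4j} with the stated probability, then invoke Theorem~\ref{mainthmprobuni}. You are also right that the intended conclusion is unique (not merely finite) completability, as the label, the section, and the constants matching \eqref{minl4j} all indicate.
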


\begin{proof}
The proof is similar to the proof of Lemma \ref{prsCPfin}.
\end{proof}

\section{Numerical Comparisons}\label{simsec}

In order to show the advantage of our proposed CP approach over the unfolding approach, we compare the lower bound on the total number of samples that is required for finite completability using an example. Since the bound on the number of samples for finiteness and uniqueness are the same for the unfolding approach and they are almost the same for the CP approach, we only consider finiteness bounds for this example. In particular, we consider a $7$-way tensor $\mathcal{U} $ ($d=7$) such that each dimension size is $n=10^3$. We also consider the CP rank $r$ which varies from $1$ to $150$. Figure \ref{fig1} plots the bounds given in \eqref{unfoldtotsampgen} (unfolding approach) and in \eqref{rremsam} (CP approach) for the corresponding rank value, where  $\epsilon = 0.001$.  It is seen that the number of samples required by the proposed CP approach is substantially lower than that is required by the unfolding approach.  

\begin{figure}
	\centering
		{\includegraphics[width=11cm]{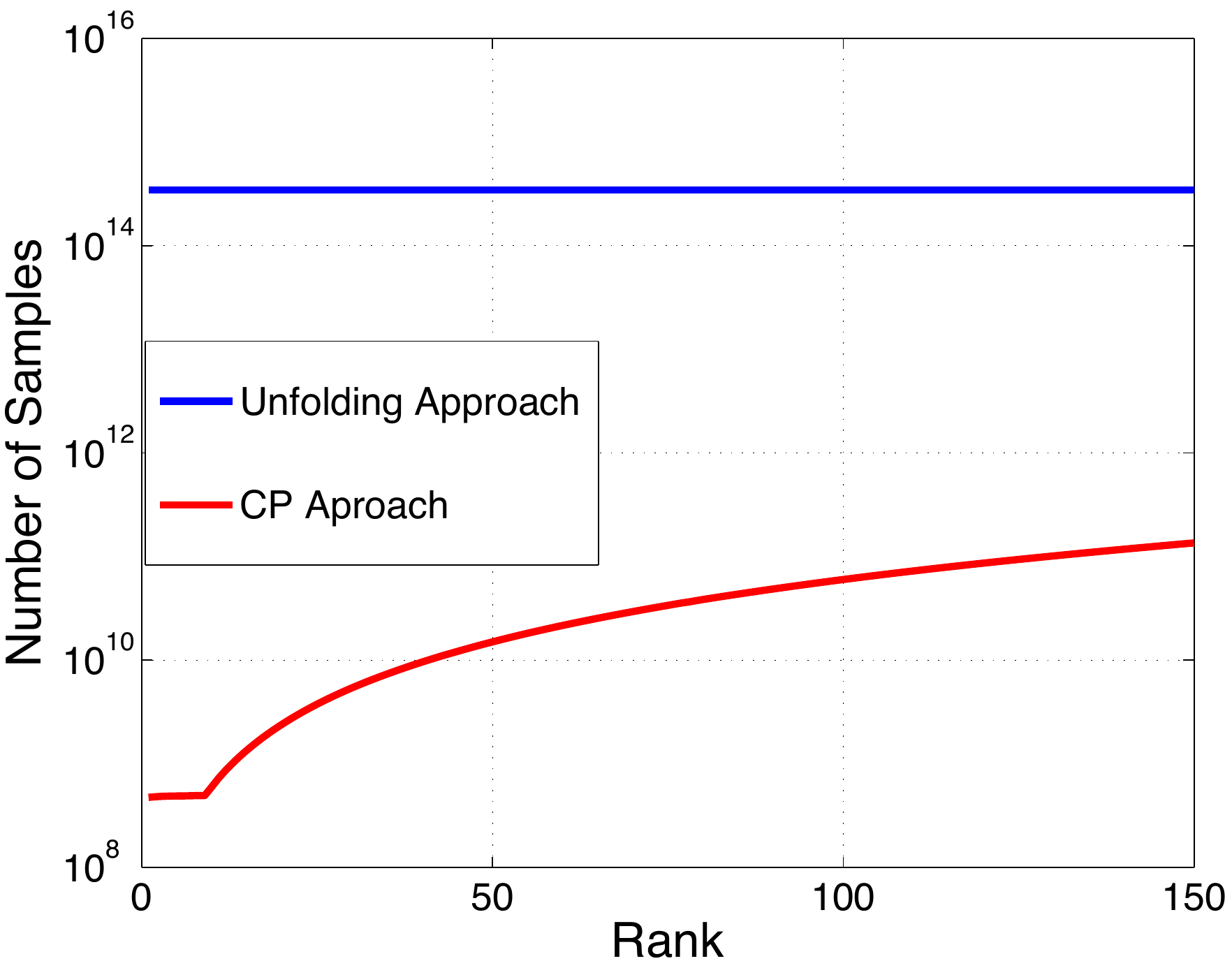}}
	\caption{ Lower bounds on the number of samples for a $7$-way tensor.}
	\label{fig1}\vspace{-4mm}
\end{figure}

\section{Conclusions}\label{concsec}

This paper is concerned with the low CP rank tensor completion problem and aims to derive fundamental conditions on the sampling pattern for finite and unique completability of a sampled tensor given its CP rank. In order to do so, a novel algebraic geometry analysis on the CP manifold is proposed. In particular, each sampled entry can be treated as a polynomials in terms of the entries of the components of the CP decomposition. We have defined a geometric pattern which classifies all CP decompositions such that each class includes only one decomposition of any tensor. We have shown that  finite completability is equivalent to  having a certain number of algebraically independent polynomials among all the defined polynomials based on the sampled entries. Furthermore, using the proposed classification, we can characterize the maximum number of algebraically independent polynomials in terms of a simple function of the sampling pattern. Moreover, we have developed several combinatorial tools that are used to bound the number of samples to ensure finite completability with high probability. Using these developed tools, we have treated three problems in this paper: (i) Characterizing the deterministic necessary and sufficient conditions on the sampling pattern, under which there are only finitely many completions given the CP rank, (ii) Characterizing deterministic sufficient conditions on the sampling pattern, under which there exists exactly one completion given the CP rank, (iii) Deriving lower bounds on the sampling probability or the number of samples such that the obtained deterministic conditions in Problems (i) and (ii) are satisfied with high probability. In addition, it is seen that our proposed CP analysis leads to an orders-of-magnitude  lower number of samples than the unfolding approach that is based on analysis on the Grassmannian manifold.

\bibliographystyle{IEEETran}
\bibliography{bib}

\end{document}